\newcommand{\eqd}{\,{ =_d} \,}
\def\eqref#1{equation~\ref{#1}}
\def\1{\bm{1}}
\def\ra{{\textnormal{a}}}
\def\ru{{\textnormal{u}}}
\def\rx{{\textnormal{x}}}
\def\ry{{\textnormal{y}}}
\def\rz{{\textnormal{z}}}
\def\rva{{\mathbf{a}}}
\def\rvu{{\mathbf{i}}}
\def\rvu{{\mathbf{u}}}
\def\rvx{{\mathbf{x}}}
\def\rvy{{\mathbf{y}}}
\def\rvz{{\mathbf{z}}}
\def\vtheta{{\bm{\theta}}}
\def\va{{\bm{a}}}
\def\vu{{\bm{u}}}
\def\vv{{\bm{v}}}
\def\vw{{\bm{w}}}
\def\vx{{\bm{x}}}
\def\vy{{\bm{y}}}
\def\vz{{\bm{z}}}
\def\mI{{\bm{I}}}
\def\mV{{\bm{V}}}
\DeclareMathAlphabet{\mathsfit}{\encodingdefault}{\sfdefault}{m}{sl}
\SetMathAlphabet{\mathsfit}{bold}{\encodingdefault}{\sfdefault}{bx}{n}
\def\sA{{\mathbb{A}}}
\def\sD{{\mathbb{D}}}
\def\sX{{\mathbb{X}}}
\def\sY{{\mathbb{Y}}}
\def\sZ{{\mathbb{Z}}}
\newcommand{\E}{\mathbb{E}}
\newcolumntype{L}{>{\centering\arraybackslash}m{3cm}}
\newcommand{\printfnsymbol}[1]{%
  \textsuperscript{\@fnsymbol{#1}}%
}
\newcommand{\gancond}{GAN conditioning}
\newcommand{\ugen}{unconditional generator}
\newcommand{\cgen}{conditional generator}
\newcommand{\irep}{input reprogramming}
\newcommand{\gan}{GAN}
\newcommand{\cg}{CGAN}
\newcommand{\rg}{\textsc{InRep+}}
\newcommand{\ftg}{{Fine-tuning}}
\newcommand{\grep}{\textsc{GAN-Reprogram}}
\newcommand{\ag}{{ACGAN}}
\newcommand{\pg}{{ProjGAN}}
\newcommand{\ctg}{{ContraGAN}}
\newcommand{\dm}{MNIST}
\newcommand{\df}{{Fashion-MNIST}}
\newcommand{\dc}{CIFAR10}
\newcommand{\dhc}{CIFAR100}
\newcommand{\dg}{Gaussian mixture}
\newcommand{\dimg}{ImageNet}
\newcommand{\cas}{CAS}
\newcommand{\fid}{FID}
\newcommand{\cfid}{Intra-FID}
\newcommand{\real}{\textrm{real}}
\newcommand{\fake}{\textrm{fake}}
\DeclareMathOperator{\Bernoulli}{Bern}
\DeclareMathOperator{\Unif}{Unif}
\DeclareMathOperator{\sgn}{sgn}
\newtheorem{theorem}{Theorem}
\newtheorem{proposition}{Proposition}
\newtheorem{lemma}{Lemma}
\newtheorem{remark}{Remark}
\newcommand\td[1]{#1}
\newcommand\std[1]{\tiny$\pm$#1}
\let\svthefootnote\thefootnote
\newcommand\freefootnote[1]{%
  \let\thefootnote\relax%
  \footnotetext{#1}%
  \let\thefootnote\svthefootnote%
}
\newcommand*\dif{\mathop{}\!\mathrm{d}}
\title{Improved Input Reprogramming for GAN Conditioning}
\author{ 
Tuan Dinh\footnotemark[2]
,\ \ Daewon Seo\footnotemark[4] 
,\ \  Zhixu Du\footnotemark[5] 
,\ \ Liang Shang\footnotemark[2] 
,\ \ Kangwook Lee\footnotemark[2] \\ \\
\normalsize \footnotemark[2] \ \ University of Wisconsin-Madison, USA\\
\footnotemark[4] \ \ Daegu Gyeongbuk Institute of Science and Technology, South Korea \\
\footnotemark[5] \ \ University of Hong Kong, Hong Kong
}
\begin{document}

\maketitle
\normalem

\freefootnote{Email: Tuan Dinh (tuan.dinh@wisc.edu)}

\begin{abstract}
We study the \gancond{} problem, whose goal is to convert a pretrained unconditional GAN into a conditional GAN using labeled data.
We first identify and analyze three approaches to this problem -- conditional GAN training from scratch, fine-tuning, and input reprogramming.
Our analysis reveals that when the amount of labeled data is small, input reprogramming performs the best.
Motivated by real-world scenarios with scarce labeled data, we focus on the input reprogramming approach and carefully analyze the existing algorithm. 
After identifying a few critical issues of the previous input reprogramming approach, we propose a new algorithm called \rg{}.
Our algorithm \rg{} addresses the existing issues with the novel uses of invertible neural networks and Positive-Unlabeled (PU) learning.
Via extensive experiments, we show that \rg{} outperforms all existing methods, particularly when label information is scarce, noisy, and/or imbalanced. 
For instance, for the task of conditioning a \dc{} GAN with $1\%$ labeled data, \rg{} achieves an average \cfid{} of $76.24$, whereas the second-best method achieves $114.51$. 
\end{abstract}

\section{Introduction}  \label{sec:intro}

Generative Adversarial Networks (\gan{}s)~\citep{goodfellow2014generative} have introduced an effective paradigm for modeling complex high dimensional distributions, such as natural images~\citep{miyato2018cgans,lucic2019high,kang2020contragan,isola2017image,zhu2017unpaired,yu2017seqgan,guo2018long},
videos~\citep{dong2019fw,kim2020jsi}, audios~\citep{yu2017seqgan, kong2020hifi} and texts~\citep{yu2017seqgan,guo2018long,brown2020language,blodgett2020language}.
With recent advancements in the design of well-behaved objectives~\citep{martin2017wasserstein}, regularization techniques~\citep{miyato2018spectral}, and scalable training for large models~\citep{biggan_training}, \gan{}s have achieved impressively realistic data generation.

Conditioning has become an essential research topic of GANs.
While earlier works focus on unconditional GANs (UGANs), which sample data from unconditional data distributions, conditional GANs (CGANs) have recently gained a more significant deal of attention thanks to their ability to generate high-quality samples from class-conditional data distributions~\citep{miyato2018cgans,mirza2014conditional,odena2017conditional}.
CGANs provide a broader range of applications in conditional image generation~\citep{mirza2014conditional}, text-to-image generation~\citep{reed2016generative}, image-to-image translation~\citep{isola2017image}, and text-to-speech synthesis~\citep{binkowski2019high,yamamoto2020parallel}.

In this work, we define a new problem, which we dub \emph{\gancond{}}, whose goal is to learn a CGAN given (a) a pretrained UGAN and (b) labeled data. 
The pretrained UGAN is given in the form of an \ugen{}. 
Also, we assume that the classes of the labeled data are exclusive to each other, \td{that is}, the true class-conditional distributions are separable.
Fig.~\ref{fig:problem} illustrates the problem setting with a two-class \dm{} dataset.
The first input, shown on the top left of the figure, is an \ugen{} $G$, which is trained on a mixed dataset of classes $0$ and $1$. 
The second input, shown on the bottom left of the figure, is a labeled dataset.
In this example, the goal of \gancond{} algorithms is to learn a two-class conditional generator $G'$ from these two inputs, as shown on the right of Fig.~\ref{fig:problem}.

Formally, given an unconditional generator $G$ trained with unlabeled data drawn from $p_{\text{data}}(\vx)$ and a conditional dataset $\sD$ where $\sD = \bigcup_{y\in \sY}\sD(y)$ with  $\sY$ being the label set and $\sD(y)$ being drawn from $p_{\text{data}}(\vx|y)$, \gancond{} algorithms learn a conditional generator $G'$ that generates $y$-conditional samples given label $y \in \mathbb{Y}$.
The formulation of the \gancond{} problem is motivated by several practical scenarios.
The first scenario is \textbf{pipelined training}.
For illustration, we consider a learner who wants to learn a text-to-speech algorithm using CGANs.
While labeled data is being collected (\td{that is}, text-speech pairs), the learner can start making use of a large amount of unlabeled data, which is publicly available, by pretraining a UGAN.
Once labeled data is collected, the learner can then use both the pretrained UGAN and the labeled data to learn a CGAN more efficiently.
This approach enables a pipelined training process to utilize better the long waiting time required for labeling.
Secondly, \gancond{} is helpful in a specific \textbf{online or streaming learning} setting, where the first part of the data stream is unlabeled, and the remaining data stream is labeled. 
Assuming that it is impossible to store the streamed data due to storage constraints, one must learn something in an online fashion when samples are available and then discard them right away. 
Now consider the setting where the final goal is to train a CGAN.  
One may discard the unlabeled part of the data stream and train a CGAN on the labeled part, but this will be strictly suboptimal. 
One plausible approach to this problem is to train a UGAN on the unlabeled part of the stream and then apply \gancond{} with the labeled part of the stream.
The third scenario is in \textbf{transferring knowledge from models pretrained on private data.} 
In many cases, pretrained UGAN models are publicly available while the training data is not, primarily because of privacy. 
An efficient algorithm for \gancond{} can be used to transfer knowledge from such pretrained UGANs when training a CGAN.
 
Existing approaches to \gancond{} can be categorized into three classes.
The first and most straightforward approach is discarding UGAN and applying the \cg{} training algorithms~\citep{miyato2018cgans,mirza2014conditional,odena2017conditional} on the labeled data.
However, training a \cg{} from scratch not only faces performance degradation if the labeled data is scarce or noisy~\citep{kodali2017convergence,odena2019open,shahbazi2022collapse} but also incurs enormous resources in terms of time, computation and memory.
The second approach is fine-tuning the UGAN into a CGAN.
While fine-tuning GANs~\citep{wangl2018transfer,wang2020mineGAN} provides a more efficient solution than the full \cg{} training, this method may suffer from the catastrophic forgetting phenomenon~\citep{li2017learning}.

Recent studies~\citep{engel2018latent,lee2020} propose a new approach to \gancond{}, called \emph{\irep{}}.
They show that this approach can achieve promising performances with remarkable computing savings.
However, their frameworks~\citep{engel2018latent,lee2020} are designed to handle only one-class datasets.
Therefore, to handle multi-class datasets, one must repeatedly apply the algorithm to each class, incurring huge memory when the number of classes is large.
Furthermore, the full CGAN training methods still achieve better conditioning performances than the existing \irep{} methods when the labeled data is sufficiently large.
Also, it remains unclear how the performance of input reprogramming-based approaches compares with that of the other approaches as the quality and amount of labeled data vary.

\begin{figure}
    \centering
	\includegraphics[width=0.9\linewidth]{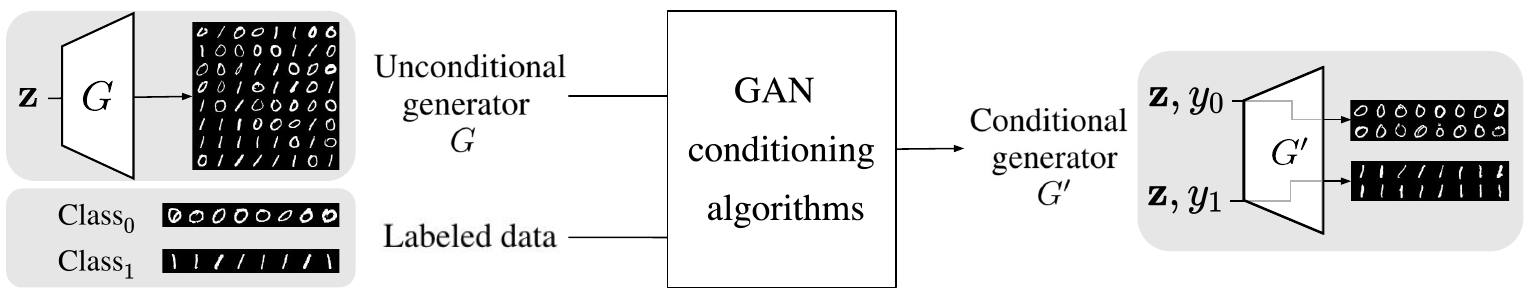}
	\vspace{-0.3em}
	\caption{\textbf{\gancond{} setting (illustrated with two-class \dm{}).}
	\gancond{} algorithms convert an \ugen{} $G$ (top left) into a \cgen{} $G'$ (right) using a labeled dataset (bottom left). 
	On the two-class \dm{} data, \ugen{} $G$ uniformly generates images of $0$s or $1$s from random noise vector $\rvz$. 
	The labeled data contains class-conditional images with labels $0$ and $1$.
	The output of the \gancond{} algorithm is the \cgen{} $G'$ that generates samples of class $y$ from random noise $\rvz$ and the provided class label $y$ in $\{y_0=0, y_1=1\}$.
	}
	\label{fig:problem}
	\vspace{-0.9em}
\end{figure}

In this work, we thoroughly study the possibility of the \irep{} framework for \gancond{}.
We analyze the limitations of the existing algorithms and propose \rg{} as an improved \irep{} framework to fully address the problems identified.
Shown in Fig.~\ref{fig:model} is the design of our framework.
\rg{} learns a conditional network $M$, called modifier, that transforms a random noise $\vz$ into a $y$-conditional noise $\vz_y$ from which the unconditional generator $G$ generates a $y$-conditional sample.
We learn the modifier network via adversarial training.
\rg{} adopts the invertible architecture~\citep{jacobsen2018irevnet,behrmann2019invertible,song2019mintnet} for the modifier to prevent the class-overlapping in the latent space.
We address the large memory issue by sharing the learnable networks between classes.
Also, we make use of Positive-Unlabeled learning (PU-learning)~\citep{guo2020positive} for the discriminator loss to overcome the training instability.

\begin{figure}
	   \hspace*{.13in}
    	\includegraphics[width=1\linewidth]{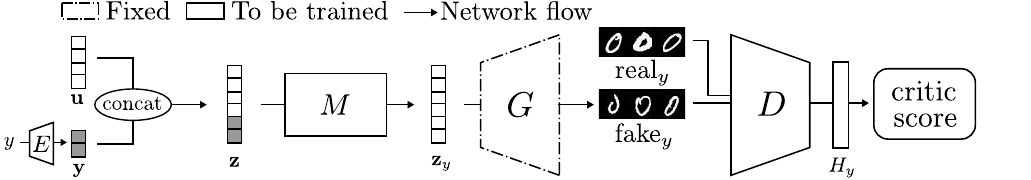}
	\caption{\textbf{Modular design of \rg{} (Improved Input Reprogramming) framework.}
	Given a fixed \ugen{} $G$, \rg{} learns a modifier network $M$ and conditional discriminator networks $D_{y}$, $y \in \sY$.
	Each $D_{y}$ is built on a weight-sharing network $D$ followed by a class-specific linear head $H_{y}$, i.e. $D_{y} = H_{y}\circ D$.
	We embed each label $y$ into a vector $\rvy$ with the embedding module $E$, then concatenate $\rvy$ to a random noise vector $\rvu$ to get vector $\rvz$.
	The modifier $M$ converts $\rvz$ into a $y$-conditional noise $\rvz_y$ so that $G(\rvz_y)$ is a sample of class $y$ (a $0$-image of $\text{fake}_{y}$ in the illustration).
	We train our networks using GAN training.
	} 
	\label{fig:model}
	\vspace{-2mm}
\end{figure}
Via theoretical analysis, we show that \rg{} is optimal with the guaranteed convergence given the optimal UGAN.
Our extensive empirical study shows that \rg{} can efficiently learn high-quality samples that are correctly conditioned, 
achieving state-of-the-art performances regarding various quantitative measures.
In particular, \rg{} significantly outperforms other approaches on various datasets when the amount of labeled data is as small as $1$\% or $10$\% of the amount of unlabeled data used for training UGAN.
We also demonstrate the robustness of \rg{} against label-noisy and class-imbalanced labeled data.

The rest of our paper is organized as follows. 
We first review the existing approaches to \gancond{} in Sec.~\ref{sec:prelim}.
In Sec.~\ref{sec:inrep}, we analyze the existing \irep{} framework for \gancond{} and propose our new algorithm \rg{}. 
In Sec.~\ref{sec:experiment}, we empirically evaluate \rg{} and other \gancond{} methods in various training settings.
Finally, Sec.~\ref{sec:discussion} provides further discussions on the limitation of our proposed approach as well as the applicability of \rg{} to other generative models and prompt tuning.
We conclude our paper in Sec.~\ref{sec:conclusion}.

\paragraph{Notation} We follow the notations and symbols in the standard \gan{} literature~\citep{goodfellow2014generative,goodfellow2016deep}. 
That is, $a, \va, \ra, \rva, \sA$ denote a scalar, vector, random scalar variable, random vector variable, and set; $\vx, y, \vz$ denote a feature vector, label, and input noise vector, respectively.
\section{Preliminaries on \gancond{}} 
\label{sec:prelim}

We review three existing approaches to \gancond{}: (1) discarding UGAN and training a \cg{} from scratch, (2) fine-tuning~\citep{wangl2018transfer,wang2020mineGAN}, and (3) input reprogramming~\citep{engel2018latent,lee2020}.
In particular, 
we analyze their advantages, drawbacks, and requirements for \gancond{}.  
Sec.~\ref{sec:prelim_comparison} summarizes our \td{high-level} comparison of their performances in various training settings of labeled data and computing resources.

\subsection{Learning \cg{} without using UGAN}
One can discard the given UGAN and apply the existing CGAN training algorithms on the labeled dataset to learn a CGAN.
This approach is the most straightforward approach for \gancond{}. 

We can group \cg{} training algorithms by strategies of incorporating label information into the training procedure.
The first strategy is to concatenate or embed labels to inputs~\citep{mirza2014conditional,denton2015deep} or to middle-layer features~\citep{reed2016generative,dumoulin2016adversarially}. 
The second strategy, which generally achieves better performance, is to design the objective function to incorporate conditional information.
For instance, \ag{}~\citep{odena2017conditional} adds a classification loss term to the original discriminator objective via an auxiliary classifier.
Recent algorithms adopt label projection in the discriminator~\citep{miyato2018cgans} by linearly projecting the embedding of the label vector into the feature vector.
Based on the projection-based strategy, \ctg{}~\citep{kang2020contragan} further utilizes the data-data relation between samples to achieve better quality and diversity in data generation.

Directly applying such CGAN algorithms for \gancond{} may achieve state-of-the-art conditioning performances if sufficient labeled data and computing resources are available.
However, this is not always the case in practice, and the approach also has some other drawbacks.
The scarcity of labeled data in practice can severely degrade the performance of CGAN algorithms~\citep{shahbazi2022collapse}.
Though multiple techniques~\citep{lucic2019high,karras2020training} were proposed to overcome this scarcity issue, mainly by using data augmentation to increase labeled data, these techniques are orthogonal to our work, and we consider only the standard algorithms. 
Furthermore, training \cg{} from scratch is notoriously challenging and expensive.
Researchers observed various factors that make CGAN training difficult, such as instability~\citep{kodali2017convergence,odena2019open}, mode collapse~\citep{salimans2016improved,odena2019open}, or mode inventing~\citep{salimans2016improved}.
Also, the training usually entails enormous resources in terms of time, computation, and memory, which are not available for some settings, such as mobile or edge computing.
For instance, training BigGAN~\citep{brock2018large} takes approximately 15 days on a node with $8$x NVIDIA Tesla V100/32GB GPUs~\citep{biggan_training}.

Additionally, some CGAN training algorithms have inductive biases, which may cause failures in learning the true distributions.
The following lemmas investigate several failure scenarios of the two most popular conditioning strategies -- \ag{}~\citep{odena2017conditional} and \pg{}~\citep{miyato2018cgans}, and \td{we defer the formal statements and proofs to Appendix~\ref{app:proof_prelim}}.

\paragraph{Failures of auxiliary classifier conditioning strategy}
The discriminator and generator of \ag{} learn to maximize $\lambda L_C + L_S$ and $\lambda L_C - L_S$, respectively.
\td{Here, $L_S$ models the log-likelihood of samples belonging to the real data, $L_C$ models the log-likelihood of samples belonging to the correct classes, and $\lambda$ is a hyperparameter balancing the two terms.}
In this game, $G$ might be able to maximize $\lambda L_C - L_S$ by simply learning \textit{a biased distribution} (increased $L_C$) at the cost of compromised generation quality (increased $L_S$).
Our following lemmas show that \ag{} indeed suffers from this phenomenon, for both non-separable datasets (Lemma~\ref{ex:non_separable}) and separable datasets (Lemma~\ref{ex:separable}).
We also note that the failure in non-separable datasets has been previously studied in~\citep{shu2017ac}, while the one with separable datasets has not been shown before in the literature.
\begin{lemma} [\ag{} provably fails on a non-separable dataset]
\label{ex:non_separable}
Suppose {that} the data follows a Gaussian mixture distribution with a known location but unknown variance, and the generator is a Gaussian mixture model parameterized by its variance.
\td{Assume the perfect discriminator. 
For some values of $\lambda$, the generator's loss function has strictly suboptimal local minima, thus gradient descent-based training algorithms fail to find the global optimum.}
\end{lemma}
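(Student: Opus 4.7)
I plan to instantiate the claim on the simplest possible example --- a one-dimensional symmetric two-component Gaussian mixture with known means $\pm\mu$ and unknown common variance --- and exhibit a range of $\lambda$ on which the generator's loss $G_\lambda(\sigma) = L_S(\sigma) - \lambda L_C(\sigma)$ has an interior local minimum whose value strictly exceeds the loss's infimum. Concretely, take $p_{\text{real}}(x) = \tfrac12\phi(x;-\mu,\sigma_0^2) + \tfrac12\phi(x;\mu,\sigma_0^2)$ with labels naming the mode, and parametrise the generator by its variance $\sigma>0$ via $p_{\text{fake}}^{(\sigma)}(x) = \tfrac12\phi(x;-\mu,\sigma^2) + \tfrac12\phi(x;\mu,\sigma^2)$. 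Under Bayes-optimal discriminator and classifier, $L_S(\sigma) = -2\log 2 + 2\,\mathrm{JSD}\bigl(p_{\text{real}}\,\|\,p_{\text{fake}}^{(\sigma)}\bigr)$ is smooth on $(0,\infty)$, attains its unique interior minimum $-2\log 2$ at $\sigma=\sigma_0$, and tends to $0$ at both $\sigma\to 0^+$ and $\sigma\to\infty$; the classification log-likelihood $L_C(\sigma)$ is smooth, strictly negative, and strictly decreasing in $\sigma$, with $L_C(0^+)=0$ and $L_C(\infty)=-\log 2$.

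The argument has four steps. \emph{First}, define $\sigma^*(\lambda)$ implicitly by $L_S'(\sigma^*(\lambda)) = \lambda L_C'(\sigma^*(\lambda))$: at $\lambda=0$ this gives $\sigma^*(0)=\sigma_0$, a strict local minimum of $L_S$, and the implicit-function theorem extends the branch smoothly to $[0,\lambda_\dagger)$, where $\lambda_\dagger$ is the first $\lambda$ at which the branch is annihilated by a saddle-node bifurcation; throughout $[0,\lambda_\dagger)$, $\sigma^*(\lambda)$ is a nondegenerate local minimum of $G_\lambda$. \emph{Second}, since $\sigma^*$ is a critical point of $G_\lambda$, the envelope identity gives
\begin{equation*}
\tfrac{d}{d\lambda}\,G_\lambda(\sigma^*(\lambda)) \;=\; -L_C(\sigma^*(\lambda)) \;>\; 0,
\end{equation*}
so the local-min value is strictly increasing in $\lambda$; starting from $-2\log 2$ at $\lambda=0$, it crosses $0$ at some $\lambda_{\text{cross}}\in(0,\lambda_\dagger)$ by continuity. \emph{Third}, because both $L_S(\sigma)$ and $L_C(\sigma)$ vanish as $\sigma\to 0^+$, $G_\lambda(\sigma)\to 0$ along this boundary for every $\lambda$, so $\inf_{\sigma>0}G_\lambda \le 0$. \emph{Fourth}, for any $\lambda\in(\lambda_{\text{cross}},\lambda_\dagger)$ the interior local minimum $\sigma^*(\lambda)$ attains a strictly positive value while the infimum of $G_\lambda$ is at most $0$; hence $\sigma^*(\lambda)$ is strictly suboptimal, and gradient descent initialised in its basin of attraction --- which includes $\sigma_0$, the natural value inherited from a well-trained UGAN --- converges to $\sigma^*(\lambda)$ and never approaches the infimum.

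The main obstacle is showing $\lambda_\dagger>\lambda_{\text{cross}}$, i.e., that the branch of local minima survives long enough for its value to reach $0$. This amounts to controlling the sign of $G_\lambda''(\sigma^*(\lambda)) = L_S''(\sigma^*) - \lambda L_C''(\sigma^*)$, which in turn reduces to the curvatures of the Jensen--Shannon divergence and the Bayes classification loss as functions of $\sigma$ on a neighbourhood of $\sigma_0$. I would either (a) compute these curvatures from the explicit Gaussian-mixture integrals and derive a uniform upper bound on $L_C''/L_S''$ on the relevant interval, yielding a quantitative lower bound on $\lambda_\dagger$; or (b) fix concrete numerical values of $(\mu,\sigma_0)$ and verify the whole picture by direct numerical evaluation of the closed-form integrals, exhibiting an explicit $\lambda$ with a strictly suboptimal local minimum. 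The numerical route is likely what the paper's appendix relies on.
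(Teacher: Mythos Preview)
Your setup differs from the paper's in two key choices. The paper explicitly switches to the \emph{Wasserstein} ACGAN variant and uses 0--1 classification accuracy (not log-likelihood) for $L_C$: with real data $\rx\sim\mathcal{N}(y,1)$, $y\in\{\pm1\}$, and generator $G(\rvz,y)\sim\mathcal{N}(y,v^2)$, this yields the closed forms $L_S=|1-v|$ and $L_C=2-Q(1)-Q(v^{-1})$, so the generator minimizes $L(v,\lambda)=|1-v|+\lambda\bigl(Q(1)+Q(v^{-1})\bigr)$. From this explicit expression a direct derivative argument shows that the \emph{global} minimizer satisfies $v^*(\lambda)\to 0$ as $\lambda\to\infty$ --- the learned variance is driven away from the truth $v=1$. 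So the appendix result is really about bias of the global optimum, not about spurious local minima; the main-text phrasing you are targeting is a loose gloss (though the explicit $L$ does exhibit two local minima over a narrow intermediate range of $\lambda$, as a side effect of the kink at $v=1$). In particular, your guess that ``the numerical route is likely what the paper's appendix relies on'' is off: no numerics are needed.

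Your JSD + cross-entropy route is closer to the original ACGAN loss but gives up every closed form, and the obstacle you flag is real rather than cosmetic. Because $L_C'<0$ and $G_\lambda''(\sigma^*)>0$, the implicit-function theorem forces $\sigma^*(\lambda)$ to decrease toward $0$; if it reaches the boundary before the local-min value crosses $0$, your argument collapses, and nothing you have written rules this out a priori. Your option (a) would require controlling the curvature ratio $L_C''/L_S''$ from integrals that have no closed form; your option (b) is feasible and would furnish a valid example, but is strictly more work than the paper's calculation. The moral is that the paper's choice of Wasserstein distance and 0--1 accuracy is not incidental --- it is exactly what converts the problem into elementary calculus and sidesteps the bifurcation analysis you would otherwise need.
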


\begin{lemma} [{\ag{} provably fails} on a separable dataset]
 \label{ex:separable}
Suppose that the data are vertically uniform in $2$D space: conditioned on $y \in \{\pm 1\}$, $\rvx=(d \cdot \ry, \ru)$ with some $d>0$ and $\ru$ is uniformly distributed in $[-1,1]$. 
An auxiliary classifier is a linear classifier, passing the origin. 
\td{Assume the perfect discriminator.
For some single-parameter generators, gradient descent-based training algorithms converge to strictly suboptimal local minima.}
\end{lemma}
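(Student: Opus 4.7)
The plan is to exhibit an explicit single-parameter generator family $\{G_\theta\}_{\theta \in \sR}$ and a hyperparameter $\lambda > 0$ for which the ACGAN generator objective $\mathcal{L}(\theta) = \lambda L_C(\theta) - L_S(\theta)$ has a local maximum at some $\theta^\star \neq d$. Because gradient descent on the generator's training loss $-\mathcal{L}(\theta)$ is attracted to such a point, the resulting generator produces a biased distribution that does not recover $p_{\mathrm{data}}(\vx \mid y)$, establishing strict suboptimality.

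Concretely, I would fix the family $G_\theta(\ru, y) = (\theta \cdot y, \ru)$ with $\ru \sim \mathrm{Unif}[-1,1]$, so that $\theta = d$ is the \emph{unique} parameter value for which the generated conditional distribution matches $p_{\mathrm{data}}(\vx \mid y)$. The support of each generated conditional is always a pair of parallel vertical segments symmetric about the $x_2$-axis, which makes both terms of $\mathcal{L}$ tractable. For the classifier term, a linear classifier through the origin reduces by symmetry to $C(\vx) = \sigma(a x_1)$ for some $a>0$ once optimized against the (vertically invariant) real data, so
\begin{equation*}
L_C(\theta) \;=\; \tfrac{1}{2}\E_{\ru}\!\left[\log\sigma(a\theta)\right] + \tfrac{1}{2}\E_{\ru}\!\left[\log\sigma(a\theta)\right] \;=\; \log\sigma(a\theta),
\end{equation*}
which is strictly increasing in $\theta$ and has no finite maximizer. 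For the discriminator term, the JSD-optimal $D^\star$ gives $L_S$ a unique minimum at $\theta = d$ (where supports coincide and $D^\star \equiv \tfrac12$) and non-decreasing behavior as $|\theta - d|$ grows, so $-L_S(\theta)$ pulls $\theta$ toward $d$.

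With these two opposing forces, I would tune $\lambda$ so that the first-order condition $\lambda \, L_C'(\theta) \;=\; -\bigl(-L_S\bigr)'(\theta) = L_S'(\theta)$ has a solution $\theta^\star > d$. Because $L_C$ is concave on $(d, \infty)$ while the penalty $L_S$ becomes steeper as $\theta$ moves further from $d$, a second-derivative check will certify that this critical point is a strict local maximum of $\mathcal{L}$. Hence gradient descent from an initialization with $\theta_0 > d$ is trapped at $\theta^\star \neq d$, yielding a CGAN whose class-conditional mass is placed on the wrong vertical lines $x_1 = \pm \theta^\star$.

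The main obstacle is that the supports of $p_{\mathrm{data}}$ and $p_\theta$ are codimension-one in $\sR^2$, so the JSD-optimal discriminator is discontinuous at $\theta = d$ and $L_S$ is not classically differentiable there. I would resolve this either (i) by convolving both distributions with a fixed isotropic Gaussian kernel of small bandwidth (a standard mollification used in practical GAN analyses), which makes $L_S$ a smooth function with the same qualitative shape, or (ii) by replacing the JS formulation with its Wasserstein analogue so that $-L_S$ becomes a Lipschitz distance minimized uniquely at $\theta = d$. In either formulation, the competition between a strictly increasing $L_C$ and a pull toward $d$ yields a strict local maximum of $\mathcal{L}$ at some $\theta^\star \neq d$ for a suitable range of $\lambda$, completing the argument.
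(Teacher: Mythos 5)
Your construction takes a genuinely different route from the paper's. The paper keeps the generated supports on the lines $x_1=\pm d$ (or flips them to the wrong side for $\ell<0$) and lets the single parameter control the \emph{vertical extent} of the uniform segments; it uses a hard angle-parameterized classifier with 0--1 error, works in the Wasserstein formulation throughout, and exhibits a bad \emph{local} minimum of the joint $(\ell,\alpha)$ alternating dynamics (at $\ell^*<0$, i.e.\ mass on the wrong side of the boundary) that coexists with the correct global minimum at $\ell=1$. You instead translate the supports horizontally via $G_\theta(\ru,y)=(\theta y,\ru)$ and use a logistic classifier, so the failure you exhibit is of the Lemma~\ref{ex:non_separable} type: the \emph{global} optimum of $\lambda L_C-L_S$ is itself biased ($\theta^*>d$), because on $(d,\infty)$ your $\mathcal{L}'(\theta)=\lambda a\,\sigma(-a\theta)-1$ vanishes at a unique $\theta^*$ and $\mathcal{L}$ is increasing to its left and decreasing to its right. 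That is a legitimate and arguably cleaner demonstration that \ag{} mislearns a separable dataset, though note it does not literally produce a ``strictly suboptimal local minimum'' distinct from the loss's global optimum, and your remark that ``$L_S$ becomes steeper'' away from $d$ is false in the Wasserstein formulation (it is exactly linear there) --- the local-max certificate comes entirely from the concavity of $\log\sigma(a\theta)$.

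The genuine gap is your treatment of the auxiliary classifier. You fix $C(\vx)=\sigma(ax_1)$ ``for some $a>0$ once optimized against the real data,'' but on this separable dataset a logistic linear classifier trained by maximum likelihood has no finite optimizer: $a\to\infty$, and as $a$ grows the generator-side gradient $\lambda a\,\sigma(-a\theta)\sim\lambda a e^{-a\theta}\to 0$ for any fixed $\theta>d$, so the biased critical point $\theta^*(a)\to d$ and the failure evaporates in the limit. In \ag{} the classifier is part of the trained discriminator, so you must either (i) constrain its scale a priori (and then justify that this constrained classifier is what the lemma means by ``a linear classifier passing the origin''), or (ii) analyze the joint alternating dynamics of $(\theta,a)$ and show the biased point survives --- which is precisely the delicate part the paper handles by parameterizing the hard classifier by its angle alone and tracking the coupled updates. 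Relatedly, your handling of the degenerate supports is only sketched: the Wasserstein option works and is what the paper does, but the mollified-JSD option introduces its own pathology (the smoothed $L_S'$ decays like a Gaussian in $\theta$ while $L_C'$ decays only exponentially, so for large $\theta$ the objective can become increasing again and the generator escapes to infinity rather than converging). Committing to the Wasserstein formulation and pinning down the classifier's dynamics would close the argument.
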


\paragraph{Failures of projection-based conditioning strategy}
The objective function of a projection-based discriminator~\citep{miyato2018cgans} measures the orthogonality between the data feature vector and its class-embedding vector in the form of an inner product.
Thus, even when the generator learns the correct conditional distributions, it may continue evolving to further orthogonalize the inner product term if the class embedding matrix is not well-chosen. 
This behavior of the generator can result in learning \td{an} inexact conditional distribution, illustrated in the following lemma.
\begin{lemma} [ProjGAN provably fails on \td{a} two-class dataset]
Consider a simple projection-based CGAN with two equiprobable classes.
\td{With some particular parameterizations of the discriminator}, there exist bad class-embedding vectors that encourage the generator to deviate from the exact conditional distributions.
\end{lemma}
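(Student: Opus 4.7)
The plan is to exploit the specific structure of the projection-based discriminator, which takes the form $D(\vx,y)=f(\vx)+\vy^{\top}V\phi(\vx)$ for some feature extractor $\phi$ and embedding matrix $V$, and to show that the inner-product term can exert a non-zero pull on the generator even when the generator has already matched the true class-conditional distributions. The key observation is that, unlike the auxiliary-classifier case, the pathology here is not an optimization trap inside a well-specified model, but a mis-specification induced by a poorly chosen class embedding: the optimal discriminator within the projection family is not forced to coincide with the true log-density ratio, so the generator's best response drifts away from $p_{\text{data}}(\vx|y)$.

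Concretely, I would fix two equiprobable classes with simple, analytically tractable conditional densities (for instance, two one-dimensional Gaussians with distinct means, or two disjoint uniform distributions on intervals), and parameterize the generator in a one-parameter family that contains the true conditional distributions at some parameter value $\theta^{\star}$. I would then restrict the projection-based discriminator to a tractable subfamily — e.g.\ take $\phi(\vx)=\vx$, take $f$ to be an arbitrary scalar function, and take $V$ to be a fixed scalar so that the label-dependent term becomes $c_y\cdot\vx$ for some $c_y\in\mathbb{R}$ determined by the embedding. For this restricted family I would solve the inner maximization over $f$ in closed form (standard: at optimum, $f(\vx)+c_y\vx = \log\frac{p_{\text{data}}(\vx,y)}{p_g(\vx,y)}$ up to an additive constant, whenever such an $f$ exists).

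The crucial step is to exhibit embedding vectors (equivalently, scalars $c_0,c_1$) such that no $f$ in the family can realize this optimum simultaneously for both labels, because the label-dependent residual $c_y\vx$ is a rigid function of $\vx$ rather than of $\vx$ and $y$ jointly. I would then write the generator's objective $\mathbb{E}_{(\vx,y)\sim p_g}[D(\vx,y)]$ plugged with the best-in-class discriminator, take its gradient with respect to the generator parameter $\theta$ at $\theta^{\star}$, and show that this gradient is non-zero for a judicious choice of $c_0,c_1$. That non-zero gradient certifies that any gradient-based training drives $\theta$ away from the true conditional distributions, which yields the claim.

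The main obstacle I anticipate is the inner optimization over the discriminator. Strictly speaking, the projection-based parameterization assumed in \citep{miyato2018cgans} is expressive enough that, in the fully non-parametric limit, the label-dependent term can be absorbed into $f$; so the failure mode only appears under a genuinely constrained parameterization. I would therefore need to state precisely which sub-family of discriminators is considered (the phrase ``with some particular parameterizations of the discriminator'' in the lemma statement signals exactly this restriction), and verify that the exhibited ``bad'' embedding vectors cannot be neutralized by any admissible $f$. Once that is nailed down, producing the non-zero gradient at $\theta^{\star}$ becomes a direct calculation with the chosen one-dimensional example.
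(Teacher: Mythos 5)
Your route is genuinely different from the paper's, and the comparison is instructive. The paper also sets $\phi(\vx)=\vx$, but it additionally kills the unconditional head ($\psi\equiv 0$, i.e.\ your $f$ is removed entirely) and works with the hinge loss restricted to the linear regime, so that every gradient becomes a first-moment computation: the discriminator's gradient in $\vv_y$ is the difference of conditional means, and the generator's gradient is the \emph{constant} $-(\vv_0+\vv_1)$, independent of the data. The argument is then purely dynamical: start the alternating updates at a generator that already matches $p_{\text{data}}(\vx|y)$; the embedding gradients vanish there (means match), yet the generator is pushed by $\alpha(\vv_0+\vv_1)$ at every step, and tracking the coupled updates to second order in the step size shows the drift accumulates unless $\vv_0+\vv_1=0$. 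Your proposal instead keeps $f$ fully non-parametric, uses the log-loss, and locates the pathology in the mis-specification of the discriminator family — the rigid offset $c_y\vx$ cannot be absorbed by a label-independent $f$ when $c_0\neq c_1$. Both arguments ultimately certify the same thing (a non-zero generator gradient at the true conditional distributions), but the paper buys tractability by choosing the loss and discriminator restriction so that no inner optimization is ever needed, whereas your version makes the conceptual source of the failure (bad embeddings are not neutralizable) more explicit.

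The one step in your plan that will not go through as written is the claim that the inner maximization over $f$ can be solved in closed form. At $p_g=p_{\text{data}}$ the pointwise problem is $\max_{f}\sum_{y}p(\vx|y)\log\bigl[\sigma(f+c_y\vx)\bigl(1-\sigma(f+c_y\vx)\bigr)\bigr]$, whose first-order condition $\sum_{y}p(\vx|y)\bigl(1-2\sigma(f^*+c_y\vx)\bigr)=0$ is transcendental whenever $c_0\neq c_1$; there is no closed form for $f^*$, and your subsequent gradient computation at $\theta^\star$ needs $f^*$ (or at least its sign and asymmetry) explicitly. Verifying that no $f$ attains the unconstrained optimum establishes mis-specification but does not by itself produce the non-zero generator gradient. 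To close the gap you would either need a perturbative or sign-based argument about $f^*$, or — more simply — adopt the paper's device of a linear (hinge or Wasserstein-type) loss, under which the best-response analysis degenerates to moment matching and the generator gradient is computable exactly.
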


\subsection{Fine-tuning UGAN into CGAN}
The fine-tuning approach aims to adjust the provided \ugen{} into a \cgen{} using the labeled data.
Previous works~\citep{wangl2018transfer,wang2020mineGAN} studied fine-tuning GANs for knowledge transfer in GANs.
We can further adapt these approaches for \gancond{}. 
Transfer\gan{}~\citep{wangl2018transfer} introduces a new framework for transferring GANs between different datasets.
They propose to fine-tune both the pretrained generator and discriminator networks.
Mine\gan{}~\citep{wang2020mineGAN} later suggests fixing the generator and training an extra network (called the miner) to optimize the latent noises for the target dataset before fine-tuning all networks.
We note that in the \gancond{} setting, the pretrained discriminator is \textit{not} available. 
Furthermore, to be applicable for \gancond{}, Transfer\gan{} and Mine\gan{} frameworks may require further modifications of unconditional generators' architecture to incorporate the conditional information.
Unlike fine-tuning approaches, our method freezes the $G$ and does not modify its architecture or use the pretrained unconditional discriminator.

Compared to the full CGAN training, fine-tuning approaches usually require much less training time and amount of labeled data while still achieving competitive performance. %
However, it is not clear how one will modify the architecture of the \ugen{} for a conditional one.
Also, fine-tuning techniques are known to suffer from catastrophic forgetting~\citep{li2017learning}, which may severely affect the quality of generated samples from well-trained unconditional generators.

\subsection{Input reprogramming}
The overarching idea of \irep{} is to keep the well-trained UGAN intact and add a controller module to the input end for controlling the generation.
In particular,
this approach aims to repurpose the \ugen{} into a \cgen{} by only learning the class-conditional latent vectors.

Input reprogramming can be considered as neural reprogramming~\citep{elsayed2019} applied to generative models.
Neural reprogramming (or adversarial reprogramming)~\citep{elsayed2019} has been proposed to repurpose pretrained classifiers for alternative classification tasks by just preprocessing their inputs and outputs.
Recent works extend neural reprogramming to different settings and applications, such as the discrete input space for text classification tasks~\citep{neekhara2018adversarial} and the black-box setting for transfer learning~\citep{tsai2020transfer}.
More interestingly, it has been shown that 
large-scale pretrained transformer models
can also be efficiently repurposed for various downstream tasks, with most of the pretrained parameters being frozen~\citep{lu2021pretrained}.
On the theory side, the recent work~\citep{yang2021voice2series} shows that the risk of a reprogrammed classifier can be upper bounded by the sum of the source model's population risk and the alignment loss between the source and the target tasks.

Input reprogramming has been previously studied under various generative models and learning settings.
Early works attempt to reprogram the latent space of \gan{} with pretrained classifiers~\citep{nguyen2017plug} or the latent space of variational autoencoder (VAE)~\citep{engel2018latent} via \cg{} training.
Flow-based models can also be repurposed for the target attributes via posterior matching~\citep{gambardella2019transflow}.
The recent work~\citep{nie2021controllable} proposes to formulate the conditional distribution as an energy-based model and train a classifier on the latent space to control conditional samples of unconditional generators.
However, their training requires manually labeling latent samples and a new sampling method based on ordinary differential equation solvers.
We also notice that our work has a different assumption on the input as the only source of supervision in \gancond{} comes from the labeled data. 
Recently, \grep{}~\citep{lee2020} via \cg{} training obtains high-quality conditional samples with significant reductions of computing and label complexity.
\grep{} is our closest \irep{} algorithm for \gancond{}.

For the advantages, \irep{} methods obtain the competitive performances in the conditional generation on various image datasets, even with limited labeled data.
\td{Input reprogramming} significantly saves computing and memory resources as the approach requires only an extra lightweight network for each inference.
However, the existing algorithm of \irep{} for \gancond{}, \grep{}~\citep{lee2020}, still underperforms the latest \cg{} algorithms on complex datasets. 
In addition, it focuses solely on one condition per time, leading to the huge memory given a large number of classes, which will be detailed in the next section.

Our proposed method (\rg{}) makes \td{significant} improvements on \grep{}.
We improve the conditioning performance with invertible networks and a new loss based on the Positive Unlabeled learning approach.
We reduce the memory footprints by sharing weights between class-conditional networks.
Our study will exhibit that input reprogramming can generate sharper distribution even with small amounts of supervision, leading to consistent performance improvements.
We also explore the robustness of input reprogramming in the setting of imbalanced supervision and noisy supervision, which have not been discussed in the literature yet.

\subsection{Summary of comparison}
\label{sec:prelim_comparison}
\begin{table*}
    \vspace{-1em}
    \caption{\textbf{\td{A high-level} comparison of \gancond{} approaches {under various} settings (\checkmark=~poor, \checkmark\checkmark=~okay, \checkmark\checkmark\checkmark=~good)}.
    When labeled data and computing resources are both \td{sufficient}, we can discard the \ugen{} and directly apply \cg{} algorithms on the labeled data to achieve the best performances.
    When the amount of labeled data is large, but resources are restricted, \td{the fine-tuning approach} becomes a good candidate for \gancond{}.
    However, when both labeled data and resources are restricted, \irep{} becomes the best solution for \gancond{}.
    Our method \rg{} improves both the conditioning performance and the memory scalability of the existing \irep{} algorithm (\grep{}).
    }

    \centering
    \label{tab:gancond}
    \begin{tabular}{ccccc}
        \hline
        \toprule[1pt]
        \multirow{2}{*}{\td{Setting}}  & \multirow{2}{*}{Learning CGAN w/o UGAN} & \multirow{2}{*}{Fine-tuning} & \multicolumn{2}{c}{Input reprogramming}  \\
        & & & \textit{\grep{}}& \textit{\rg{}} (ours)\\
        \hline 
        \thead{Large labeled data} & \checkmark\checkmark\checkmark & \checkmark\checkmark & \checkmark & \checkmark\textbf{\checkmark} \\
        \thead{Limited labeled data} & \checkmark & \checkmark\checkmark & \checkmark\checkmark & \checkmark\checkmark\textbf{\checkmark} \\
        \thead{Limited computation} & \checkmark &\checkmark\checkmark & \checkmark\checkmark\checkmark & \checkmark\checkmark\checkmark \\
        \hline
        \toprule[1pt]
    \end{tabular}
\end{table*}
    
\td{
Table~\ref{tab:gancond} summarizes our high-level comparison of the analyzed approaches.
For \irep{}, we distinguish between the existing approach \grep{} and our proposed \rg{}.
First, when both the labeled data and computation resources are sufficient, using \cg{} algorithms without UGAN probably \td{achieves} the best conditioning performance, followed by the fine-tuning and \rg{} methods.
However, when the labeled data is scarce (e.g., less than $10$\% of the unlabeled data), directly training \cg{} from scratch may suffer from the degraded performance.
Methods that reuse UGAN (fine-tuning, \grep{}, \rg{}) are \td{better alternatives} in this scenario, and our experimental results show that \rg{} achieves the best performance among these methods.
Furthermore, if the computation resources are more restricted, \irep{} approaches are the best candidates as they gain significant computing savings while achieving good performances.
}
\section{Improved Input Reprogramming for \gancond{}}
\label{sec:inrep}

We first justify the use of the general \irep{} method for \gancond{} and analyze issues in the design of the current \irep{} framework (Sec.~\ref{sec:inrep_revisit}).
To address these issues, we propose a novel \rg{} framework in Sec.~\ref{sec:inrep_model}.
We theoretically show the optimality of \rg{} in learning conditional distributions in Sec.~\ref{sec:inrep_analysis}.
Algorithm~\ref{alg:repgan} presents the full \rg{} training algorithm.

\subsection{Revisiting \irep{} for \gancond{}}
\label{sec:inrep_revisit}
The idea of \irep{} is to repurpose the pretrained \ugen{} $G$ into a \cgen{} simply by preprocessing its input without making \emph{any} change to $G$.
\td{Intuitively, freezing the \ugen{} does not necessarily limit the \irep{}'s capacity of conditioning GANs.
Consider a simple setting with the perfect generator $G: G(\rvz) \eqd \rvx$, where $\rvz$ and $\rvx$ are discrete random variables. 
For any discrete random variable $\ry$, possibly dependent on $\rvx$, we can construct a random variable $\rvz_{y}$ such that $G(\rvz_y) \eqd \rvx|\ry=y$ by redistributing the probability mass of $\rvz$ into the values corresponding to the target $\rvx|\rvy=y$.
We provide Proposition~\ref{prop:1} in \td{Appendix~\ref{app:proof_31}} as a formal statement to illustrate our intuition in the general setting with continuous input spaces.
}

The \irep{} approach aims at learning conditional noise vectors $\{\rvz_y\}$ such that $G(\rvz_y) \eqd \rvx|y$ for all $y$. 
We can view this problem as an implicit generative modeling problem. 
For each $y$, we learn a function $M_y(\cdot) = M(\cdot, \rvy)$ such that $\rvz_y \eqd M_y(\rvu)$ and $G(\rvz_y) \eqd \rvx|y$ for standard random noise $\rvu$ via \gan{} training with a discriminator $D_y$.

The existing algorithm for \irep{}, \grep{}~\citep{lee2020}, proposes to separately learn $M_y, D_y$ for each value of $y$, where $M_y$ is a neural network, called modifier network.
Fig.~\ref{fig:differences-comparison} (left) visualizes the design of the \grep{} framework for \td{an $n$}-class condition.
\grep{} requires \td{$n$} pairs of modifier and discriminator networks, each per class condition. 
This design has several issues in learning to condition GANs.

\paragraph{Issues of \grep{} in \gancond{}}
\textit{First,} there might be overlaps between the latent regions learned by modifier $M_y$ and $M_{y'}$ ($y'\neq y$) due to the imperfection in learning. 
These overlaps might result in incorrect class-conditional sampling.
\textit{The second issue} is the training dynamic of conditional discriminators. 
Assume the perfect pretrained \ugen{}, for instance, $G(\rvz) \eqd \rvx$ with random Gaussian $\rvz$ and two equiprobable classes for simplicity.
Then, we can view $G(\rvz)$ as a mixture of $\rvx|\ry=0$ and $\rvx|\ry=1$.
At the beginning of $(M_y, D_y)$ training, a fraction of the generated data is distributed as $\rvx|\ry=y$, which we desired, but labeled as fake and rejected by $D_y$.
As $M_y(\rvu)$ approaches $\rvz_y$, a larger fraction of desired samples will be wrongly labeled as fake. 
This phenomenon can cause difficulty in reprogramming, especially under the regime of low supervision.
\textit{The last issue} is the vast memory when the number of classes is large because \grep{} trains separately each pair of $(M_y, D_y)$ per condition. %

\subsection{\rg{}: An improved input reprogramming algorithm}
\label{sec:inrep_model}
We propose \rg{} to address identified issues of \grep{}.
Shown in Fig.~\ref{fig:model} is the design of our \rg{} framework.
We adopt the weight-sharing design for modifier and discriminator networks to improve memory scalability.
Also, we design our modifier network to be invertible to prevent the overlapping issue in the latent space.
Lastly, we derive a new loss for more stable training based on the recent Positive-Unlabeled (PU) learning framework~\citep{guo2020positive}.
We highlight differences of \rg{} over previous \grep{} in Fig.~\ref{fig:differences-comparison}.
\begin{figure*}
    \vspace{-2mm}
    \centering
 	\includegraphics[width=0.8\textwidth]{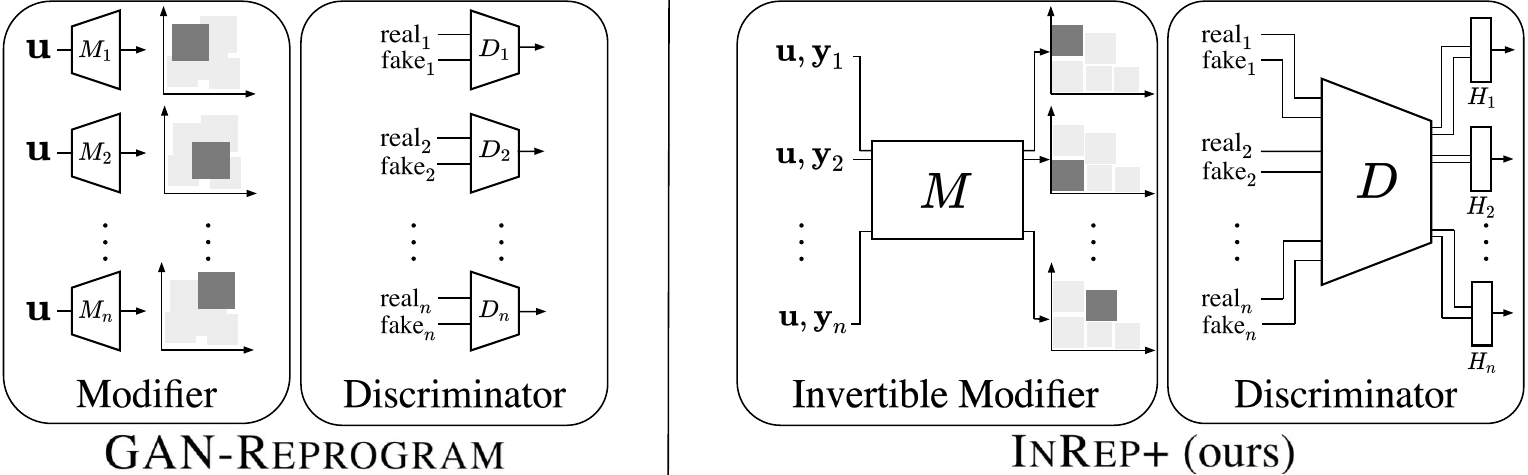}
    \caption{
    \textbf{\td{Highlights of design differences} between \grep{}~\citep{lee2020} and \rg{} (ours).}
    \grep{} (left) learns a pair of modifier $M_i$ and discriminator $D_i$ per each of \td{$n$} conditions.
    \rg{} (right) makes three improvements over \grep{}: i) using a single conditional modifier $M$ and sharing weights between $\{D_i\}_{1}^n$ to improve memory scalability when $n$ is large, ii) adopting the invertible architecture for modifier network $M$ to prevent  class-conditional latent regions from overlapping, iii) replacing the standard GAN loss with a new PU-based loss to overcome the false rejection issue.}
    \label{fig:differences-comparison}
    \vspace{-1mm}
\end{figure*}

\paragraph{Weight-sharing architectures}
For the modifier, we use only a single conditional modifier network $M$ for all classes. 
For the discriminator, we share a base network $D$ for all classes and design  multiple linear heads $H_y$ on top of $D$, each head for a conditional class, \td{that is}, $D_y = H_y \circ D$.
This design helps maintain the separation principle of \irep{} while significantly reducing the number of trainable parameters and weights needed to store.

\paragraph{Invertible modifier network} 
The modifier network $M$ learns a mapping $\mathbb{R}^{d_\rvz} \to \mathbb{R}^{d_{\rvz_y}}$, where $\rvz$ is the input noise and $\rvz_{y}$ is the $y$-conditional noise.
The input noise $\rvz$ is an aggregation of a random noise vector $\rvu$ and the label $y$ in the form of label embedding~\citep{miyato2018cgans}.
We use a learnable embedding module $E$ to convert each discrete label into a continuous vector with dimension $d_\rvy$.
The use of label embedding instead of one-hot encoding has been shown to help GANs learn better~\citep{miyato2018cgans}.
Here, $d_{\rvz} = d_\rvu + d_\rvy$ with $d_{\rvz}, d_\rvu$, and $d_\rvy$ being dimensions of $\rvz$, $\rvu$, and $\rvy$ respectively.
In this work, we use the concatenation function as the aggregation function.
To prevent outputs of modifiers on different classes from overlapping, 
we design the modifier to be invertible using the architecture of invertible neural networks~\citep{jacobsen2018irevnet,behrmann2019invertible,song2019mintnet}.
Intuitively, the invertible modifier maps different random noises to different latent vectors, guaranteeing uniqueness.
Notably, the use of invertibility leads to a dimension gap between the random noise $\rvu$ and the conditional noise $\rvz_y$ as $d_\rvu < d_{\rvz} = d_{\rvz_y}$. 
However, the small dimension gap does not significantly affect the performance of the modifier network on capturing the true distribution of the conditional latent because the underlying manifold of complex data usually has a much lower dimension in most practical cases~\citep{fefferman2016testing}.
Modifier $M$ needs to be sufficiently expressive to learn the target noise partitions while maintaining the computation efficiency.

\paragraph{Training conditional discriminator with Positive-Unlabeled learning} 
\td{Assume that $G$} generates $y$-class data with probability $p_{\text{class}}(y)$.
At the beginning of discriminator training, $p_{\text{class}}(y)$ fraction of generated samples are high-quality and in-class but labeled as fake by the discriminator.
This results in the training instability of the existing \gan{} reprogramming approach~\citep{lee2020}.
To deal with this issue, we view the discriminator training through Positive-Unlabeled (PU) learning lens.
PU learning has been studied in binary classification, where only a subset of the dataset is labeled with one particular class, and the rest is unlabeled \citep{de1999positive,scott2009novelty,du2014analysis}.
Considering the generated data as unlabeled, we can cast the conditional discriminator learning as a PU learning problem: classifying positive (in-class and high-quality) data from unlabeled (generated) data. 

Now, we consider training $(M_y,D_y)$ for a fixed $y$.
As discussed, the distribution of generated data $G(M_y(\rvu))$ can be viewed as a mixture of $p_{\text{data}}(\vx|y)$ and the residual distribution, say $p_\text{gf}(\vx)$, where `g' stands for generated and `f' means that they should be labeled as fake because they are out-class.
That is, letting $\pi_y$ be the fraction of $y$-class data among generated data, we have $p_{G(M_y(\rvu))}(\vx) = \pi_y \cdot p_{\text{data}}(\vx|y) + (1-\pi_y) \cdot p_\text{gf}(\vx)$.
Given this decomposition, the ideal discriminator objective function is:
\begin{small}
\begin{align}
V_y^{PU} &= \mathbb{E}_{\mathbf{x} \sim p_{\text{data}}(\vx|y)}\left[\log D_y(\rvx)\right] + \pi_y \mathbb{E}_{\mathbf{x} \sim p_{\text{data}}(\vx|y)}\left[\log(D_y(\rvx))\right] + (1-\pi_y)\mathbb{E}_{\rvx \sim p_\text{gf}(\vx)}\left[\log(1-D_y(\rvx))\right] \label{eq:loss_PU_1} \\
&= (1+\pi_y) \mathbb{E}_{\mathbf{x} \sim p_{\text{data}}(\vx|y)}\left[\log D_y(\rvx) \right] + \mathbb{E}_{\rvu \sim p_{\rvu}(\vu)}\left[\log(1-D_y(G(M_y(\rvu))))\right] %
- \pi_y \mathbb{E}_{\mathbf{x} \sim p_{\text{data}}(\vx|y)}\left[ \log(1-D_y(\rvx))\right] \label{eq:PU2}
\end{align}
\end{small}
where the last equation follows from $(1-\pi_y) p_\text{gf}(\vx) = p_{G(M_y(\rvu))}(\vx) - \pi_y p_{\text{data}}(\vx|y)$.

\begin{wrapfigure}{r}{0.4\textwidth}
    \vspace{-4mm}
	\centering
	\includegraphics[width=0.85\linewidth]{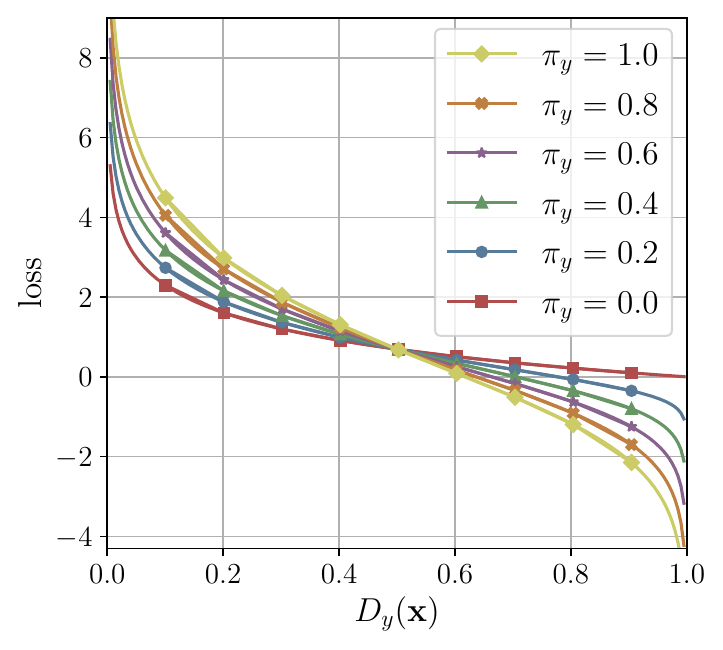}
	\vspace{-3mm}
	\caption{\textbf{Visualization of \rg{}'s  discriminator loss on real data.}
	We visualize the loss %
	with different values of $\pi_y$ (in $[0, 1]$) and $D_y(x)$ (in $[0, 1]$).
	Compared to the standard negative log-loss ($\pi_{y}=0$), the new loss ($\pi_{y}>0$)  more strongly encourages $D_{y}$ to predict values near $1$ for positive samples. }
	\label{fig:newloss}
\end{wrapfigure}

The discriminator loss is $-V_y^{PU}$.
For real data $\mathbf{x} \sim p_{\text{data}}(\vx|y)$, 
discriminator minimizes a new loss $-(1 + \pi_y) \log(D_y(\rvx)) + \pi_y \log(1 - D_y(\rvx))$ instead of the standard negative log-loss $-\log(D_y(\rvx))$.
Fig.~\ref{fig:newloss} visualizes this new loss with various values of $\pi_y$.
Compared to the standard loss ($\pi_y = 0$), our loss function strongly encourages the discriminator to assign positively labeled data higher scores that are very close to $1$. 
Also, when the prediction is close to $1$, the loss takes an unboundedly large negative value, indirectly assigning more weights to the first term in (\ref{eq:PU2}). 
If $\pi_y$ is close to $1$, the relative weights given to the real samples become even higher. 
An intuitive justification is that the loss computed on the generated data becomes less reliable when $\pi_y$ gets larger, so one should focus more on the loss computed on the real data.

Note that the last term in (\ref{eq:loss_PU_1}) is always negative as $D_y$ outputs a probability value, but its empirical estimate, shown in the sum of the last two terms in (\ref{eq:PU2}), may turn out to be positive due to finite samples.
We correct this by clipping the estimated value to be negative as shown below:
\begin{equation*}
  \resizebox{\hsize}{!}{
 $V_y^{PU} = (1+\pi_y) \widehat{\mathbb{E}}_{\mathbf{x} \sim p_{ \text{data}}(\vx|y)}\left[\log D_y(\rvx) \right] %
+ \min \left\{ 0, \widehat{\mathbb{E}}_{\rvu \sim p_{\rvu}(\vu)}\left[\log(1-D_y(G(M_y(\rvu))))\right]  - \pi_y \widehat{\mathbb{E}}_{\mathbf{x} \sim p_{\text{data}}(\vx|y)}\left[\log(1-D_y(\rvx))\right] \right\}  
$}  
\end{equation*}
We set $\pi_y=\tfrac{1}{|\sY|}$ initially when the generated data is well-balanced between classes, and gradually increase $\pi$ to $1$ by temperature scaling to capture the increasing fraction of $y$-class data.

\begin{remark}
\td{
In the context of the UGAN training, recent work~\citep{guo2020positive} studies a similar PU-based approach to address the training instability caused by the false rejection issue discussed above.
However, unlike our approach, their approach does not consider the term $\mathbb{E}_{\rvx \sim p_{\text{data}}(\vx|y)}\left[\log D_y(\rvx)\right]$ in their loss function design.
This design makes their loss function invalid when $\pi_y$ is close to $0$, \td{that is}, at the beginning of GAN training.
On the other hand, our loss function is valid for all $\pi_y \in [0, 1]$.
We also note that the false rejection issue is more relevant near the end of the standard GAN training when the generated samples are more similar to the real ones, while it occurs right at the beginning of \gancond{} training.
This difference leads to different uses of the PU-based loss for the standard GAN and \gancond{}.
}
\end{remark}

\setlength{\floatsep}{0.1cm}
  \begin{algorithm}[ht]
	\caption{$\texttt{\rg{}}$}
	\label{alg:repgan}
	\SetKwInput{Input}{Input}
	\SetKwInput{Result}{Result}
	\Input{Pretrained \ugen{} $G$, class-conditional dataset $\sD$ with label set $\mathbb{Y}$, noise dimension $d$,  batch size $m$, \td{learning rates $\alpha, \beta$}, number of discriminator steps $k$, number of training iterations $t$}
	\Result{Class-conditional generator $G'$} 
	Initialize parameters: $\vtheta_M$ for modifier network $M$, $\vtheta_E$ for embedding module $E$, $\vtheta_D$ for discriminator network $D$, and $\{\vtheta_y: y \in \sY\}$ for linear heads $\{H_y: y \in \sY \}$.
	
	Let $D_y=H_y(D(\cdot; \vtheta_D);\vtheta_y)$,  $\td{\vw_{D_y}} = [\vtheta_D, \vtheta_y]$ for each $y\in\sY$, $\vw_{EM} = [\vtheta_M, \vtheta_E]$\\
	\For{$t$ iterations}{
	    \For{$k$ steps}{
		    sample $\{\rvx_i, \ry_i\}_{i=1}^m \sim \sD$, $\{\rvu_i\}_{i=1}^m \sim \mathcal{N}(\mathbf{0}, \mI_d)$ \\
		    $\rvy_i = E(\ry_i; \vtheta_E)$ \td{for all $i$}; 
	    	~~$\tilde{\rvx}_i \leftarrow G(M(\rvu_i, \rvy_i;\vtheta_M))$ \td{for all $i$}\\
	    	\For{each $y \in \sY$}{
		        $V_y^{(i)} \leftarrow V_y^{PU}(\tilde{\rvx}_i, \rvx_i )$ \td{for all $i$}\\ 
		        $\nabla_{\vw_{D_y}} = \text{Adam}(\nabla_{\td{\vw_{D_y}}} \frac{1}{m}\sum_{i=1}^{m}{V_y^{(i)}})$; 
		        ~~$\td{\vw_{D_y}} \leftarrow \td{\vw_{D_y}} + \alpha\nabla_{\vw_{D_y}}$
		    }
	    }
		sample $\{\hat{\rvu}_i\}_{i=1}^m \sim \mathcal{N}(\mathbf{0}, \mI_d)$ \\
		\For{each $y \in \sY$}{
		    $\rvy = E(y; \vtheta_E)$;
    		$\hat{\rvx}_i \leftarrow G(M(\hat{\rvu}_i, \rvy;\vtheta_M))$ \td{for all $i$}; 
    		~~$V_y^{(i)} \leftarrow V_y^{PU}(\tilde{\rvx}_i, \rvx_i )$ \td{for all $i$} \\ 
    		$\nabla_{\vw_{EM}} = \text{Adam}(-\nabla_{\vw_{EM}} \frac{1}{m} 
    		\sum_{i=1}^{m} \td{V_y^{(i)}})$; 
    		~~$\vw_{EM} \leftarrow \vw_{EM} + \td{\beta} \nabla_{\vw_{EM}}$
		}
	}
	$G'(\cdot, \cdot) \leftarrow G \circ M(\cdot, E(\cdot; \vtheta_E);\vtheta_M)$
\end{algorithm}

\subsection{\rg{}’s optimality under \td{ideal setting}}
\label{sec:inrep_analysis}
In this section, we highlight that the GAN's global equilibrium theorem~\citep{goodfellow2014generative} still holds under the PU-learning principle incorporated in the discriminator training.
Specifically, \rg{} attains the global equilibrium if and only if generated samples follow the true conditional distribution. 
This theorem guarantees that \rg{} learns the exact conditional distribution under the ideal training. We provide the proofs for the proposition and the theorem in \td{Appendix~\ref{app:proof_33}}.

\begin{proposition}
Fix the ideal unconditional generator $G$ and arbitrary modifier $M_y$, the optimal discriminator for $y$ is 
\begin{align*}
    D_y^*(M_y(\rvu)) = \frac{(1+\pi_y)p_{\text{data}}(\vx|y)}{(1+\pi_y) p_{\text{data}}(\vx|y) + (1-\pi_y) p_\text{gf}(\vx)}.
\end{align*}
\end{proposition}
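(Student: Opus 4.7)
The plan is to mirror the classical derivation of the optimal discriminator in the original GAN paper, adapted to the PU-based objective $V_y^{PU}$. I would start from the clean form given in equation~(\ref{eq:loss_PU_1}), which, after collecting the two $\log D_y$ terms, reads
\begin{equation*}
V_y^{PU} = (1+\pi_y)\,\mathbb{E}_{\rvx \sim p_{\text{data}}(\vx|y)}\!\left[\log D_y(\rvx)\right] + (1-\pi_y)\,\mathbb{E}_{\rvx \sim p_\text{gf}(\vx)}\!\left[\log(1-D_y(\rvx))\right].
\end{equation*}
Because $G$ and $M_y$ are fixed, both expectations reduce to integrals against densities that do not depend on $D_y$, so I can rewrite $V_y^{PU}$ as a single integral $\int f(\rvx, D_y(\rvx))\, d\rvx$ with integrand
\begin{equation*}
f(\rvx, d) = (1+\pi_y)\, p_{\text{data}}(\vx|y)\, \log d \;+\; (1-\pi_y)\, p_\text{gf}(\vx)\, \log(1-d).
\end{equation*}

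Next I would maximize pointwise over $d \in (0,1)$. Writing $a=(1+\pi_y)p_{\text{data}}(\vx|y)$ and $b=(1-\pi_y)p_\text{gf}(\vx)$, the function $d \mapsto a\log d + b\log(1-d)$ is strictly concave (since $a,b \geq 0$ and are not simultaneously zero for $\rvx$ in the support of either density), and its unique maximizer is $d^* = a/(a+b)$. Substituting back gives exactly the claimed formula for $D_y^*(\rvx)$. To justify the exchange of maximization and integration, I would invoke the standard measurable-selection argument used in the proof of Proposition~1 of Goodfellow et al.\ (2014): any measurable $D_y$ achieving the pointwise maximum almost everywhere attains the supremum of $V_y^{PU}$, and such a selection exists since the pointwise maximizer is a continuous function of $(a,b)$.

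Two small technical points would be worth checking along the way. First, the decomposition $p_{G(M_y(\rvu))}(\vx) = \pi_y p_{\text{data}}(\vx|y) + (1-\pi_y) p_\text{gf}(\vx)$ used to derive the objective assumes $\pi_y \in [0,1]$ and that $p_\text{gf}$ is a proper density; this is where the ideality assumption on $G$ enters, since it guarantees that the generated marginal really is a mixture of the class-conditionals. Second, the formula must be interpreted as a Radon--Nikodym density on the support of $(1+\pi_y)p_{\text{data}}(\vx|y) + (1-\pi_y)p_\text{gf}(\vx)$; outside this support the value of $D_y^*$ is immaterial. Neither of these is a genuine obstacle, so the main step is really just the pointwise concave maximization, and the only modeling-side subtlety is that one is maximizing the population objective, not the clipped finite-sample estimator used in practice.
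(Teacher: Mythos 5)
Your proof is correct and follows essentially the same route as the paper's: both reduce $V_y^{PU}$ to the integral of $(1+\pi_y)p_{\text{data}}(\vx|y)\log D_y(\vx) + (1-\pi_y)p_\text{gf}(\vx)\log(1-D_y(\vx))$ and then apply the pointwise maximization of $a\log d + b\log(1-d)$ at $d^*=a/(a+b)$. The extra remarks on measurable selection and the support of the mixture are fine but not needed beyond what the paper already does.
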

The proposition shows that the optimal discriminator $D^*$ learns a certain balance between $p_{\text{data}}(\vx|y)$ and $p_{\text{gf}}(\vx)$.
Fixing such an optimal discriminator,
we can prove the equilibrium theorem. 
\begin{theorem}[Adapted from \citep{goodfellow2014generative}] \label{thm:exact_recovery}
When the ideal unconditional generator $G^*$ and discriminator $D^*$ are fixed, the globally optimal modifier is attained if and only if $p_\text{gf}(\vx) = p_{G(M_y(\rvu))}(\vx) = p_{\text{data}}(\vx|y)$. 
\end{theorem}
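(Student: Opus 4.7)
The plan is to substitute the optimal discriminator $D_y^*$ from the preceding proposition into the objective $V_y^{PU}$ and recognize that the resulting quantity $C(M_y) := V_y^{PU}(G^*, M_y, D_y^*)$, viewed purely as a function of the modifier, admits a lower bound that is tight exactly at the claimed distributional equality. First I would use the mixture identity $(1-\pi_y)p_\text{gf}(\vx) = p_{G(M_y(\rvu))}(\vx) - \pi_y\, p_\text{data}(\vx|y)$, which is the same decomposition used earlier to derive Eq.~(\ref{eq:PU2}), to rewrite the $\rvu$-expectation inside $V_y^{PU}$ as an integral against $p_\text{gf}$. Plugging in $D_y^*$ and collecting the two $\log D_y$-type terms then collapses the objective to
\begin{equation*}
C(M_y) = \int \Big[\tilde{a}(\vx)\log \tfrac{\tilde{a}(\vx)}{\tilde{a}(\vx)+\tilde{b}(\vx)} + \tilde{b}(\vx)\log \tfrac{\tilde{b}(\vx)}{\tilde{a}(\vx)+\tilde{b}(\vx)}\Big]\,d\vx,
\end{equation*}
where $\tilde{a}(\vx) := (1+\pi_y)\,p_\text{data}(\vx|y)$ and $\tilde{b}(\vx) := (1-\pi_y)\,p_\text{gf}(\vx)$.

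Next I would introduce the mixture $p_m(\vx) := \tfrac{1}{2}\big[\tilde{a}(\vx) + \tilde{b}(\vx)\big]$, which is a bona fide probability density because $\int(\tilde{a}+\tilde{b})\,d\vx = (1+\pi_y)+(1-\pi_y) = 2$. By inserting $\log 2$ inside each logarithm and peeling off the $M_y$-independent pieces, the integrand rearranges in a Jensen--Shannon style into
\begin{equation*}
C(M_y) = (1+\pi_y)\,\KL\!\big(p_\text{data}(\cdot|y)\,\big\|\,p_m\big) + (1-\pi_y)\,\KL\!\big(p_\text{gf}\,\big\|\,p_m\big) + \kappa(\pi_y),
\end{equation*}
with $\kappa(\pi_y) = (1+\pi_y)\log(1+\pi_y) + (1-\pi_y)\log(1-\pi_y) - 2\log 2$ a constant that does not depend on $M_y$. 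Non-negativity of KL then yields $C(M_y) \ge \kappa(\pi_y)$, with equality if and only if $p_\text{data}(\cdot|y) = p_m = p_\text{gf}$. For $\pi_y \in [0,1)$, a one-line algebraic check shows either equality forces $p_\text{gf} = p_\text{data}(\cdot|y)$, and the mixture decomposition $p_{G(M_y(\rvu))} = \pi_y\,p_\text{data}(\cdot|y) + (1-\pi_y)\,p_\text{gf}$ then gives $p_{G(M_y(\rvu))} = p_\text{data}(\cdot|y)$ as well, which is exactly the theorem.

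The hard part is bookkeeping rather than conceptual: one must track the two unnormalized measures $\tilde{a}$ and $\tilde{b}$ (whose total masses are $1\pm\pi_y$), notice that their half-sum happens to be normalized so that standard KL non-negativity applies, and isolate the $\pi_y$-dependent constant $\kappa(\pi_y)$ cleanly from the divergence terms. The degenerate boundary $\pi_y=1$, in which the unconditional $G$ already generates only the target class, can be handled as a trivial special case. Throughout, I implicitly assume that the modifier family is expressive enough to realize any target density of $G(M_y(\rvu))$; this matches the ``ideal setting'' qualifier in the theorem and mirrors the non-parametric assumption used in the original GAN equilibrium proof.
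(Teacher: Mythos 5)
Your proposal is correct and follows essentially the same route as the paper's own proof: substitute the optimal discriminator $D_y^*$ into $V_y^{PU}$, observe that the normalized half-mixture $p_m = \tfrac{1}{2}\bigl[(1+\pi_y)p_{\text{data}}(\cdot\mid y) + (1-\pi_y)p_{\text{gf}}\bigr]$ is a valid density, rearrange into a Jensen--Shannon-style sum of two KL terms plus the constant $\kappa(\pi_y)$, and apply KL non-negativity. In fact your write-up is slightly tidier than the paper's: the appendix has a typo in the second argument of both KL divergences (it writes $\tfrac{p_{\text{data}}(\vx\mid y)}{((1+\pi_y)p_{\text{data}} + (1-\pi_y)p_{\text{gf}})/2}$ where it should be the mixture $p_m$ alone), whereas you have the reference measure right, and you additionally spell out the one-line argument that $p_{\text{data}}(\cdot\mid y) = p_m$ already forces $p_{\text{gf}} = p_{\text{data}}(\cdot\mid y)$ when $\pi_y < 1$ and flag the degenerate $\pi_y = 1$ boundary, both of which the paper leaves implicit.
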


\section{Experiments}

\label{sec:experiment}

In this section, we first describe our experiment settings, including datasets, baselines, evaluation metrics, network architectures, and the training details.
In Sec.~\ref{sec:exp_visual}, we present conditioned samples from \rg{} on various data.
In Sec.~\ref{sec:exp_supervision}, we quantify the learning ability of \rg{} varying the amount of labeled data on different datasets.
In Sec.~\ref{sec:exp_robustness}, we evaluate the robustness of \rg{} in settings of class-imbalanced and label-noisy labeled data.
We also conduct an ablation study on the role of \rg{}'s components (Sec.~\ref{sec:exp_ablation}).
\td{Our code and pretrained models are available at \url{https://github.com/UW-Madison-Lee-Lab/InRep}.}

\paragraph{Datasets and baselines} 
We use a synthetic \dg{} dataset (detailed in Sec.~\ref{sec:exp_visual}) and various real datasets: \dm{}~\citep{deng2012mnist}, \dc{}~\citep{Krizhevsky09learningmultiple}, \dhc{}~\citep{Krizhevsky09learningmultiple}, Flickr-Faces-HQ (FFHQ)~\citep{karras2019style}, CelebA~\citep{liu2015faceattributes}.
Our baselines are \grep{}~\citep{lee2020}, fine-tuning~\citep{wangl2018transfer,wang2020mineGAN}, and three popular standard \cg{}s (\ag{}, \pg{}, and \ctg{}).
\ftg{} is a combined approach of TransferGAN~\citep{wangl2018transfer} and  MineGAN~\citep{wang2020mineGAN}, which first learns a discriminator and a miner network given the condition set~\citep{wang2020mineGAN}, then fine-tunes all networks using \ag{} loss.

\paragraph{Evaluation metrics} 
We use the popular Fr\'{e}chet Inception Distance (\fid{})~\citep{heusel2018gans} and recall~\citep{kynkaanniemi2019improved} to measure the quality and diversity of learned distributions.
\fid{} measures the Wasserstein-2 (Fr\'{e}chet) distance between the learned and true distributions in the feature space of the pretrained Inception-v3 model.
Lower \fid{} indicates better performance.
To measure the conditioning performance, we use \cfid{}~\citep{miyato2018cgans} and Classification Accuracy Score (CAS)~\citep{ravuri2019classification,Lesort_2019,shmelkov2018good}.
Specifically, \cfid{} is the average of \fid{} scores measured separately on each class, and \cas{} is the testing accuracy on the real data of the classifier trained on the generated data.

\paragraph{Network architectures and training}
\td{Our architectures and configurations of networks are mainly based on} GANs' best practices~\citep{lucic2019high,kang2020contragan,kurach2019large,asveegan17}.
We adopt the same network architectures for all models.
For \rg{}, 
we set the dimension of label embedding vectors to $10$ for all experiments.
Our modifier network uses the i-ResNet architecture~\citep{behrmann2019invertible}, \td{with three layers for simple datasets (\dg{}, \dm{}, \dc{}) and five layers for more complex datasets (\dhc{}, FFHQ)}.
We design the modifier networks to be more lightweight than generators and discriminators.
For instance, our modifier for \dc{} has only $0.1$M parameters while the generator and the discriminator have $4.3$M and $1$M parameters, respectively.
The latent distribution is the standard normal distribution with $128$ dimensions.
\textit{For training}, 
we employ Adam optimizer with the learning rates of $2\cdot10^{-4}$ and $2\cdot10^{-5}$ for modifier and discriminator networks. 
$\beta_1, \beta_2$ are $0.5, 0.999$, respectively.
We train \num{100000} steps with five discriminator steps before each generator step.
\rg{} takes much less time compared to other approaches.
For instance, training \ag{} and \ctg{} on \dc{} may take up to several hours, while training \rg{} takes approximately half of an hour to achieve the best performance.
\textit{For implementation}, we adopt the widely used third-party \gan{} library~\citep{kang2020contragan} with PyTorch implementations for reliable and fair assessments.
We run our experiments on two RTX8000 (48 GB memory), two TitanX (11GB), two 2080-Ti (11GB) GPUs.
\textit{For pretrained \ugen{}s}, we mostly pretrain unconditional generators on simple datasets (\dg{}, \dm{}, \dc{}, \dhc{}), and 
make use of  large-scale pretrained models for StyleGAN trained on FFHQ.\footnote{\url{https://github.com/rosinality/stylegan2-pytorch}}
We provide more details in \td{Appendix~\ref{sec:app_experiment}}.

\subsection{Performing \gancond{} with \rg{} on various data}
\label{sec:exp_visual}
\begin{figure}
\vspace{-1em}
    \centering
	\includegraphics[width=\linewidth]{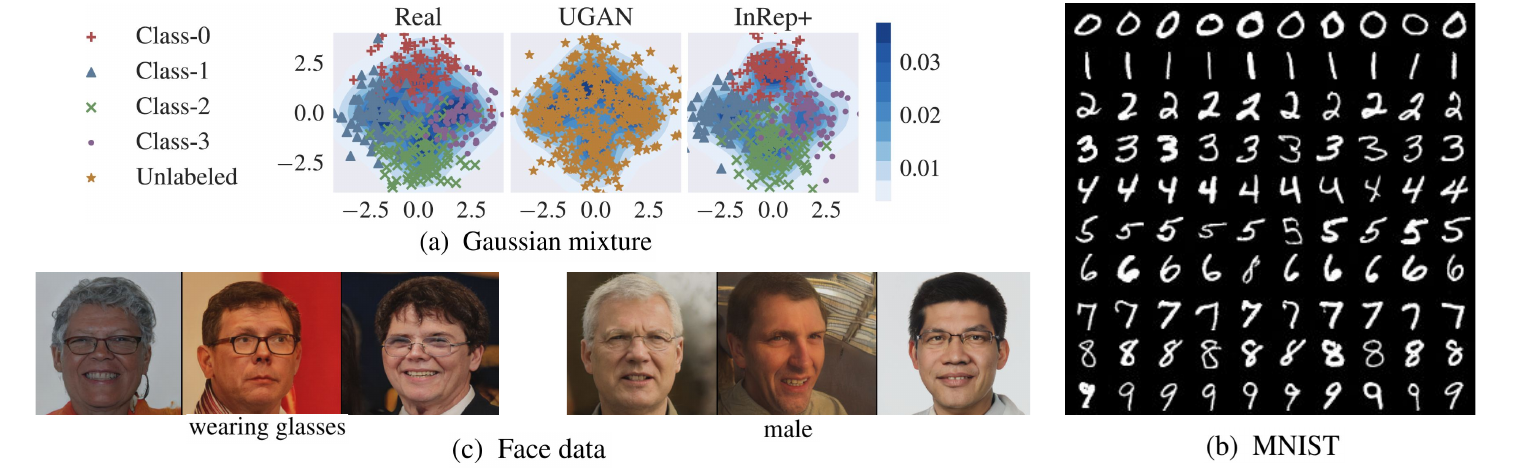}
	\vspace{-1.5em}
	\caption{
	    \textbf{
	    Conditional samples from \rg{} on different datasets.
	    }
	    (a) \textit{\dg{}:} 
	    We synthesize a mixture of four Gaussian distributions on two-dimensional space with \num{10000} samples.
	    Four Gaussian components have unit variance and their means are $(0, 2), (-2, 0), (0, -2), (2, 0)$, respectively. 
	    The synthesized data is visualized in the left figure (Real).
	    The central figure visualizes the distribution learned by unconditional GAN (UGAN). 
	    The right figure (\rg{}) shows the conditional samples from \rg{}.
	    As we can see, the \rg{}'s distribution is highly similar to the real data distribution.
	    Also, it covers all four distribution modes.
	    (b) \textit{\dm{}:}
	    We visualize class-conditional samples of CGANs learned by \rg{}, each row per class.
	    Our samples have correct labels, with high-quality and diverse shapes.
	    (c) \textit{Face data:} 
	    Unconditional GAN model is StyleGAN pretrained on the FFHQ dataset, which contains high-resolution face images.
	    We use CelebA data as the labeled data, with two classes: wearing glasses and male.
	    Most conditional samples are high-quality and with correct labels.
	    We further show conditional samples of \dc{} in Fig.~\ref{fig:cifar10_visual_comparison}.
	}
	\label{fig:visual_repgan}
\end{figure}

Fig.~\ref{fig:visual_repgan} visualizes conditional samples of CGANs learned by \rg{} on different datasets: \dg{}, \dm{}, and the face dataset.
Shown in Fig.~\ref{fig:visual_repgan}a are samples on \dg{} data.
We synthesize a dataset of \num{10000} samples drawn from the \dg{} distribution with four uniformly weighted modes in 2D space, depicted on the left of the figure (Real). 
Four components have unit variance with means $(0, 2), (-2, 0), (0, -2), (2, 0)$, respectively. 
As seen in the figure, generated distribution from \rg{} covers all four modes of data and shares a similar visualization with the real distribution.
For \dm{} (Fig.~\ref{fig:visual_repgan}b), each row represents a class of generated conditional samples.
We see that all ten rows have correct images with high quality and highly diverse shapes.
Similarly, Fig.~\ref{fig:visual_repgan}c shows two classes (\td{wearing glasses and male}) of high-resolution face images, which are generated using \rg{}'s conditional generator.
In this setting, UGAN is the StyleGAN model~\citep{karras2019style} pretrained on the unlabeled FFHQ data, and labeled data is the CelebA data. 
For \dc{}, the last row of Fig.~\ref{fig:cifar10_visual_comparison} shows our conditional samples from \rg{} with different levels of supervision.
These visualizations show  that \rg{} is capable of well-conditioning GANs correctly.

\subsection{\gancond{} with different levels of supervision}
\label{sec:exp_supervision}
\newcommand\visualwidth{1.5in}
\begin{figure}
\centering
\resizebox{\textwidth}{!}{
\begin{tabular}{ccccccc}
\toprule[1pt]
\centering
\td{Method} & Class & 1\% & 10\% & 20\% & 50\% & 100\% \\
\toprule[1pt]
{CGAN (\ag{})} \vspace{1mm} & \makecell{airplane\\ \\ automobile\\  \\bird} &
 {\includegraphics[width = \visualwidth, valign=c]{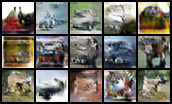}}&
{\includegraphics[width = \visualwidth, valign=c]{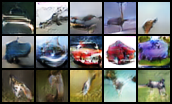}} &
{\includegraphics[width = \visualwidth, valign=c]{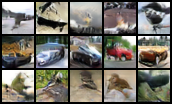}} &
{\includegraphics[width = \visualwidth, valign=c]{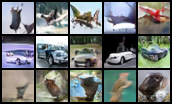}} &
{\includegraphics[width = \visualwidth, valign=c]{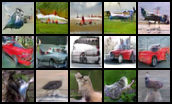}} \\ 
\toprule[0.5pt]

CGAN (\pg{}) \vspace{1mm}& \makecell{airplane\\ \\ automobile\\  \\bird} & \includegraphics[width = \visualwidth, valign=c]{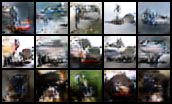} &
\includegraphics[width = \visualwidth, valign=c]{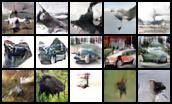} &
\includegraphics[width = \visualwidth, valign=c]{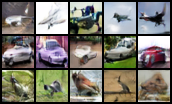} &
\includegraphics[width = \visualwidth, valign=c]{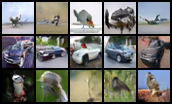} &
\includegraphics[width = \visualwidth, valign=c]{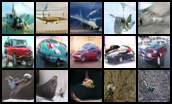} \\
\toprule[0.5pt]

CGAN (\ctg{}) \vspace{1mm} & \makecell{airplane\\ \\ automobile\\  \\bird}
&\includegraphics[width = \visualwidth, valign=c]{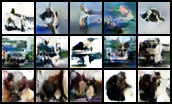} &
\includegraphics[width = \visualwidth, valign=c]{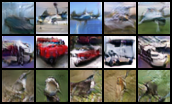} &
\includegraphics[width = \visualwidth, valign=c]{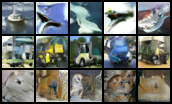} &
\includegraphics[width = \visualwidth, valign=c]{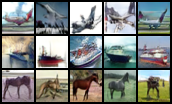} &
\includegraphics[width = \visualwidth, valign=c]{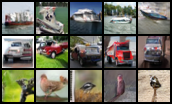} 
\\
\toprule[0.5pt]

\ftg{}\vspace{1mm} & \makecell{airplane\\ \\ automobile\\  \\bird}
&\includegraphics[width = \visualwidth, valign=c]{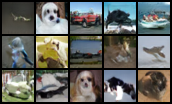} &
\includegraphics[width = \visualwidth, valign=c]{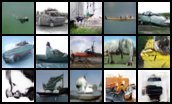} &
\includegraphics[width = \visualwidth, valign=c]{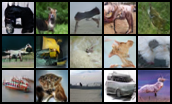} &
\includegraphics[width = \visualwidth, valign=c]{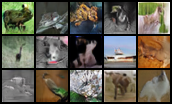} &
\includegraphics[width = \visualwidth, valign=c]{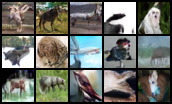} 
\\
\toprule[0.5pt]

\grep{}\vspace{1mm} & \makecell{airplane\\ \\ automobile\\  \\bird}
&\includegraphics[width = \visualwidth, valign=c]{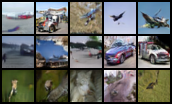} &
\includegraphics[width = \visualwidth, valign=c]{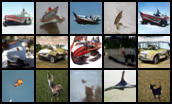} &
\includegraphics[width = \visualwidth, valign=c]{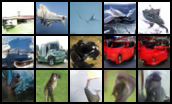} &
\includegraphics[width = \visualwidth, valign=c]{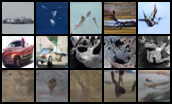} &
\includegraphics[width = \visualwidth, valign=c]{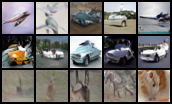} 
\\
\toprule[0.5pt]

\rg{} (ours) \vspace{1mm} & \makecell{airplane\\ \\ automobile\\  \\bird}
&\includegraphics[width = \visualwidth, valign=c]{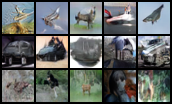} &
\includegraphics[width = \visualwidth, valign=c]{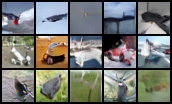} &
\includegraphics[width = \visualwidth, valign=c]{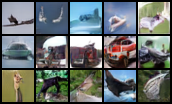} &
\includegraphics[width = \visualwidth, valign=c]{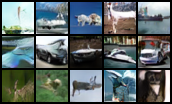} &
\includegraphics[width = \visualwidth, valign=c]{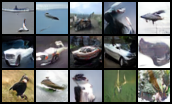} 
\\
\toprule[1pt]
\end{tabular}
}
\caption{
    \textbf{\dc{} conditional images from \gancond{} methods}.
    We compare the samples in terms of quality and class-correctness, varying amounts of labeled data.
    A row of each figure represents images conditioned on one of three classes: airplane, automobile, bird (top to bottom).
    At $1$\% and $10$\%, samples from \ag{}, \pg{}, and \cg{} are blurrier and in lower resolutions than those of methods reusing UGAN (\rg{}, fine-tuning, and \grep{}). 
    Furthermore, \rg{}'s samples have both high quality and correct labels while some samples from \grep{} and fine-tuning methods have incorrect classes.
    When more labeled data is provided (20\%, 50\%, 100\%), \cg{} algorithms (\ag{}, \pg{}, and \ctg{}) gradually synthesize samples with higher quality and correctly conditioned classes.
    Noticeably, while all samples of \ctg{} models have high quality, some of them are mistakenly conditioned.
    This behavior probably causes higher \cfid{} scores in spite of low \fid{} scores.
    Among methods reusing UGANs, most samples from \rg{} have both high quality and correct classes, while \grep{} and the fine-tuning method show more wrong-class conditional samples.
}
\label{fig:cifar10_visual_comparison}
\end{figure}

\begin{table*}[t]
\vspace{-3mm}
\renewrobustcmd{\bfseries}{\fontseries{b}\selectfont}
\sisetup{detect-weight,mode=text,group-minimum-digits =4}
\caption{
\textbf{\fid{} and \cfid{} evaluations.}
We compare \rg{} with baselines on \dc{} and \dhc{} under various amounts of labeled data ($\rx$\% on \dc{} means that $\rx$\% of labeled \dc{} is randomly selected).
When the labeled dataset is small (1\%, 10\%), \rg{} outperforms baselines, shown in the smallest \fid{} and \cfid{} scores over all datasets. 
As more labeled data is given, directly applying \cg{}s results in better performances (lower \fid{} and \cfid{}). 
\rg{} maintains small \fid{} gaps to best models. 
\textit{Values of \grep{} and standard deviations on \dhc{} are not obtained due to the computing limitation.}}
\label{tab:fid_supervision}
    \centering
    \vspace{0.2em}
\resizebox{\textwidth}{!}
{
\begin{tabular}{ccS[table-format=2.2]*{4}{S[table-format=4.2]}cS[table-format=3.2]*{4}{S[table-format=5.2]}c}%
\toprule[1pt]
\centering
\multirow{2}{*}{Dataset}&
\multirow{2}{*}{Method}&
\multicolumn{6}{c}{\fid{}~($\downarrow$)}&
\multicolumn{5}{c}{\cfid{}~($\downarrow$)}\\
\cline{3-7}
\cline{9-14}
&&{1\%} & {\hspace*{2em}10\%} & {\hspace*{2em}20\%} & {\hspace*{2em}50\%} & {\hspace*{2.5em}100\%~~~}&\multicolumn{1}{c}{}& {1\%} & {\hspace*{2em}10\%} & {\hspace*{2em}20\%} & {\hspace*{2em}50\%} & {\hspace*{2em}100\%} &\\
\toprule[1pt]

\multirow{6}{*}{\dc{}}&CGAN (\ag{})& 69.01\std{0.28}&    30.55\std{0.45}&       21.50\std{0.14}&       16.10\std{1.37}&     12.38\std{2.76} &&
141.71\std{0.33}&	88.33\std{0.15}&	73.47\std{0.55}&	68.24\std{2.47}&	54.86\std{8.65}&\\

&CGAN (\pg{})&67.54\std{0.22}&30.35\std{0.88}& 21.21\std{ 0.14}&	14.81\std{ 0.11}&	\bfseries 11.86\std{ 1.86}&&
124.35\std{0.62}&	84.41\std{1.91}&	67.13\std{0.23}&	\bfseries 58.77\std{0.27} &	\bfseries 52.16\std{0.86}&  \\

&CGAN (\ctg{})& 43.11\std{3.28}&	29.44\std{1.73}&	25.53\std{0.49}&	\bfseries 12.44\std{0.16} &	12.19\std{0.14}&&
271.79\std{6.56}&	155.29\std{7.12}&	145.07\std{8.37}&	132.97\std{8.90}&	132.43\std{0.44}&\\

&\ftg{}& 22.01\std{0.19}&	15.97\std{0.70}&	16.13\std{0.17}&	15.40\std{0.58}&	13.98\std{1.12}&&
117.85\std{0.16}&	108.00\std{0.55}&	99.44\std{0.19}&	92.02\std{11.36}&	90.11\std{6.29}&\\

&\grep{}& 23.12\std{3.47}&	22.62\std{0.33}&	19.81\std{0.12}&	16.27\std{0.16}&	15.45\std{0.37}&& 
114.51\std{1.33}&	78.04\std{5.51}&	74.40\std{6.47}&	67.52\std{5.05}&	68.32\std{5.14}&\\
\cline{2-14}

&\rg{} (ours)&\bfseries 16.39\std{1.37} &	\bfseries 14.52\std{0.97} &\bfseries 	14.84\std{1.22} &	13.45\std{3.18}&	12.99\std{1.21}
&& \bfseries 76.24\std{3.35}&	\bfseries 65.45\std{3.25}&	\bfseries 62.19\std{3.22}&	61.81\std{1.34}&	59.62\std{2.21}&\\

\toprule[1pt]

\multirow{6}{*}{CIFAR100}&CGAN (\ag{})&83.09&	33.40&	38.15&	23.78&	17.48&& 256.37&	230.41&	207.10&	191.54&	191.35& \\
&CGAN (\pg{})& 99.54&	42.28&	24.91&	22.22&	13.90&&
256.61&	212.70&	\bfseries 196.53&	\bfseries 175.59&	\bfseries 170.55&\\
&CGAN (\ctg{})& 74.34&	47.89&	31.59&	17.19&	\bfseries 13.45&&
256.69&	213.73&	216.42&	200.58&	208.18&\\
&\ftg{}& 25.87&	\bfseries 18.43 &	\bfseries 17.64 &	\bfseries 15.57 &	14.45&&
249.77&	249.72&	237.16&	239.64&	236.70&\\
\cline{2-14}
&\rg{} (ours)&\bfseries 20.22 &	\bfseries 18.42 &	17.87&	17.04&	16.92&&
\bfseries 238.24 &	\bfseries 207.18 &	202.28&	201.19& 187.09&\\

\toprule[1pt]

\end{tabular}
}
\end{table*}

\begin{table*}[t]
\vspace{3mm}
\renewrobustcmd{\bfseries}{\fontseries{b}\selectfont}
    \sisetup{detect-weight,mode=text,group-minimum-digits = 4}
\caption{\textbf{Recall and \cas{} evaluations on \dc{}.} 
We measure \gancond{} performance with recall and \cas{} varying amounts of labeled data ($\rx$\% on \dc{} means that $\rx$\% of labeled \dc{} is randomly selected).
\rg{} outperforms baselines in terms of recall and \cas{} when small amounts of labeled data (1\%, 10\%, 20\%) are available. 
The performance gap between \rg{} and others becomes larger as less supervision is provided.
When more labeled data is provided (50\%, 100\%), \rg{} performs competitively to the best models in terms of recall and still outperforms others in terms of \cas{}.}
    
\label{tab:recall_cas_cifar10}
    \centering
\begin{tabular}{cSSSSSSSSSS}
\toprule[1pt]
\multirow{2}{*}{Method}&
\multicolumn{5}{c|}{Recall~($\uparrow$)}&
\multicolumn{5}{c}{CAS~($\uparrow$)}\\
& {1\%} & {10\%} & {20\%} & {50\%} & \multicolumn{1}{c|}{100\%} &{1\%} & {10\%} & {20\%} & {50\%} & {100\%}\\
\toprule[1pt]
CGAN (\ag{})
& 0.12  &  0.59  &  0.66  &  0.72  &  0.78  
& 11.63  &  29.83  &  33.04  &  39.70  &  43.99 \\
CGAN (\pg{})
& 0.12  &  0.56  &  0.67  &  0.72  &  \bfseries 0.80  
& 21.52  &  28.09  &  30.03  &  32.43  &  35.06 \\
CGAN (\ctg{})
& 0.23  &  0.59  &  0.67  &  \bfseries 0.79  &  0.77  & 12.00  &  14.50  &  15.02  &  21.91  &  18.78 \\
\ftg{}
& 0.64  &  0.76  &  0.77  &  0.77  &  0.79  & 11.50  &  12.68  &  15.75  &  15.56  &  15.66 \\
\grep{}
& 0.65  &  0.69  &  0.72  &  0.75  &  0.76  & 21.08  &  23.95  &  26.94  &  27.56  &  29.64 \\
\cline{1-11}
\rg{} (ours)
& \bfseries 0.72  &  \bfseries 0.77  &  \bfseries 0.78  &  0.78  &  0.79  & \bfseries 22.03  &  \bfseries 31.37  &  \bfseries 35.19  &  \bfseries 47.80 &  \bfseries 46.38 \\

\toprule[1pt]
\end{tabular}

\end{table*}
\vspace{-3mm}
We compare \gancond{} methods under different amounts of labeled data.
Specifically, we use different proportions of the training dataset (1\%, 10\%, 20\%, 50\%, and 100\%) as the labeled data.
Table~\ref{tab:fid_supervision} shows the comparison in terms of \fid{} and \cfid{} on \dc{} and \dhc{} datasets.
\td{Table~\ref{tab:recall_cas_cifar10}} provides additional measures (recall and \cas{} scores) of the methods on \dc{}.
We also compare the quality of conditional samples on \dc{} in Fig.~\ref{fig:cifar10_visual_comparison}.

\paragraph{\rg{} outperforms baselines in the regime of low supervision.}
\textit{When the supervision level is $10$\% or less}, \rg{} consistently achieves the best scores (\fid{} and \cfid{}) over all datasets.
For instance, \fid{} scores of \rg{} at 1\%  are $16.39$ and $20.22$ for \dc{} and \dhc{} (Table~\ref{tab:fid_supervision}).
The corresponding \cfid{} scores of \rg{} are $76.24$ and $238.24$, showing large gaps ($38.27$ and $11.53$) to the second-best models. 
Notably, these score gaps between \rg{} and other models become larger as less supervision is provided.
Regarding recall and CAS metrics on \dc{}, Table~\ref{tab:recall_cas_cifar10} also supports our findings: \rg{} outperforms baselines in both recall and \cas{} when small amounts of  labeled data are available.

We attribute good performances of \rg{} to its design that utilizes the representation of trained generators and efficiently uses limited labels in separating conditional latent vectors (Sec.~\ref{sec:inrep_model}).
Interestingly, thanks to well-trained generators, \grep{} and fine-tuning also achieve better performances than other \cg{}s.
For the qualitative comparison, Fig.~\ref{fig:cifar10_visual_comparison} partly illustrates that most samples from \rg{} have higher image quality than those from \cg{} methods and are more correctly conditioned than samples from fine-tuning and \grep{}.

\textit{As more labeled data is provided (20\% and more)}, \cg{}s gradually obtain better scores and outperform \rg{}, but \rg{} manages to keep relatively small gaps to the best \cg{}s.
For instance, with full supervision on \dc{}, these gaps between \rg{} and the best model, \pg{}, are just $1.1$ and $7.5$ in terms of \fid{} and \cfid{}.

\subsection{Robustness to class-imbalanced data and noisy supervision}
\label{sec:exp_robustness}
We analyze two practical settings of the labeled data, when the data is class-imbalanced and when the labels are noisy.

\begin{table*}[t]
    \renewrobustcmd{\bfseries}{\fontseries{b}\selectfont}
    \sisetup{detect-weight,mode=text,group-minimum-digits = 4}
    \vspace{-3mm}
    \caption{
    \textbf{\fid{} evaluation on class-imbalanced \dc{}}. The dataset has 10 classes where classes $0$ and $1$ are minor.
    Compared to the \fid{} on class-balanced data (Table~\ref{tab:fid_supervision}),
    \rg{} and \grep{} obtain the smallest \fid{} increases among methods, indicating that they are less affected by the imbalance in labels.
    Also, \rg{} achieves the best \cfid{}s for minor classes, close to the average score of major classes, while the \cfid{} gaps between the minor and major classes are larger in other models.}
    \centering
    {
    \begin{tabular}{cSSSS}
        \hline
        \toprule[1pt]
        \multirow{2}{*}{Method} & {\multirow{2}{*}{FID ($\downarrow$)}} & \multicolumn{3}{c}{\cfid{} ($\downarrow$)}\\
        \cline{3-5}
        & & \text{Class 0} & \text{Class 1} & \text{Major classes}\\
        \hline
        CGAN (\ag{}) & 19.87 & 124.25 & 141.89 & 89.29\\
        CGAN (\pg{}) & 14.80& 75.87  & 80.59       & \bfseries 53.13     \\
        CGAN (\ctg{}) & 18.95       &   139.75     &   84.20 &  135.08  \\
        \ftg{} & 14.43 & 117.52&     100.03  & 89.29   \\
        \grep{} & 16.01&   73.06     &  88.92      &   68.59   \\
        \hline
        \rg{} (ours) & \bfseries 13.26 &  \bfseries 69.48 & \bfseries 61.68 & 60.06\\
        \hline
        \toprule[1pt]
    \end{tabular}
    }
    \label{tab:fid-imbalance}
\end{table*}

\paragraph{\rg{} outperforms \gancond{} baselines on class-imbalanced \dc{}}
The class-imbalanced \dc{} has two minor classes ($0$ and $1$) that use 10\% of their labeled data, while other classes remain unchanged.
Training on this augmented dataset, we observe that \rg{} obtains the lowest overall \fid{} ($13.26$), shown in Table~\ref{tab:fid-imbalance}.
\rg{} also achieves the best \cfid{} scores for both two minor classes ($69.48$ and $61.68$), which are close to the average \cfid{} of major classes ($60.06$).
On the other hand, other methods show larger gaps between \cfid{} of minor classes and the average \cfid{} of major classes.
For instance, these score gaps in \pg{} and \ag{} are all greater than $20$.
This result validates the advantage of \rg{} in learning with class-imbalanced labeled data.

\begin{remark}[Fairness in the data generation]
The under-representation of minority classes in training data may bias models towards generating samples with more features of majority classes or samples of minority classes with lower quality and less diversity~\citep{kenfack2021fairness,mehrabi2021survey,ferrari2021addressing}.
Being more robust against the class imbalance, \rg{} is a promising candidate for \gancond{} with fair generation.
\end{remark}

\paragraph{\rg{} is more robust to label noises in \dc{}}

\begin{table*}[h]
\vspace{3mm}
    \renewrobustcmd{\bfseries}{\fontseries{b}\selectfont}
    \sisetup{detect-weight,mode=text,group-minimum-digits =4}
    \caption{\textbf{\fid{} evaluation on label-noisy \dc{}.}
    The label-noisy \dc{} is constructed by corrupting labels of \dc{} with asymmetric noises.
    Compared to reported scores on the label-clean data (Table~\ref{tab:fid_supervision}), \rg{} and \grep{} maintain small \fid{}s (13.15 and 15.84) and \cfid{}s (63.15 and 71.51) under the label corruption, while other methods suffer from high increases.
    For instance, the increased \cfid{} gaps of \ag{} and \pg{} are all greater than 30.
    Furthermore, \rg{} and \grep{} also maintain the smallest \cfid{}s for both noisy and clean classes.
    These results indicate that \irep{} approaches are more robust against label noises. 
    }
    \vspace{0.1em}
    \centering
        \begin{tabular}{cSSSS}
        \hline
        \toprule[1pt]
        \multirow{2}{*}{Method} & {\multirow{2}{*}{FID ($\downarrow$)}} & \multicolumn{3}{c}{\cfid{} ($\downarrow$)}\\
        \cline{3-5}
        & & \text{All classes} & \text{Noisy classes} & \text{Clean classes}\\
        \hline
        CGAN (\ag{})  & 27.63 & 129.11                      & 140.64 & 121.42    \\
        CGAN (\pg{})  & 15.44 & 86.85                     & 91.30& 83.90 \\
        CGAN (\ctg{}) & 14.10 & 149.49                         & 96.49& 184.83 \\
        \ftg{} &  16.18      &   115.91& 125.52    &   106.00 \\
        \grep{} &  15.84      &   71.51 &  73.10 & 69.92
        \\ \hline
        \rg{} (ours)  & \bfseries 13.15 & \bfseries 63.15 & \bfseries 65.93 & \bfseries 61.30 \\ \hline
        \toprule[1pt]
        \end{tabular}
    \label{tab:cfid-affected-cifar10}
\end{table*}

We adopt the class-dependent noise setting in robust image classification~\citep{thekumparampil2018robustness,kim2017learning}, and corrupt clean labels with flipping probability $0.4$ as follows: bird $\to$ airplane, cat $\to$ dog, deer $\to$ horse, truck $\to$ automobile.
Shown in Table~\ref{tab:cfid-affected-cifar10} are evaluated \fid{} and \cfid{}.
Though label corruption causes \fid{} increases in all models, the increase of \rg{} is only $0.16$ (compared to clean-label \fid{} in Table~\ref{tab:fid_supervision}), comparable to \grep{} and stark contrast with large gaps of \cg{} methods.
In terms of \cfid{}, \rg{} achieves the smallest score ($63.15$), and enjoys the smallest \cfid{} increases together with \grep{} ($3.5$ and $3.2$) among evaluated methods.
Also, their clean classes are less affected by label noises, while we observe non-negligible deviations in other models.
These results show the strong robustness of \irep{} methods against label noises.
We attribute this robustness to their modular design, which preserves the representation learned by unconditional generators and focuses on learning the conditional distribution.
Also, the multi-head design of the conditional discriminator contributes to preserving the separation between data of different classes.
Thus, corrupted labels only significantly affect the corresponding classes in \rg{} (and \grep{}) while affecting all classes in the joint training of other methods (CGANs and fine-tuning).

\subsection{Ablation study}
\label{sec:exp_ablation}

\begin{wraptable}{r}{0.47\textwidth}
\vspace{-3mm}
    \caption{\textbf{Analyzing the effect of \rg{}'s components.}
    Each row compares \rg{} (w/ PU-loss) with its version without PU-based loss (w/ PU-loss).
    Each column compares \rg{} (w/ invertibility) with its version when the modifier network is not invertible (w/o invertibility).
    The scores are \cfid{} on \dc{}.
    Both two components help improve \rg{}.
    Enabling invertibility reduces \cfid{} by nearly $11.7$ and $15.1$, while enabling the PU-based loss improves approximately $2.9$ and $6.3$.
    These results indicate the stronger effect of the invertibility than the PU-based loss on the final performance of \rg{}.
    }
    \centering
        \begin{tabular}{ccc}
        \toprule[1pt]
        Setting &  w/ PU-loss & w/o PU-loss \\ 
        \hline
        w/ invertibility  & 64.25 & 67.14\\
        w/o invertibility & 75.92 & 82.26  \\      
        \toprule[1pt]
        \end{tabular}
    \label{tab:ablation}
    \vspace{-3mm}
\end{wraptable}

We investigate the empirical effect of the invertible architecture and the PU-based loss on \rg{}.
We construct four \rg{} versions from enabling or disabling each of two components.
We use ResNet architecture~\citep{he2015deep} and the standard GAN loss as replacements for the i-ResNet architecture and our PU-based loss.
UGAN is pretrained on unlabeled \dc{}, and the labeled data is $10\%$ of \dc{} (to observe better the effect of PU-based loss under the low-label regime).
Shown in Table~\ref{tab:ablation} are the \cfid{} scores of four models. 
The original \rg{} achieves the best score ($64.25$), while removing all components causes a higher \cfid{} ($82.26$).
Using PU-based loss and invertibility help \rg{} reduce \cfid{}, with the biggest decreases being $6.3$ and $15.1$, respectively.
Thus both components have positive effects on the \rg{}'s performance, and invertibility shows a higher impact on \rg{}.

\section{Discussion}
\label{sec:discussion}

\paragraph{\td{Limitation}} \rg{} depends greatly on the pretrained \ugen{}: how perfectly the \ugen{} learns the underlying data distribution and how well its latent space organizes. 
The latest research~\citep{lucic2019high,liu2020diverse} observes the existing capacity gaps between unconditional \gan{}s and conditional \gan{}s on large-scale datasets.
Therefore, how to improve \rg{} on large-scale datasets is still a challenging problem, especially when \gan{}s' latent space structure has still not been well-understood.
We leave this open problem for future work.

\paragraph{Beyond \gan{}: Can we apply \rg{} to other generative models?}
Our work currently focuses on \gan{}s for studying \gancond{} because \gan{}s attain state-of-the-art performances in various fields~\cite{lucic2019high,kang2020contragan,isola2017image,zhu2017unpaired,yu2017seqgan,guo2018long}, and numerous well-trained \gan{} models are publicly available.
Nevertheless, \rg{} \td{can be} applied to other unconditional generative models.
We can replace the \ugen{} from UGAN with one from other generative models, such as Normalizing Flows (Glow~\citep{kingma2018glow}) or Variational Autoencoders (VAEs)~\citep{Kingma_2019}.
Interestingly, we notice that GAN training (as of \rg{}) using generators pretrained on VAEs probably helps stabilize the training, reducing the mode collapse issue~\citep{ham2020unbalanced}.

\paragraph{Enhancing \irep{} to prompt tuning} 
Text prompts, which are textual descriptions of downstream tasks and target examples, 
have been shown to be effective at conditioning the GPT-3 model~\citep{brown2020language}. 
Multiple works have been proposed to design and adapt prompts for different linguistic tasks, such as prompt tuning~\citep{lester2021power} and prefix tuning~\citep{li2021prefix}.
The latter approach freezes the generative model and learns the prefix activations in the encoder stack.
Sharing similar merit to prompt tuning, \rg{} can be a promising solution for learning better prompts in conditioning generative tasks while achieving significant computing savings by freezing the generative model.

\section{Conclusion}
\label{sec:conclusion}
\td{
In this study, we define the \textit{\gancond{}} problem and thoroughly review three existing approaches to this problem.
We focus on \irep{}, the best performing approach under the regime of scarce labeled data.
We propose \rg{} as a new algorithm for improving the existing \irep{} algorithm over critical issues identified in our analysis.
In \rg{}, we adopt the invertible architecture for the modifier network and a multi-head design for the discriminator.
We derive a new GAN loss based on the Positive-Unlabeled learning to stabilize the training.
\rg{}
exhibits remarkable advantages over the existing baselines: The ideal \rg{} training provably learns true conditional distributions with perfect unconditional generators, and 
\rg{} empirically outperforms others when the labeled data is scarce, as well as bringing more robustness against the label noise and the class imbalance.}

\bibliography{refs}

\begin{thebibliography}{10}

\bibitem{goodfellow2014generative}
Ian Goodfellow, Jean Pouget-Abadie, Mehdi Mirza, Bing Xu, David Warde-Farley,
  Sherjil Ozair, Aaron Courville, and Yoshua Bengio.
\newblock Generative adversarial nets.
\newblock In {\em Advances in Neural Information Processing Systems (NIPS)},
  2014.

\bibitem{miyato2018cgans}
Takeru Miyato and Masanori Koyama.
\newblock c{GAN}s with projection discriminator.
\newblock In {\em International Conference on Learning Representations (ICLR)},
  2018.

\bibitem{lucic2019high}
Mario Lu{\v{c}}i{\'c}, Michael Tschannen, Marvin Ritter, Xiaohua Zhai, Olivier
  Bachem, and Sylvain Gelly.
\newblock High-fidelity image generation with fewer labels.
\newblock In {\em International Conference on Machine Learning (ICML)}, 2019.

\bibitem{kang2020contragan}
Minguk Kang and Jaesik Park.
\newblock {ContraGAN}: {C}ontrastive learning for conditional image generation.
\newblock In {\em Advances in Neural Information Processing Systems (NeurIPS)},
  2020.

\bibitem{isola2017image}
Phillip Isola, Jun-Yan Zhu, Tinghui Zhou, and Alexei~A Efros.
\newblock Image-to-image translation with conditional adversarial networks.
\newblock In {\em IEEE/CVF Conference on Computer Vision and Pattern
  Recognition (CVPR)}, 2017.

\bibitem{zhu2017unpaired}
Jun-Yan Zhu, Taesung Park, Phillip Isola, and Alexei~A Efros.
\newblock Unpaired image-to-image translation using cycle-consistent
  adversarial networks.
\newblock In {\em \td{IEEE/CVF} International Conference on Computer Vision
  (ICCV)}, 2017.

\bibitem{yu2017seqgan}
Lantao Yu, Weinan Zhang, Jun Wang, and Yong Yu.
\newblock {SeqGAN}: {S}equence generative adversarial nets with policy
  gradient.
\newblock In {\em AAAI Conference on Artificial Intelligence (AAAI)}, 2017.

\bibitem{guo2018long}
Jiaxian Guo, Sidi Lu, Han Cai, Weinan Zhang, Yong Yu, and Jun Wang.
\newblock Long text generation via adversarial training with leaked
  information.
\newblock In {\em AAAI Conference on Artificial Intelligence (AAAI)}, 2018.

\bibitem{dong2019fw}
Haoye Dong, Xiaodan Liang, Xiaohui Shen, Bowen Wu, Bing-Cheng Chen, and Jian
  Yin.
\newblock \td{{FW-GAN}}: {F}low-navigated warping gan for video virtual try-on.
\newblock In {\em \td{IEEE/CVF International Conference on Computer Vision
  (ICCV)}}, 2019.

\bibitem{kim2020jsi}
Soo~Ye Kim, Jihyong Oh, and Munchurl Kim.
\newblock \td{{JSI-GAN}}: {GAN}-based joint super-resolution and inverse
  tone-mapping with pixel-wise task-specific filters for uhd hdr video.
\newblock In {\em \td{AAAI Conference on Artificial Intelligence (AAAI)}},
  2020.

\bibitem{kong2020hifi}
Jungil Kong, Jaehyeon Kim, and Jaekyoung Bae.
\newblock \td{{HiFi-GAN}}: {G}enerative adversarial networks for efficient and
  high fidelity speech synthesis.
\newblock {\em arXiv preprint arXiv:2010.05646}, 2020.

\bibitem{brown2020language}
Tom~B Brown, Benjamin Mann, Nick Ryder, Melanie Subbiah, Jared Kaplan, Prafulla
  Dhariwal, Arvind Neelakantan, Pranav Shyam, Girish Sastry, Amanda Askell,
  et~al.
\newblock Language models are few-shot learners.
\newblock {\em arXiv preprint arXiv:2005.14165}, 2020.

\bibitem{blodgett2020language}
Su~Lin Blodgett, Solon Barocas, Hal Daum{\'e}~III, and Hanna Wallach.
\newblock Language (technology) is power: A critical survey of "bias" in {NLP}.
\newblock {\em arXiv preprint arXiv:2005.14050}, 2020.

\bibitem{martin2017wasserstein}
Martin Arjovsky, Soumith Chintala, and Leon Bottou.
\newblock Wasserstein generative adversarial networks.
\newblock In {\em International Conference on Machine Learning (ICML)}, 2017.

\bibitem{miyato2018spectral}
Takeru Miyato, Toshiki Kataoka, Masanori Koyama, and Yuichi Yoshida.
\newblock Spectral normalization for generative adversarial networks.
\newblock In {\em International Conference on Learning Representations (ICLR)},
  2018.

\bibitem{biggan_training}
\url{https://github.com/ajbrock/BigGAN-PyTorch}, accessed May 18th, 2021.

\bibitem{mirza2014conditional}
Mehdi Mirza and Simon Osindero.
\newblock Conditional generative adversarial nets.
\newblock {\em arXiv preprint arXiv:1411.1784}, 2014.

\bibitem{odena2017conditional}
Augustus Odena, Christopher Olah, and Jonathon Shlens.
\newblock Conditional image synthesis with auxiliary classifier {GAN}s.
\newblock In {\em International Conference on Machine Learning (ICML)}, 2017.

\bibitem{reed2016generative}
Scott Reed, Zeynep Akata, Xinchen Yan, Lajanugen Logeswaran, Bernt Schiele, and
  Honglak Lee.
\newblock Generative adversarial text to image synthesis.
\newblock In {\em International Conference on Machine Learning (ICML)}, 2016.

\bibitem{binkowski2019high}
Miko{\l}aj Bi{\'n}kowski, Jeff Donahue, Sander Dieleman, Aidan Clark, Erich
  Elsen, Norman Casagrande, Luis~C Cobo, and Karen Simonyan.
\newblock High fidelity speech synthesis with adversarial networks.
\newblock {\em arXiv preprint arXiv:1909.11646}, 2019.

\bibitem{yamamoto2020parallel}
Ryuichi Yamamoto, Eunwoo Song, and Jae-Min Kim.
\newblock Parallel wavegan: A fast waveform generation model based on
  generative adversarial networks with multi-resolution spectrogram.
\newblock In {\em \td{IEEE International Conference on Acoustics, Speech and
  Signal Processing (ICASSP)}}, 2020.

\bibitem{kodali2017convergence}
Naveen Kodali, Jacob Abernethy, James Hays, and Zsolt Kira.
\newblock On convergence and stability of {GAN}s.
\newblock {\em arXiv preprint arXiv:1705.07215}, 2017.

\bibitem{odena2019open}
Augustus Odena.
\newblock \td{Open questions about generative adversarial networks}.
\newblock {\em Distill}, 4(4):e18, 2019.

\bibitem{shahbazi2022collapse}
Mohamad Shahbazi, Martin Danelljan, Danda~Pani Paudel, and Luc Van~Gool.
\newblock Collapse by conditioning: Training class-conditional gans with
  limited data.
\newblock {\em arXiv preprint arXiv:2201.06578}, 2022.

\bibitem{wangl2018transfer}
Yaxing Wang, Chenshen Wu, Luis Herranz, and Joost van~de Weijer.
\newblock Transferring {GAN}s: {G}enerating images from limited data.
\newblock In {\em European Conference on Computer Vision (ECCV)}, 2018.

\bibitem{wang2020mineGAN}
Yaxing Wang, Abel Gonzalez-Garcia, David Berga, Luis Herranz, Fahad~Shahbaz
  Khan, and Joost van~de Weijer.
\newblock Mine{GAN}: {E}ffective knowledge transfer from {GAN}s to target
  domains with few images.
\newblock In {\em IEEE/CVF Conference on Computer Vision and Pattern
  Recognition (CVPR)}, 2020.

\bibitem{li2017learning}
Zhizhong Li and Derek Hoiem.
\newblock \td{Learning without forgetting}.
\newblock {\em IEEE transactions on pattern analysis and machine intelligence},
  40(12):2935--2947, 2017.

\bibitem{engel2018latent}
Jesse Engel, Matthew~D. Hoffman, and Adam Roberts.
\newblock Latent constraints: {L}earning to generate conditionally from
  unconditional generative models.
\newblock In {\em International Conference on Learning Representations (ICLR)},
  2018.

\bibitem{lee2020}
Kangwook Lee, Changho Suh, and Kannan Ramchandran.
\newblock Reprogramming {GAN}s via input noise design.
\newblock In {\em European Conference on Machine Learning and Principles and
  Practice of Knowledge Discovery in Databases (ECML-PKDD)}, 2020.

\bibitem{jacobsen2018irevnet}
J\"{o}rn-Henrik Jacobsen, Arnold~W.M. Smeulders, and Edouard Oyallon.
\newblock i-{RevNet}: {D}eep invertible networks.
\newblock In {\em International Conference on Learning Representations (ICLR)},
  2018.

\bibitem{behrmann2019invertible}
Jens Behrmann, Will Grathwohl, Ricky T.~Q. Chen, David Duvenaud, and
  J\"{o}rn-Henrik Jacobsen.
\newblock Invertible residual networks.
\newblock In {\em International Conference on Machine Learning (ICML)}, 2019.

\bibitem{song2019mintnet}
Yang Song, Chenlin Meng, and Stefano Ermon.
\newblock {MintNet}: {B}uilding invertible neural networks with masked
  convolutions.
\newblock In {\em Advances in Neural Information Processing Systems (NeurIPS)},
  2019.

\bibitem{guo2020positive}
Tianyu Guo, Chang Xu, Jiajun Huang, Yunhe Wang, Boxin Shi, Chao Xu, and Dacheng
  Tao.
\newblock On positive-unlabeled classification in {GAN}.
\newblock In {\em IEEE/CVF Conference on Computer Vision and Pattern
  Recognition (CVPR)}, 2020.

\bibitem{goodfellow2016deep}
Ian Goodfellow, Yoshua Bengio, Aaron Courville, and Yoshua Bengio.
\newblock {\em Deep Learning}.
\newblock MIT Press, 2016.

\bibitem{denton2015deep}
Emily~L Denton, Soumith Chintala, Arthur Szlam, and Rob Fergus.
\newblock Deep generative image models using a {L}aplacian pyramid of
  adversarial networks.
\newblock In {\em Advances in Neural Information Processing Systems (NIPS)},
  2015.

\bibitem{dumoulin2016adversarially}
Vincent Dumoulin, Ishmael Belghazi, Ben Poole, Olivier Mastropietro, Alex Lamb,
  Martin Arjovsky, and Aaron Courville.
\newblock Adversarially learned inference.
\newblock In {\em International Conference on Learning Representations (ICLR)},
  2017.

\bibitem{karras2020training}
Tero Karras, Miika Aittala, Janne Hellsten, Samuli Laine, Jaakko Lehtinen, and
  Timo Aila.
\newblock Training generative adversarial networks with limited data.
\newblock {\em arXiv preprint arXiv:2006.06676}, 2020.

\bibitem{salimans2016improved}
Tim Salimans, Ian Goodfellow, Wojciech Zaremba, Vicki Cheung, Alec Radford, and
  Xi~Chen.
\newblock Improved techniques for training {GAN}s.
\newblock In {\em Advances in Neural Information Processing Systems (NIPS)},
  2016.

\bibitem{brock2018large}
Andrew Brock, Jeff Donahue, and Karen Simonyan.
\newblock Large scale gan training for high fidelity natural image synthesis.
\newblock In {\em International Conference on Learning Representations (ICLR)},
  2019.

\bibitem{shu2017ac}
Rui Shu, Hung Bui, and Stefano Ermon.
\newblock {AC-GAN} learns a biased distribution.
\newblock In {\em NIPS Workshop on Bayesian Deep Learning}, 2017.

\bibitem{elsayed2019}
Gamaleldin~F. Elsayed, Ian Goodfellow, and Jascha Sohl-Dickstein.
\newblock Adversarial reprogramming of neural networks.
\newblock In {\em International Conference on Learning Representations (ICLR)},
  2019.

\bibitem{neekhara2018adversarial}
Paarth Neekhara, Shehzeen Hussain, Shlomo Dubnov, and Farinaz Koushanfar.
\newblock Adversarial reprogramming of text classification neural networks.
\newblock In {\em Conference on Empirical Methods in Natural Language
  Processing (EMNLP)}, 2018.

\bibitem{tsai2020transfer}
Yun-Yun Tsai, Pin-Yu Chen, and Tsung-Yi Ho.
\newblock Transfer learning without knowing: {R}eprogramming black-box machine
  learning models with scarce data and limited resources.
\newblock In {\em International Conference on Machine Learning (ICML)}, 2020.

\bibitem{lu2021pretrained}
Kevin Lu, Aditya Grover, Pieter Abbeel, and Igor Mordatch.
\newblock Pretrained transformers as universal computation engines.
\newblock {\em arXiv preprint arXiv:2103.05247}, 2021.

\bibitem{yang2021voice2series}
Chao-Han~Huck Yang, Yun-Yun Tsai, and Pin-Yu Chen.
\newblock Voice2series: Reprogramming acoustic models for time series
  classification.
\newblock {\em arXiv preprint arXiv:2106.09296}, 2021.

\bibitem{nguyen2017plug}
Anh Nguyen, Jeff Clune, Yoshua Bengio, Alexey Dosovitskiy, and Jason Yosinski.
\newblock Plug \& play generative networks: Conditional iterative generation of
  images in latent space.
\newblock In {\em IEEE/CVF Conference on Computer Vision and Pattern
  Recognition (CVPR)}, 2017.

\bibitem{gambardella2019transflow}
Andrew Gambardella, At{\i}l{\i}m~G{\"u}ne{\c{s}} Baydin, and Philip H.~S. Torr.
\newblock Transflow learning: {R}epurposing flow models without retraining.
\newblock {\em arXiv preprint arXiv:1911.13270}, 2019.

\bibitem{nie2021controllable}
Weili Nie, Arash Vahdat, and Anima Anandkumar.
\newblock Controllable and compositional generation with latent-space
  energy-based models.
\newblock In {\em \td{Advances in Neural Information Processing Systems
  (NeurIPS)}}, 2021.

\bibitem{fefferman2016testing}
Charles Fefferman, Sanjoy Mitter, and Hariharan Narayanan.
\newblock \td{Testing the manifold hypothesis}.
\newblock {\em Journal of the American Mathematical Society}, 29(4):983--1049,
  2016.

\bibitem{de1999positive}
Francesco De~Comit{\'e}, Fran{\c{c}}ois Denis, R{\'e}mi Gilleron, and Fabien
  Letouzey.
\newblock Positive and unlabeled examples help learning.
\newblock In {\em International Conference on Algorithmic Learning Theory
  (ALT)}, 1999.

\bibitem{scott2009novelty}
Clayton Scott and Gilles Blanchard.
\newblock Novelty detection: {U}nlabeled data definitely help.
\newblock In {\em International Conference on Artificial Intelligence and
  Statistics (AISTATS)}, 2009.

\bibitem{du2014analysis}
Marthinus~C Du~Plessis, Gang Niu, and Masashi Sugiyama.
\newblock Analysis of learning from positive and unlabeled data.
\newblock In {\em Advances in Neural Information Processing Systems (NIPS)},
  2014.

\bibitem{deng2012mnist}
Li~Deng.
\newblock The {MNIST} database of handwritten digit images for machine learning
  research.
\newblock {\em IEEE Signal Processing Magazine}, 2012.

\bibitem{Krizhevsky09learningmultiple}
Alex Krizhevsky, Geoffrey Hinton, et~al.
\newblock Learning multiple layers of features from tiny images.
\newblock \td{2009}.

\bibitem{karras2019style}
Tero Karras, Samuli Laine, and Timo Aila.
\newblock A style-based generator architecture for generative adversarial
  networks.
\newblock In {\em IEEE/CVF Conference on Computer Vision and Pattern
  Recognition (CVPR)}, 2019.

\bibitem{liu2015faceattributes}
Ziwei Liu, Ping Luo, Xiaogang Wang, and Xiaoou Tang.
\newblock Deep learning face attributes in the wild.
\newblock In {\em \td{IEEE/CVF International Conference on Computer Vision
  (ICCV)}}, \td{2015}.

\bibitem{heusel2018gans}
Martin Heusel, Hubert Ramsauer, Thomas Unterthiner, Bernhard Nessler, and Sepp
  Hochreiter.
\newblock {GAN}s trained by a two time-scale update rule converge to a local
  {N}ash equilibrium.
\newblock In {\em Advances in Neural Information Processing Systems (NIPS)},
  2017.

\bibitem{kynkaanniemi2019improved}
Tuomas Kynkäänniemi, Tero Karras, Samuli Laine, Jaakko Lehtinen, and Timo
  Aila.
\newblock Improved precision and recall metric for assessing generative models.
\newblock In {\em Advances in Neural Information Processing Systems (NeurIPS)},
  2019.

\bibitem{ravuri2019classification}
Suman Ravuri and Oriol Vinyals.
\newblock Classification accuracy score for conditional generative models.
\newblock {\em arXiv preprint arXiv:1905.10887}, 2019.

\bibitem{Lesort_2019}
Timothée Lesort, Andrei Stoian, Jean-François Goudou, and David Filliat.
\newblock Training discriminative models to evaluate generative ones.
\newblock In {\em International Conference on Artificial Neural Networks
  (ICANN)}, 2019.

\bibitem{shmelkov2018good}
Konstantin Shmelkov, Cordelia Schmid, and Karteek Alahari.
\newblock How good is my {GAN}?
\newblock In {\em European Conference on Computer Vision (ECCV)}, 2018.

\bibitem{kurach2019large}
Karol Kurach, Mario Lu{\v{c}}i{\'c}, Xiaohua Zhai, Marcin Michalski, and
  Sylvain Gelly.
\newblock A large-scale study on regularization and normalization in {GAN}s.
\newblock In {\em International Conference on Machine Learning (ICML)}, 2019.

\bibitem{asveegan17}
Akash Srivastava, Lazar Valkov, Chris Russell, Michael~U. Gutmann, and Charles
  Sutton.
\newblock {VEEGAN}: {R}educing mode collapse in {GAN}s using implicit
  variational learning.
\newblock In {\em \td{Neural Information Processing Systems (NIPS)}}, 2017.

\bibitem{kenfack2021fairness}
Patrik~Joslin Kenfack, Daniil~Dmitrievich Arapovy, Rasheed Hussain, SM~Kazmi,
  and Adil~Mehmood Khan.
\newblock On the fairness of generative adversarial networks (\td{GANs}).
\newblock {\em arXiv preprint arXiv:2103.00950}, 2021.

\bibitem{mehrabi2021survey}
Ninareh Mehrabi, Fred Morstatter, Nripsuta Saxena, Kristina Lerman, and Aram
  Galstyan.
\newblock A survey on bias and fairness in machine learning.
\newblock {\em ACM Computing Surveys (CSUR)}, 54(6):1--35, 2021.

\bibitem{ferrari2021addressing}
Elisa Ferrari and Davide Bacciu.
\newblock Addressing fairness, bias and class imbalance in machine learning:
  the {FBI}-loss.
\newblock {\em arXiv preprint arXiv:2105.06345}, 2021.

\bibitem{thekumparampil2018robustness}
Kiran~Koshy Thekumparampil, Ashish Khetan, Zinan Lin, and Sewoong Oh.
\newblock Robustness of conditional {GANs} to noisy labels.
\newblock In {\em Advances in Neural Information Processing Systems (NeurIPS)},
  2018.

\bibitem{kim2017learning}
Taeksoo Kim, Moonsu Cha, Hyunsoo Kim, Jung~Kwon Lee, and Jiwon Kim.
\newblock Learning to discover cross-domain relations with generative
  adversarial networks.
\newblock In {\em International Conference on Machine Learning (ICML)}, 2017.

\bibitem{he2015deep}
Kaiming He, Xiangyu Zhang, Shaoqing Ren, and Jian Sun.
\newblock Deep residual learning for image recognition.
\newblock In {\em IEEE/CVF Conference on Computer Vision and Pattern
  Recognition (CVPR)}, 2016.

\bibitem{liu2020diverse}
Steven Liu, Tongzhou Wang, David Bau, Jun-Yan Zhu, and Antonio Torralba.
\newblock Diverse image generation via self-conditioned gans.
\newblock In {\em IEEE/CVF Conference on Computer Vision and Pattern
  Recognition (CVPR)}, 2020.

\bibitem{kingma2018glow}
Diederik~P Kingma and Prafulla Dhariwal.
\newblock Glow: Generative flow with invertible 1x1 convolutions.
\newblock {\em arXiv preprint arXiv:1807.03039}, 2018.

\bibitem{Kingma_2019}
Diederik~P Kingma and Max Welling.
\newblock An introduction to variational autoencoders.
\newblock {\em arXiv preprint arXiv:1906.02691}, 2019.

\bibitem{ham2020unbalanced}
Hyungrok Ham, Tae~Joon Jun, and Daeyoung Kim.
\newblock Unbalanced {GANs}: Pre-training the generator of generative
  adversarial network using variational autoencoder.
\newblock {\em arXiv preprint arXiv:2002.02112}, 2020.

\bibitem{lester2021power}
Brian Lester, Rami Al-Rfou, and Noah Constant.
\newblock The power of scale for parameter-efficient prompt tuning.
\newblock {\em arXiv preprint arXiv:2104.08691}, 2021.

\bibitem{li2021prefix}
Xiang~Lisa Li and Percy Liang.
\newblock Prefix-tuning: Optimizing continuous prompts for generation.
\newblock {\em arXiv preprint arXiv:2101.00190}, 2021.

\bibitem{peyre2019computational}
Gabriel Peyr{\'e}, Marco Cuturi, et~al.
\newblock \td{Computational optimal transport: With applications to data
  science}.
\newblock {\em Foundations and Trends{\textregistered} in Machine Learning},
  11(5-6):355--607, 2019.

\bibitem{bogachev2007measure}
Vladimir~Igorevich Bogachev and Maria Aparecida~Soares Ruas.
\newblock {\em \td{Measure theory}}, volume~1.
\newblock Springer, 2007.

\bibitem{stromberg1981introduction}
Karl~R Stromberg.
\newblock {\em \td{An introduction to classical real analysis}}, volume 376.
\newblock American Mathematical Soc., 2015.

\bibitem{lecun-mnisthandwrittendigit-2010}
Yann LeCun and Corinna Cortes.
\newblock {MNIST} handwritten digit database.
\newblock 2010.

\bibitem{mehralian2018rdcgan}
Mehran Mehralian and Babak Karasfi.
\newblock {RDCGAN}: {U}nsupervised representation learning with regularized
  deep convolutional generative adversarial networks.
\newblock In {\em \td{9th Conference on Artificial Intelligence and Robotics
  and 2nd Asia-Pacific International Symposium}}, 2018.

\bibitem{lim2017geometric}
Jae~Hyun Lim and Jong~Chul Ye.
\newblock Geometric {GAN}.
\newblock {\em arXiv preprint arXiv:1705.02894}, 2017.

\end{thebibliography}
\bibliographystyle{unsrt}

\newpage
\appendix

In (\ref{sec:app_proof}), we present \td{full statements}, proofs, and examples for all propositions, lemmas, and theorems in the main paper. 

In (\ref{sec:app_experiment}), we present the details of the experiments' setting, including architectures, datasets, and evaluation metrics.

\setcounter{theorem}{0}
\setcounter{proposition}{1}
\setcounter{lemma}{0}

\section{\td{Full Statements, Proofs, and Examples}}
\label{sec:app_proof}

\subsection{\td{Full statements and proofs} for Section~\ref{sec:prelim}}
\label{app:proof_prelim}
\subsubsection{Examples of the failure of \ag{}}

\ag{} has two objective terms $L_S, L_C$, modeling log-likelihoods of samples being from real data (by unconditional discriminator $D: \sX \mapsto \{\real, \fake\}$) and being from correct classes (by auxiliary classifier $D_{\textrm{aux}}: \sX \mapsto \sY$), respectively. 
Then, the discriminators $(D, D_{\textrm{aux}})$ and generator $G$ maximize $\lambda L_C + L_S$, $\lambda L_C - L_S$ respectively with a regularization coefficient $\lambda$. 
As in the vanilla \gan{}s, $L_S$ term encourages $G, D$ to learn the true (\td{that is}, unbiased) distribution, while $L_C$ term prefers easy-to-classify (\td{that is}, possibly biased) distributions.
Therefore, $G$ might be able to maximize $\lambda L_C - L_S$ by learning a biased distribution (increased $L_C$) at the cost of compromised generation quality (increased $L_S$).
Indeed, this phenomenon was observed in~\citep{shu2017ac}, where the authors provided theoretical reasons and empirical evidence that \ag{}-generated data are biased.
We corroborate their finding by providing two examples in which \ag{} provably learns a biased distribution. For simplicity, we consider the Wasserstein \gan{} version~\citep{martin2017wasserstein} of \ag{} (W-\gan{}).%

\begin{lemma}[W-ACGAN fails to learn a correct Gaussian mixture, full statement]
\label{lemma:acgan_1}
Consider $1$-dimensional real data $\rx \sim \mathcal{N}(y, 1)$ where $y = 2\Bernoulli(1/2)-1$. 
Assume the perfect discriminator and the following generator, parameterized by $v$: $G(\rvz, y) \sim \mathcal{N}(y, v^2), v \in \mathbb{R}_+$ for given label $y$, \td{that is}, the only model parameter is \td{the standard deviation $v$}. 
Let $v^*(\lambda)$ be the optimal generator's parameter that maximizes $
\lambda L_C - L_S$.
Then, $v^*(\lambda) \to 0$ as $\lambda \to \infty$.
\end{lemma}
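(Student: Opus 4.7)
The plan is to reduce the lemma to a direct comparison of two explicit one-dimensional functions $L_C(v)$ and $L_S(v)$, and then exploit that the classification term $\lambda L_C$ dominates as $\lambda \to \infty$.

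First I would pin down each objective term. With the perfect (Bayes-optimal) auxiliary classifier for the real data, $p(y=1 \mid x) = \sigma(2x)$, and by the symmetry $y \mapsto -y$ the generator-dependent part of $L_C$ reduces to $L_C(v) = \mathbb{E}_{x \sim \mathcal{N}(1, v^2)}[\log \sigma(2x)]$. For the perfect Wasserstein discriminator, $L_S(v)$ equals the $W_1$ distance between the real and generated mixtures; in one dimension, conditioning on $y$ and using the comonotone coupling gives an explicit formula, in particular $L_S(0^+) < \infty$ and $L_S(v) \to \infty$ as $v \to \infty$.

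Second, I would show that $L_C$ is strictly decreasing on $(0, \infty)$ with supremum $L_C(0^+) = \log \sigma(2)$. The integrand $x \mapsto \log \sigma(2x) = -\log(1+e^{-2x})$ is strictly concave, so Jensen's inequality pins the supremum down and yields $L_C(v) < \log \sigma(2)$ for all $v > 0$. Strict monotonicity follows by differentiating under the integral and applying Stein's identity to produce $L_C'(v) < 0$ for $v > 0$. Intuitively, as the generator's conditional variance shrinks, generated samples concentrate at the class means $\pm 1$, where the fixed classifier assigns arbitrarily high confidence.

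Third, the conclusion becomes a sandwich argument. Let $F(v) = \lambda L_C(v) - L_S(v)$. Since $L_C$ is bounded above and $L_S(v) \to \infty$, for any $\epsilon > 0$ one can pick $V < \infty$ with $F(V) < F(0^+)$ uniformly in $\lambda \ge 1$, reducing the problem to $v \in [\epsilon, V]$. Then $F(0^+) - \sup_{v \in [\epsilon, V]} F(v) \ge \lambda\bigl(L_C(0^+) - L_C(\epsilon)\bigr) - C(\epsilon)$, where $C(\epsilon)$ is a $\lambda$-independent constant coming from the bounded behavior of $L_S$ on $[\epsilon, V]$. The right-hand side is positive once $\lambda$ exceeds some $\lambda_0(\epsilon)$, forcing $v^*(\lambda) < \epsilon$, which is the claim.

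I expect the main obstacle to be not the calculus but unambiguously fixing what ``perfect discriminator'' means in the W-ACGAN setting — in particular whether the auxiliary classifier is Bayes-optimal for the real distribution, for the generated distribution, or for the mixture of the two used during training. Each natural convention drives $L_C$ to its maximum as $v \to 0$ (point masses at $\pm 1$ are perfectly classifiable by any reasonable classifier), so the conclusion $v^*(\lambda) \to 0$ is robust, but the closed-form expression for $L_C(v)$ differs, and the concavity/monotonicity step must be redone for the chosen convention.
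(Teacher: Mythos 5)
Your proof is correct, and it reaches the paper's conclusion by a genuinely different route. The paper fixes the conventions explicitly: the auxiliary classifier is the hard threshold at the origin and $L_C$ is the $0$--$1$ accuracy, giving the closed form $L_C = 2 - Q(1) - Q(v^{-1})$, while $L_S$ is the $W_2$ distance between Gaussians, $|1-v|$; it then computes $\partial L/\partial v$ explicitly and argues from the sign of the derivative on $(0,1)$ and $[1,\infty)$ that the global minimizer drifts to $0$ as $\lambda \to \infty$. You instead take $L_C$ to be a log-likelihood under the Bayes posterior $\sigma(2x)$ and $L_S$ to be $W_1$, and you replace the derivative computation with a domination argument: $L_C$ is bounded above with supremum attained only in the limit $v \to 0^+$ (Jensen plus strict concavity), it is strictly decreasing (Stein's identity), and $L_S$ grows at infinity, so for any $\epsilon$ the gap $\lambda\bigl(L_C(0^+)-L_C(\epsilon)\bigr)$ eventually beats the $\lambda$-independent contribution of $L_S$ on $[\epsilon,V]$. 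Your sandwich argument is more robust -- it transfers verbatim to the paper's accuracy-based $L_C$, where boundedness above by $1$ and monotonicity of $1-Q(v^{-1})$ are immediate, whereas the paper's derivative computation is tied to its specific closed forms (and is in fact slightly loose: for fixed $\lambda$ the derivative is negative in a shrinking neighborhood of $v=0$, so the minimizer is an interior point tending to $0$ rather than $0$ itself; your formulation ``$v^*(\lambda)<\epsilon$ for $\lambda\ge\lambda_0(\epsilon)$'' handles this cleanly). The only caveat, which you already flag yourself, is the convention mismatch for $L_C$; since the lemma statement does not pin the functional down and both conventions yield the same qualitative structure, this does not affect the validity of your argument.
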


\begin{proof}
Recall that $L(v, \lambda)= \lambda L_C - L_S$ for the generator. From the fact that the $2$-Wasserstein distance between two Gaussian random variables is
\begin{align*}
    W_2(\mathcal{N}(m_1, \sigma_1^2), \mathcal{N}(m_2, \sigma_2^2)) = \sqrt{(m_1 -m_2)^2 + (\sigma_1 - \sigma_2)^2},
\end{align*}
the term $L_S$ reduces to a simple expression $\E_{\ry}[ W_2(\rx, G(\rvz, \ry)) ] = |1-v|$. Therefore, when there is no auxiliary discriminator, maximizing $L(v, \lambda)$ is indeed the same as minimizing $L_S$. Then, the \ag{} will find the global minimum $v^*=1$ via gradient descent as $L_S$ is convex. We will show that this is no longer true if we consider the auxiliary classification term $L_C$.

Consider the $L_C$ term that also has a closed-form expression:
\begin{small}
\begin{align*}
&\E_{\rx}\left[\1_{\{D_{\textrm{aux}}(\rx) = \ry \}}\right] + \mathbb{E}_{\rvz, \ry}\left[\1_{\{D_{\textrm{aux}}(G(\rvz, \ry)) = \ry \}}\right] \\
&= 2-\left( \frac{1}{2} \int_{0}^{\infty} p(x | y=-1) \dif x + \frac{1}{2} \int_{-\infty}^{0} p(x |y=1) \dif x \right)  - \left( \frac{1}{2} \int_{0}^{\infty} p(G(\vz, y)|y=-1) \dif x + \frac{1}{2} \int_{-\infty}^{0} p(G(\vz, y)|y=1) \dif x \right) \\
&= 2-\frac{1}{2} Q(1) - \frac{1}{2} Q(1) - \frac{1}{2} Q(v^{-1}) - \frac{1}{2} Q(v^{-1}) = 2-Q(1) - Q(v^{-1}),
\end{align*}
\end{small}
where $Q(\cdot)$ is the complementary cumulative distribution function of the standard Gaussian, \td{that is}, $Q(t) = \int_{t}^{\infty} \mathcal{N}(\tau;0,1) d\tau$.

Therefore, the overall objective of the \ag{} to be minimized by $G$ is
\begin{align*}
L(v, \lambda) = |1-v| + \lambda \left( Q(1) + Q(v^{-1}) \right) + const.
\end{align*}

To see the behavior of the best generator, \td{that is}, $v^*=v^*(\lambda)$, first assume $v < 1$. Using the chain rule and the property of $Q$ function that $\tfrac{d}{\dif t} Q(t) = - \tfrac{1}{\sqrt{2\pi}} \exp(-t^2/2)$,
\begin{align*}
\frac{\partial L}{\partial v} = -1 + \frac{\lambda}{\sqrt{2\pi}} \exp\left( -\frac{1}{2v^2} \right) \frac{1}{v^2} ~~~ \textrm{for } v < 1.
\end{align*}
As $\exp(-\tfrac{1}{2v^2}) \tfrac{1}{v^2}$ is positive on $(0,1)$, the derivative is positive on $(0,1)$ as well by taking a large $\lambda$. Therefore, the generator will learn $v^*(\lambda) \to 0$ as $\lambda \to \infty$. When $v \ge 1$, 
\begin{align*}
\frac{\partial L}{\partial v} = 1 + \frac{\lambda}{\sqrt{2\pi}} \exp \left( -\frac{1}{2v^2} \right) \frac{1}{v^2} > 1 ~~ \text{for all } v.
\end{align*}
So, the global minimum $v^*(\lambda)$ of $L(v, \lambda)$ converges to $0$ as $\lambda$ tends to infinity. 
Fig.~\ref{fig:ex1_fig} shows the contour of $L(v,\lambda)$ and demonstrates  that $v^*(\lambda)$ converges to $0$ as $\lambda \to \infty$.
\end{proof}

\begin{figure*}[t]
	\begin{subfigure}{.33\textwidth}
		\centering
		\includegraphics[height=2in]{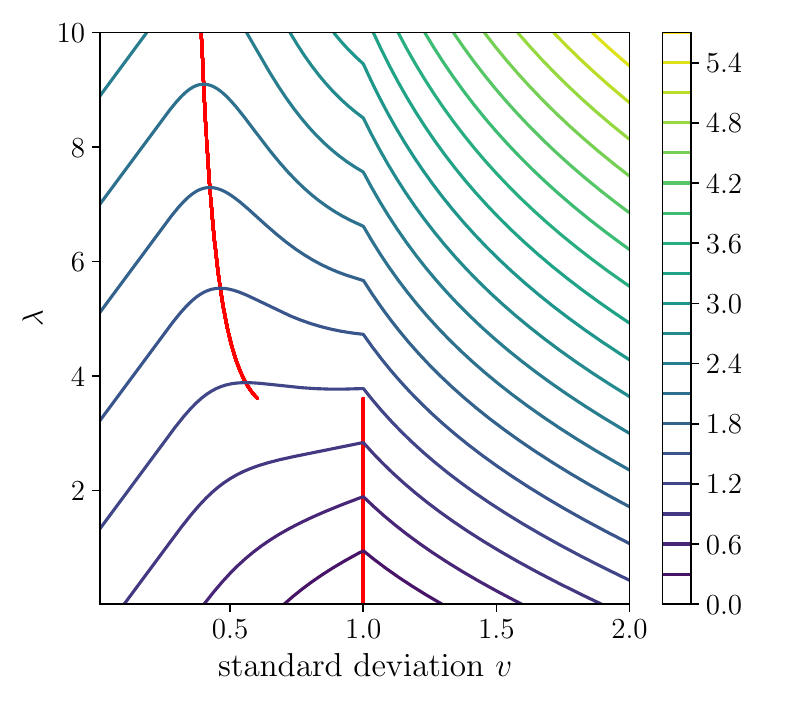}
		\caption{}
		\label{fig:ex1_fig}
	\end{subfigure} ~
	\begin{subfigure}{.33\textwidth}
	\centering
	\includegraphics[height=2in]{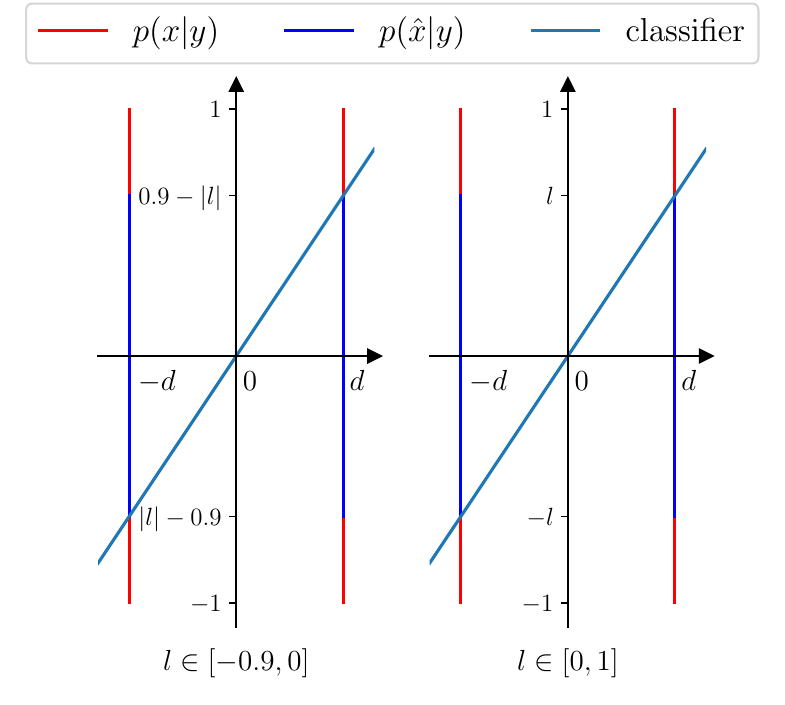}
	\caption{}
	\label{fig:ex2_setting}
	\end{subfigure} ~
	\begin{subfigure}{.33\textwidth}
		\centering
		\includegraphics[height=2in]{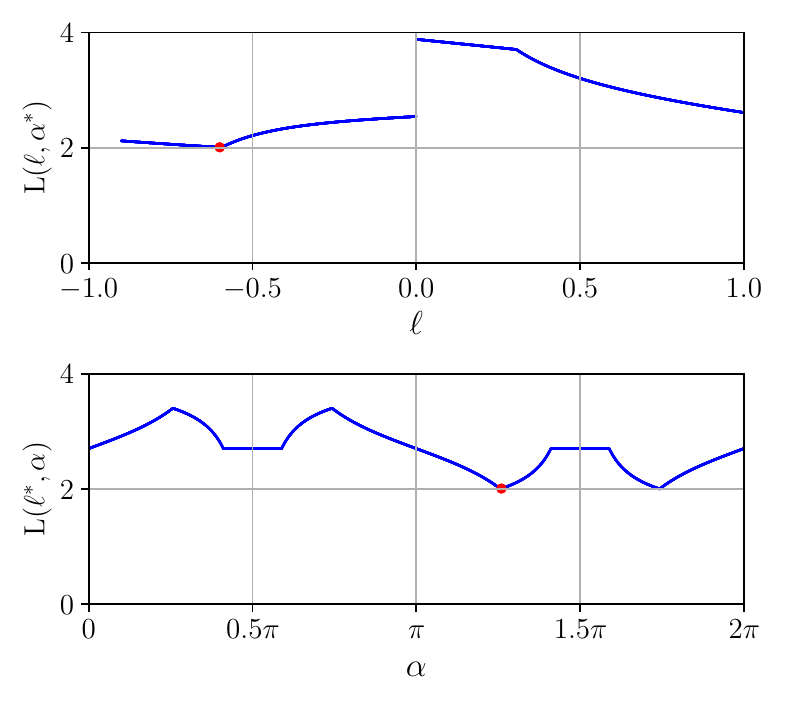}
		\caption{}
		\label{fig:ex2_fig}
	\end{subfigure}
	\vspace{-2mm}
	\caption{
	\textbf{Visual illustrations for Lemma~\ref{ex:non_separable} and Lemma~\ref{ex:separable}.}
	(a) Loss contour plot for $L(v, \lambda)$ in Lemma~\ref{ex:non_separable}. 
	Minimizer $v^*(\lambda)$ is marked in red.
	\td{We can see that the global minimum $v^*(\lambda)$ of $L(v, \lambda)$ converges to $0$ as $\lambda$ tends to infinity.} 
	(b) Setting for Lemma~\ref{ex:separable}.
	\td{Each sub-figure illustrates the target distribution (red), the generated distribution (blue) and the auxiliary classifier (pine).
	Sub-figures (from left to right) are for $\ell \in [-0.9, 1]$ and $\ell \in [0, 1]$,  respectively.}
	(c) Cross section plots for $L(\ell, \alpha, \lambda)$ in Lemma~\ref{ex:separable} at $\ell^*=-0.6, \alpha^*=\pi + \tan^{-1}( \tfrac{0.3}{d} ) \approx 1.26 \pi, \lambda^*=2$, which is a bad critical point. }
\end{figure*}
\begin{lemma}[W-ACGAN with gradient descent fails to learn a separable distribution, full statement]
Consider real and fake data that are vertically uniform in $2$-dimensional space. 
Real data conditioned on $y \in \{\pm 1\}$ are located at $\rvx=(\rx_1, \rx_2)=(d \cdot \ry, \ru)$ where $d= \sqrt{0.99/12}$ and $\ru \sim \Unif[-1,1]$.
Assume that the generator creates fake data from a distribution parameterized by $\ell \in [-0.9, 1]$: fake data $G(\rvz,y)=(\hat{\rx}_1, \hat{\rx}_2)$ where $\hat{\rx}_1 = d \cdot y, \hat{\rx}_2 \sim \Unif[-\ell, \ell]$ if $\ell  \in [0, 1]$ and $\hat{\rx}_1=-d \cdot y, \hat{\rx}_2 \sim \Unif[-(0.9-|\ell|), 0.9-|\ell|]$ otherwise. 
An auxiliary classifier is linear that passes the origin such that $\hat{y}(\vx) = \sgn(\tan(\alpha) x_1 - x_2), \alpha \in [0, 2\pi)$. 
The loss terms are:
\begin{align*}
    L_S &= \E_{\ry} \left[ W_2(  \rvx, G(\rvz, \ry))  ) \right] \\
    L_C &= \E_{\rvx}\left[\1_{\{D_{\textrm{aux}}(\rvx) \ne \ry \}}\right] + \mathbb{E}_{\rvz, \ry}\left[\1_{\{D_{\textrm{aux}}(G(\rvz, \ry)) \ne \ry \}}\right],
\end{align*}
Note that $L_C$ term indicates classification \emph{error} instead of accuracy. 
Let $L(\ell, \alpha, \lambda)$ be $L_S + \lambda L_C$.
Assume the perfect discriminator.
In this setting, $L(\ell, \alpha, \lambda)$ has bad critical points, so gradient descent-based training algorithms may not converge to the global optimum.
\end{lemma}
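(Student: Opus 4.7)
The plan is to derive closed-form expressions for $L_S(\ell)$ and $L_C(\ell, \alpha)$, verify that at the claimed triple $(\ell^*, \alpha^*, \lambda^*) = (-0.6, \pi + \tan^{-1}(0.3/d), 2)$ the (one-sided) first-order conditions for $L = L_S + \lambda L_C$ are satisfied, and show that the loss value there strictly exceeds the global optimum $L = 0$ attained at $\ell = 1$ with a near-vertical separating classifier. Since any such non-global critical point traps gradient descent initialized in its neighborhood, this already witnesses the claimed failure mode.

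For $L_S$, I would exploit the fact that the 1-D $2$-Wasserstein distance between $\mathrm{Unif}[-a, a]$ and $\mathrm{Unif}[-b, b]$ equals $|a-b|/\sqrt{3}$. When $\ell \in [0, 1]$ the real and fake conditionals share the vertical line $x_1 = dy$, so $L_S(\ell) = (1 - \ell)/\sqrt{3}$. When $\ell \in [-0.9, 0)$ the fake conditional sits on the opposite line $x_1 = -dy$; coupling the uniform vertical laws optimally and paying the horizontal cost $(2d)^2$ per sample gives $L_S(\ell) = \sqrt{4 d^2 + (0.1 + |\ell|)^2/3}$. For $L_C$, each of the four misclassification probabilities (real/fake $\times$ $y = \pm 1$) reduces to the probability that a uniform random variable lies on one side of $\tan(\alpha)\, x_1$ evaluated at $x_1 = \pm d$. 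Each such probability is continuous and piecewise linear in $m = \tan(\alpha)$, with breakpoints determined by the endpoints of the corresponding uniform support, and piecewise-rational in $\ell$ through the fake support half-width $0.9 - |\ell|$.

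The critical-point verification then amounts to: evaluating one-sided partial derivatives $\partial_\ell L$ and $\partial_\alpha L$ at $(\ell^*, \alpha^*)$, identifying the branch of $L_C$ active on each side of the relevant kinks, and solving for the unique $\lambda$ whose two subdifferential conditions simultaneously contain $\vzero$; this should produce $\lambda^* = 2$. Comparing the resulting $L(\ell^*, \alpha^*, 2)$ with the global minimum $L = 0$ (attained at $\ell = 1$ with a near-vertical classifier that perfectly separates the two vertical segments, yielding $L_S = L_C = 0$) then delivers the required strict gap.

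The main obstacle is the non-smoothness bookkeeping. $L_C$ has $\ell$-dependent kinks in $m$ at the points where the classifier boundary enters or exits a uniform support, the fake support half-width $0.9 - |\ell|$ itself has a kink in $\ell$, and $L_S$ is discontinuous at $\ell = 0$. Consequently the proof cannot rely on ordinary derivatives but must track subdifferentials at each kink and identify the correct active branch before invoking zero-gradient conditions. The $\ell = 0$ jump plays a separate secondary role: it forms a barrier that prevents continuous gradient descent from crossing from the $\ell < 0$ regime (where the claimed bad critical point lies) into the $\ell > 0$ regime (which contains the global optimum), so that a descent trajectory trapped in the bad neighborhood cannot tunnel through to the global minimizer.
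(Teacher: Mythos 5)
Your outline follows essentially the same route as the paper: closed-form $L_S$ from the $1$-D Wasserstein formula for uniform laws (with the extra $(2d)^2$ horizontal cost when $\ell<0$), piecewise-linear classification errors, a kink-type local minimum at $\bigl(\ell^*,\,\pi+\tan^{-1}((0.9-|\ell^*|)/d)\bigr)$ in the $\ell<0$ regime, and a strict gap to the global optimum $L=0$ at $\ell=1$. Two corrections, though. First, $L_S$ is \emph{not} discontinuous at $\ell=0$: with $d=\sqrt{0.99/12}$ one has $4d^2=0.33$, so $\lim_{\ell\to 0^-}\sqrt{4d^2+(0.1+|\ell|)^2/3}=\sqrt{1/3}=\lim_{\ell\to 0^+}(1-\ell)/\sqrt{3}$ — the value of $d$ is chosen precisely to make $L_S$ continuous (and monotonically decreasing) on all of $[-0.9,1]$. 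There is no jump barrier; the trapping mechanism is solely that at the kink the left $\ell$-derivative of $L$ equals $L_S'(\ell^*)<0$ while the right derivative is $L_S'(\ell^*)+\lambda\cdot(\text{positive})$, and in $\alpha$ the fake error decreases faster than the real error increases on $(\pi,\pi+\alpha_-)$ because $0.9-|\ell|<1$, then is constant on $(\pi+\alpha_-,\pi+\alpha_0)$ while the real error keeps increasing. Second, there is no \emph{unique} $\lambda$: the subdifferential at the kink contains $0$ for every $\lambda$ above a finite threshold, so the paper's $\lambda^*=2$ is just an illustrative choice, and the argument in fact works for every $\ell^*<0$ with $\lambda$ sufficiently large, not only for $\ell^*=-0.6$. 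Neither slip is fatal to your plan, but the "barrier" sentence should be deleted and the $\lambda$-condition stated as an inequality.
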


\begin{proof}
Recall the Wasserstein distance between two $1$-dimensional measures:
\begin{align*}
W_2(\rz_1, \rz_2) = \left( \int_{0}^{1} |F_{\rz_1}^{-1}(t) - F_{\rz_2}^{-1}(t)|^2 \dif t \right)^{1/2},
\end{align*}
where $F^{-1}$ is the inverse of the cumulative distribution function of a random variable. First suppose $0 \le \ell \le 1$ so that the supports of $p_{\rvx | \ry}(\vx|y), p_{G(\rvz, \ry)}(\vx|y)$ overlap. As our real and fake distributions are uniform,
\begin{align*}
L_S &= \E_{\ry} \left[ W_2(\rvx, G(\rvz, \ry)) \right] = W_2 \left( \Unif[-1, 1], \Unif[-\ell, \ell] \right) \\
&= \left( \int_{0}^{1} |(2t-1) - (2t \ell -\ell)|^2 \dif t \right)^{1/2} = \frac{1}{\sqrt{3}} (1-\ell) \quad \textrm{when } 0 \le \ell \le 1.
\end{align*}
Suppose $-0.9 \le \ell < 0$. Then, by the translation property of the Wasserstein distance, e.g., Remark 2.19 in~\citep{peyre2019computational},
\begin{align*}
\left( \E_{\ry} \left[ W_2(\rx, G(\rvz,\ry)) \right] \right)^2 &= (2d)^2 + W_2 \left(
\Unif[-1, 1], \Unif[-(0.9-|\ell|), 0.9-|\ell|] \right)^2 \\
&= 4d^2 + \frac{(0.1-\ell)^2}{3} \quad \textrm{when } -0.9 \le \ell < 0.
\end{align*}
Therefore, the Wasserstein distance is given by
\begin{align*}
L_S = \E_{\ry} \left[ W_2(\rx, G(\rvz,\ry)) \right] = \sqrt{4d^2 + \frac{(0.1-\ell)^2}{3}} \1_{\{ -0.9 \le \ell < 0 \}} + \frac{1-\ell}{\sqrt{3}} \1_{\{ 0 \le \ell \le 1\}}.
\end{align*}
The Wasserstein distance monotonically decreases in $\ell$, so the vanilla (Wasserstein) CGAN finds $\ell^*=1$ by gradient descent. In this example, $d$ is chosen so that $L_S$ is continuous, and the mismatched range of $\ell$ is selected so that the errors of the auxiliary classifier on true data and fake data do not symmetrically move. We will see below that this solution could not be learnable by the \ag{} with gradient descent if one begins from a bad initial point.

Notice that a closed-form expression for $\E_{\rvx}\left[\1_{\{ D_{\textrm{aux}}(\rvx) \ne \ry \}}\right]$ in terms of $\alpha$ is given as follows. Letting $\alpha_0 = \tan^{-1}(d^{-1})$,
\begin{align*}
\E_{\rvx}\left[\1_{\{ D_{\textrm{aux}}(\rvx) \ne \ry \}}\right] = \begin{cases}
\frac{1}{2} - \frac{d\tan(\alpha)}{2} & \textrm{if } \alpha \in [2\pi -\alpha_0, 2\pi) \cup [0, \alpha_0] \\
0 & \textrm{if } \alpha \in (\alpha_0, \pi - \alpha_0) \\
\frac{1}{2} + \frac{d\tan(\alpha)}{2} & \textrm{if } \alpha \in [\pi - \alpha_0, \pi + \alpha_0] \\
1 & \textrm{if } \alpha \in (\pi + \alpha_0, 2\pi-\alpha_0).
\end{cases}
\end{align*}
Let $\alpha_+(\ell) \coloneqq \tan^{-1}(\tfrac{\ell}{d})$ and $\alpha_-(\ell) \coloneqq \tan^{-1}(\tfrac{0.9-|\ell|}{d})$, 
we can similarly obtain a closed-form expression for $\mathbb{E}_{p_{\rvz}(\rvz)p_{y}(y)}\left[\1_{\{D_{\textrm{aux}}(G(\rvz, y)) \ne y \}}\right]$.
\begin{align*}
\textrm{If } \ell \ge 0, \quad \mathbb{E}_{\rvz, \ry}\left[\1_{\{D_{\textrm{aux}}(G(\rvz, \ry)) \ne \ry \}}\right] = \begin{cases}
\frac{1}{2} - \frac{d \tan(\alpha)}{2\ell} & \textrm{if } \alpha \in [2\pi-\alpha_+, 2\pi) \cup [0, \alpha_+] \\
0 & \textrm{if } \alpha \in (\alpha_+, \pi - \alpha_+) \\
\frac{1}{2} + \frac{d \tan(\alpha)}{2\ell} & \textrm{if } \alpha \in [\pi-\alpha_+, \pi+\alpha_+] \\
1 & \textrm{if } \alpha \in (\pi+\alpha_+, 2\pi-\alpha_+).
\end{cases}
\end{align*}
\begin{align*}
\textrm{If } \ell < 0, \quad \mathbb{E}_{\rvz, \ry}\left[\1_{\{D_{\textrm{aux}}(G(\rvz, \ry)) \ne \ry \}}\right] = \begin{cases}
\frac{1}{2} + \frac{d \tan(\alpha)}{2(0.9-|\ell|)} & \textrm{if } \alpha \in [2\pi-\alpha_-, 2\pi) \cup [0, \alpha_-] \\
1 & \textrm{if } \alpha \in (\alpha_-, \pi - \alpha_-) \\
\frac{1}{2} - \frac{d \tan(\alpha)}{2(0.9-|\ell|)} & \textrm{if } \alpha \in [\pi-\alpha_-, \pi+\alpha_-] \\
0 & \textrm{if } \alpha \in (\pi+\alpha_-, 2\pi-\alpha_-).
\end{cases}
\end{align*}

Let us investigate a generator obtained via alternating optimization. Fix an arbitrary $\ell^* < 0$. Note that at $\alpha \in (\pi, \pi + \alpha_-(\ell^*))$, $\mathbb{E}_{\rvx}[\cdot]$ is increasing, while $\mathbb{E}_{\rvz, \ry}[\cdot]$ is decreasing faster. So $L_C$ is decreasing. However, as $\pi + \alpha_- < \pi + \alpha_0$, at $\alpha \in (\pi + \alpha_-(\ell^*), \pi+\alpha_0)$, $\mathbb{E}_{\rvx}[\cdot]$ is increasing while $\mathbb{E}_{\rvz, \ry}[\cdot]$ is constant. So $L_C$ is increasing. Hence, $L_C$ attains its local minimum $\alpha^*=\pi+\alpha_-(\ell^*)$ by gradient descent.

Now fix $\alpha^* = \pi + \alpha_-(\ell^*)$. As $\mathbb{E}_{\rvz, \ry}[\cdot]$ is decreasing on $\ell < \ell^*$ and increasing on $\ell > \ell^*$, while $\mathbb{E}_{\rvx}[\cdot]$ remains unchanged. So $L_C$ term has its local minimum at $\ell^*$ for a given $\alpha^*$. Note that $L_S$ is decreasing in $\ell$. Hence by picking large $\lambda^*$ such that $L_C$ term dominates $L_S$, the generator learns $\ell^* \ne 1$ by gradient method. Fig.~\ref{fig:ex2_fig} depicts the cross-section of $L(\ell,\alpha,\lambda)$ at $\ell^*=-0.6$, $\alpha^* =\pi + \tan(\tfrac{0.3}{d}) \approx 1.26\pi$, $\lambda^*=2$ and demonstrates this is a bad critical point.
\end{proof}

\subsubsection{Examples of the failure of \pg{}}
We provide an example to show why and when \pg{}~\citep{miyato2018cgans} may fail to work. 
Note that the \pg{}'s discriminator takes an inner product between a feature vector and the class embedded vector, that is,
\begin{align*}
    D(\vx, \vy; \vtheta_D) = \vy^{T}\mV\phi(\vx) + \psi(\phi(\vx)),
\end{align*}
where $\vy$ is an one-hot encoded label vector, $\mV=[\vv_1^T;\vv_2^T;\cdots]$ is the label embedding matrix of $\rvy$, and $\phi$, $\psi$ are learnable functions.
This special algebraic operation reduces to expectation matching in a simple setting, by which we can provably show that \pg{} mislearns the exact conditional distributions.

\begin{lemma}[full statement]
Assume $\phi(\vx) = \vx$ and $\psi(\vx) = 0$ for an equiprobable two-class data. Thus, $\vtheta_D = \{\mV\}$ with $\mV=
\begin{bmatrix}
\vv_{0}^{T}\\
\vv_{1}^{T}
\end{bmatrix}$. With hinge loss, the loss for the discriminator to minimize is written as
\begin{align*}
    L_D &= \E_{\rvx, \ry} \left[ \max(0, 1-D(\rvx, \ry; \vtheta_D)) \right] + \E_{\rvz, \ry} \left[ \max(0, 1+D(G(\rvz, \ry), \ry; \vtheta_D)) \right].
\end{align*}
Let us further \td{suppose that} the generator learned the exact conditional distributions at time $t^*$, \td{that is}, $G_{t^*}(\rvz,y) \sim \rvx|\ry=y$, where $G_t$ stands for the generator at time $t$. Then, there exist discriminator's bad embedding vectors that encourage the generator to deviate from the exact conditional distributions.
\end{lemma}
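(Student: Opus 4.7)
The plan is to exploit the non-uniqueness of the discriminator's optimum at the putative equilibrium. Under the assumptions $\phi(\vx)=\vx$ and $\psi(\vx)=0$, together with one-hot encoding, the discriminator collapses to the linear form $D(\vx,y)=\vv_y^{T}\vx$, where $\vv_y$ is the row of $\mV$ indexed by $y$. Consequently, the discriminator loss at equilibrium $G_{t^*}(\rvz,y)\eqd \rvx|\ry=y$ becomes
\begin{align*}
L_D(\mV) \;=\; \E_{\rvx,\ry}\!\left[\max(0,1-\vv_\ry^{T}\rvx) + \max(0,1+\vv_\ry^{T}\rvx)\right].
\end{align*}
Using the pointwise identity $\max(0,1-a)+\max(0,1+a)=\max(2,\,2|a|)$, the integrand is uniformly bounded below by $2$, and equality holds \emph{almost surely} whenever $|\vv_y^{T}\rvx|\le 1$ with probability one under $\rvx|\ry=y$. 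Hence the set of discriminator-optimal embeddings is a whole convex body, not a single point.

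Next I would isolate a bad member of this set. I would pick any $\vv_y\ne \vzero$ whose norm is small enough that $|\vv_y^{T}\rvx|\le 1$ a.s. (for any bounded-support class-conditional, a sufficiently small rescaling works; otherwise one restricts to a truncation and then removes the truncation). With this $\vv_y$ fixed as an equilibrium discriminator, the generator's hinge loss reduces to
\begin{align*}
L_G \;=\; \E_{\rvz,\ry}\!\left[\max(0,\,1-\vv_\ry^{T}G(\rvz,\ry))\right].
\end{align*}
Parameterising a local perturbation of $G_{t^*}$ by a translation $\tilde{G}(\rvz,y)=G_{t^*}(\rvz,y)+\vc_y$, I compute the (sub)gradient
\begin{align*}
\nabla_{\vc_y}L_G\big|_{\vc_y=\vzero} \;=\; -\vv_y\cdot \Pr\!\left(\vv_y^{T}\rvx<1\mid \ry=y\right),
\end{align*}
which, on the equilibrium set just constructed, equals $-\vv_y$ (because the event has probability one) and is therefore nonzero. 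Thus the generator's gradient step is strictly in the direction $\vv_y$: $G_{t^*}$ is \emph{not} a critical point of the generator's objective, and any gradient-based update drives $\tilde{G}(\rvz,y)$ away from $\rvx|\ry=y$, producing a distribution whose class-conditional mean is shifted by a positive multiple of $\vv_y$.

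The key obstacle — and the only delicate step — is the non-smoothness of the hinge loss, because the derivative above is really a subgradient at the kink. I would handle this either by smoothing the hinge ($\max(0,t)\to \tfrac{1}{\beta}\log(1+e^{\beta t})$ and letting $\beta\to\infty$), or by noting that for the constructed $\vv_y$ the event $\{\vv_y^{T}\rvx<1\}$ has probability one, so every element of the subdifferential equals $-\vv_y$. A secondary concern is arguing that the discriminator does not immediately react to restore equilibrium; this is automatic because the entire family $\{\vv_y:|\vv_y^{T}\rvx|\le 1\text{ a.s.}\}$ sits on a flat plateau of $L_D$ at the fixed generator, so gradient descent on the discriminator provides no restoring force orthogonal to the generator's drift. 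Taken together, these steps exhibit explicit bad embedding vectors under which $G_{t^*}$ is an unstable, strictly suboptimal configuration for the generator, establishing the lemma.
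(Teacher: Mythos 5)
Your proposal is correct and rests on the same mechanism as the paper's proof: at $G_{t^*}$ the discriminator's objective is stationary in $\mV$ (the conditional means of real and generated data coincide), yet the generator's gradient is a nonzero (negative) multiple of the embedding vectors, so any gradient step pushes the class-conditionals away from the truth. The differences are cosmetic but worth noting. You keep the hinge and handle the kink by choosing $\|\vv_y\|$ small enough that $\{|\vv_y^{T}\rvx|\le 1\}$ has probability one, whereas the paper simply restricts the ranges of $\rvx,\vv_y$ so the $\max$ can be dropped; these are the same device. You perturb each class by its own translation $\vc_y$, getting the deviation condition $\vv_y\ne\vzero$, while the paper uses a single shared translation and obtains $\vv_0+\vv_1\ne\vzero$; your version identifies a slightly larger set of bad embeddings. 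Two small corrections: the identity should read $\max(0,1-a)+\max(0,1+a)=\max(2,\,1+|a|)$, not $\max(2,2|a|)$ (your downstream conclusions, lower bound $2$ with equality iff $|a|\le 1$, are unaffected); and the claim that the discriminator feels no restoring force is not quite right --- once the generator shifts by $\epsilon\vv_y$ the plateau is broken and the discriminator's gradient becomes $\epsilon\vv_y$, which rescales $\vv_y$ toward zero but never reverses the drift direction, which is exactly what the paper's explicit multi-step alternating dynamics make precise. Since the lemma only asserts that bad embeddings \emph{encourage} deviation, your nonvanishing-gradient argument already suffices.
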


\begin{proof}
Note that the discriminator $D$ under the setting is
\begin{equation}
\begin{split}
    D(\vx, \vy; \vtheta_D) 
    &= \vy^{T}\mV\phi(\vx) + \psi(\phi(\vx))\\
    &= \vy^{T}\mV\vx.
\end{split}
\end{equation}
With hinge loss, the loss for the discriminator to minimize is written as
$$
    L_D = \E_{\rvx, \ry} \left[ \max(0, 1-D(\rvx, \ry; \vtheta_D)) \right] + \E_{\rvz, \ry} \left[ \max(0, 1+D(G(\rvz, \ry), \ry; \vtheta_D)) \right].
$$

For simple presentation, we restrict ranges of $\rvx, \vv_y$ properly so that $D = \vy^{T}\mV\vx$ is in $(-1,1)$. Then, we can omit the $\max$ operator. As classes are equally likely and our setting assumes $D = \vy^{T}\mV\vx$,
\begin{small}
\begin{align*}
    L_D &= \frac{1}{2}( \E_{p_{\text{data}}(\vx|y=0)} \left[ 1-D(\rvx, 0; \vtheta_D) \right] \\
    &+ \E_{p_{\text{data}}(\vx|y=1)} \left[ 1-D(\rvx, 1; \vtheta_D) \right] + \E_{\rvz, \ry=0} \left[ 1+D(G(\rvz, 0), 0; \vtheta_D) \right] + \E_{\rvz, \ry=1} \left[ 1+D(G(\rvz, 1), 1; \vtheta_D) \right]) \\
    &= \frac{1}{2} \E_{p_{\text{data}}(\vx|y=0)} \left[ 1- \vv_0^T\rvx \right] + \frac{1}{2} \E_{p_{\text{data}}(\vx|y=1)} \left[ 1-\vv_1^T\rvx \right] + \frac{1}{2} \E_{\rvz, \ry=0} \left[ 1+\vv_0^T G(\rvz, 0) \right] + \frac{1}{2} \E_{\rvz, \ry=1} \left[ 1+\vv_1^T G(\rvz, 1) \right].
\end{align*}
\end{small}

Hence, the gradient is in a simple form:
\begin{equation}
    \nabla_{\vv_y} L_D = -\frac{1}{2} \E_{p_{\text{data}}(\vx|y)} [\rvx] + \frac{1}{2} \E_{\rvz, \ry=y} [G(\rvz, y)], \label{eq:PG_LD}
\end{equation}
which means that the discriminator measures the difference between conditional means of true data and generated data.

Note that the loss for the generator to minimize is
\begin{align*}
    L_G &= -\E_{\rvz, \ry=0}[D(G(\rvz, 0), 0; \vtheta_D)] -\E_{\rvz, \ry=1}[D(G(\rvz, 1), 1; \vtheta_D)] 
    \\ &= -\E_{\rvz, \ry=0}[\vv_0^T G(\rvz, 0)] -\E_{\rvz, \ry=1}[\vv_1^T G(\rvz, 1)].
    \vspace{-5mm}
\end{align*}
Its gradient is also in a simple form:
\begin{align}
    \nabla_{G} L_G = -\vv_0 - \vv_1. \label{eq:PG_LG}
\end{align}

Now we are ready to discuss why \pg{} may fail to learn correct distributions. Let us \td{suppose that} the generator learned the exact conditional distributions at time $t^*$, \td{that is}, $G_{t^*}(\rvz,y) \sim \rvx|\ry=y$. This in turn implies $\E_{p_{\text{data}}(\vx|y)}[\rvx] = \E_{\rvz, \ry=y}[G_{t^*}(\rvz,y)]$ for $y=0,1$. Note that (\ref{eq:PG_LD}), (\ref{eq:PG_LG}) yield the following update:
\begin{align*}
    G_{t+1}(\rvz, y) &= G_{t}(\rvz, y) - \eta \nabla_{G} L_G = G_{t}(\rvz, y) + \alpha (\vv_{0,t} + \vv_{1,t}) \\
    \vv_{y,t+1} &= \vv_{y,t} - \eta \nabla_{\vv_y} L_D = \vv_{y,t} + \frac{\alpha}{2} \left( \E_{p_{\text{data}}(\vx|y)} [\rvx] - \E_{\rvz, \ry=y} [G_t(\rvz,y)]  \right).
\end{align*}
Therefore, assuming that the step size $\alpha$ is small, alternating update yields the following dynamics after $t^*$.

\textrm{First update: } 
\begin{align*}
G(\rvz, y) &\leftarrow G_{t^*}(\rvz, y) + \alpha (\vv_{0,t^*} + \vv_{1,t^*}) \\
\vv_y &\leftarrow \vv_{y,t^*} + \frac{\alpha}{2} \left( \E_{p_{\text{data}}(\vx|y)} [\rvx] - \E_{\rvz, \ry=y} [G(\rvz, y)]  \right) \\
&= \vv_{y,t^*} + \frac{\alpha}{2} \left( \E_{p_{\text{data}}(\vx|y)} [\rvx] - \E_{\rvz, \ry=y} \left[G_{t^*} + \alpha (\vv_{0,t^*} + \vv_{1,t^*})\right] \right) \\
&= \vv_{y,t^*} - \frac{\alpha^2}{2} (\vv_{0,t^*} + \vv_{1,t^*})  ~~~ \textrm{since the conditional first moments are the same}
\end{align*}
\textrm{Second update: }
\begin{align*}
G(\rvz, y) &\leftarrow G(\rvz, y) + \alpha (\vv_0 + \vv_1) \\
&=\left( G_{t^*}(\rvz, y) 
+ \alpha (\vv_{0,t^*} + \vv_{1,t^*}) \right) + \alpha \left( \vv_{0,t^*} - \frac{\alpha^2}{2} (\vv_{0,t^*} + \vv_{1,t^*}) + \vv_{1,t^*} - \frac{\alpha^2}{2} (\vv_{0,t^*} + \vv_{1,t^*}) \right) \\
&= G_{t^*}(\rvz, y) + 2\alpha (\vv_{0,t^*} + \vv_{1,t^*}) - \alpha^2 (\vv_{0,t^*} + \vv_{1,t^*}) \\
&\approx G_{t^*}(\rvz, y) + 2\alpha (\vv_{0,t^*} + \vv_{1,t^*}) ~~~\textrm{since $\alpha$ is small} \\
\vv_y &\leftarrow \vv_y + \frac{\alpha}{2} (-2\alpha (\vv_{0,t^*} + \vv_{1,t^*})) ~~~ \textrm{since the difference between first moments is zero at $t$} \\
&=\left( \vv_{y,t^*} - \frac{\alpha^2}{2} (\vv_{0,t^*} + \vv_{1,t^*}) \right) + \frac{\alpha}{2} \left( -2\alpha(\vv_{0,t^*} + \vv_{1,t^*}) \right) \\
&= \vv_{y,t^*} - \frac{3\alpha^2}{2}(\vv_{0,t^*} + \vv_{1,t^*})
\end{align*}
\textrm{Third update: }
\begin{align*}
G(\rvz, y) &\leftarrow G(\rvz, y) + \alpha (\vv_0 + \vv_1) \\
&=\left( G_{t^*}(\rvz, y) + 2\alpha (\vv_{0,t^*} + \vv_{1,t^*}) \right) + \alpha \left( \vv_{0,t^*} - \frac{3\alpha^2}{2}(\vv_{0,t^*} + \vv_{1,t^*}) + \vv_{1,t^*} - \frac{3\alpha^2}{2}(\vv_{0,t^*} + \vv_{1,t^*}) \right) \\
&\approx G_{t^*}(\rvz, y) + 3 \alpha (\vv_{0,t^*} + \vv_{1,t^*}) ~~~ \textrm{since $\alpha$ is small} \\
\vv_y &\leftarrow \vv_y + \frac{\alpha}{2}(-3 \alpha (\vv_{0,t^*} + \vv_{1,t^*})) \\
&= \vv_{y,t^*} - 3 \alpha^2 (\vv_{0,t^*} + \vv_{1,t^*}).
\end{align*}
Repeating the updates, the $G(\rvz, y)$ steadily diverges from $G_{t^*}$ unless $\vv_{0,t^*} + \vv_{1,t^*} = 0$. 
In other words, even when the generator exactly learns the true data at time $t^*$, nonvanishing $\nabla_{G}L_G$ may result in the divergence of the generator.
\end{proof}

\subsection{Full statements and proofs for Section~\ref{sec:inrep_revisit}}
\label{app:proof_31}
\td{We provide the following proposition to justify the use of reprogramming for \gancond{}.
This proposition implies that, given an unconditional generator learned on $p_{\text{data}}(\vx)$, one can take condition-specific noise $\rvz_y$ to generate data from the target conditional distribution $p_{\text{data}}(\vx|y)$.}
\begin{proposition}[full statement]
\label{prop:1}
Assume that for two random variables $\rvz$ and $\rvx$ having continuous probability density functions, we have a perfect continuous generator $G:\sZ \mapsto \sX$ that satisfies the following: (i) $G(\rvz) \eqd \rvx$, \td{(ii) $G^{-1}(\vx)$ is Lipschitz continuous around each $\vz_i \in G^{-1}(\vx)$, and (iii) the Jacobian of $G$ has finite operator norm}.
In addition, assume that $p_{\ry|\rvx}(y|\vx)$ is continuous in $\vx$ for all $y\in \sY$.
Then, for any discrete random variable $\ry$, possibly dependent on $\rvx$, we can construct a random variable $\rvz_{y}$ such that $G(\rvz_y) \eqd \rvx|\ry=y$.
\end{proposition}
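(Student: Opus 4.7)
The overall approach is to construct $\rvz_y$ directly as a reweighted copy of $\rvz$, where the reweighting factor is the Bayes likelihood ratio $p_{\ry|\rvx}(y|\cdot)/\Pr(\ry=y)$ pulled back through $G$. Concretely, I would take $\rvz_y$ to be the random variable with density
\[
p_{\rvz_y}(\vz) \;=\; p_{\rvz}(\vz)\,\frac{p_{\ry|\rvx}\bigl(y\mid G(\vz)\bigr)}{\Pr(\ry=y)}.
\]
The intuition is clean: since $p_{\rvx|\ry}(\vx|y)/p_{\rvx}(\vx) = p_{\ry|\rvx}(y|\vx)/\Pr(\ry=y)$ by Bayes, applying this ratio at the latent level and then pushing it forward through $G$ ought to reproduce the target class-conditional on the data side. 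This sidesteps any need to explicitly describe the fiber structure of $G$ in the definition itself.

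The plan then has two short verification steps. First, $p_{\rvz_y}$ is a bona fide density: nonnegativity is immediate, and normalization follows directly from hypothesis (i), since
\[
\int p_{\rvz}(\vz)\,p_{\ry|\rvx}\!\bigl(y\mid G(\vz)\bigr)\,d\vz \;=\; \E\!\bigl[p_{\ry|\rvx}(y\mid G(\rvz))\bigr] \;=\; \E\!\bigl[p_{\ry|\rvx}(y\mid \rvx)\bigr] \;=\; \Pr(\ry=y).
\]
Second, to show $G(\rvz_y) \eqd \rvx\mid\ry=y$, I would test against an arbitrary Borel set $A\subseteq\sX$ and again use the distributional equality $G(\rvz)\eqd \rvx$:
\[
\Pr\bigl(G(\rvz_y)\in A\bigr) \;=\; \E\!\left[\1_{\{G(\rvz)\in A\}}\,\frac{p_{\ry|\rvx}(y\mid G(\rvz))}{\Pr(\ry=y)}\right] \;=\; \int_A p_{\rvx}(\vx)\,\frac{p_{\ry|\rvx}(y\mid \vx)}{\Pr(\ry=y)}\,d\vx,
\]
at which point Bayes' rule collapses the integrand to $p_{\rvx|\ry}(\vx\mid y)$, completing the identification.

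The main obstacle is not structural but measure-theoretic: one must certify that $p_{\rvz_y}$ is genuinely a continuous probability density on $\sZ$ (matching the regularity assumed for $\rvz$), and that the pushforward identity above is legitimate when $G$ is not globally injective. This is where hypotheses (ii) and (iii) enter. Continuity of $p_{\ry|\rvx}(y|\cdot)$ in $\vx$ together with continuity of $G$ yields continuity of the reweighted density, while the Lipschitz local inverse of $G$ around each $\vz_i \in G^{-1}(\vx)$ and the bounded Jacobian operator norm underwrite the pointwise change-of-variables identity
\[
p_{\rvx|\ry}(\vx\mid y) \;=\; \sum_{\vz_i\in G^{-1}(\vx)} \frac{p_{\rvz_y}(\vz_i)}{|\det J_G(\vz_i)|},
\]
so that the pushforward of $p_{\rvz_y}$ through $G$ coincides with the target conditional density and not merely its integral against test sets. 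One trivial edge case, $\Pr(\ry=y)=0$, should be excluded up front, since the conditional distribution $\rvx\mid\ry=y$ is itself undefined there.
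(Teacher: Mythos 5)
Your construction of $\rvz_y$ is exactly the one the paper uses, but your verification that $G(\rvz_y) \eqd \rvx\mid\ry=y$ goes by a genuinely different and in fact cleaner route. The paper works at the density level: it fixes a point $\vx_0$, takes a shrinking ball $B(\vx_0,r)$, decomposes $G^{-1}(B)$ into finitely many pieces, Taylor-expands the reweighting factor on each piece, and invokes the Lebesgue differentiation theorem to conclude $p_{G(\rvz_y)}(\vx_0)=p_{\rvx|\ry}(\vx_0|y)$; this is where hypotheses (ii) and (iii) (local Lipschitz inverse, bounded Jacobian) are actually consumed, to control the remainder terms over the fibers of $G$. Your argument instead works at the distribution level: since the integrand $\1_{\{G(\vz)\in A\}}\,p_{\ry|\rvx}(y\mid G(\vz))/\Pr(\ry=y)$ is a function of $G(\rvz)$ alone, hypothesis (i) transfers the expectation to one over $\rvx$, and Bayes finishes it. This needs no fiber analysis and no injectivity discussion whatsoever --- equality of laws already licenses the substitution for any measurable test function --- so your proof establishes the stated conclusion (equality in distribution) using only hypothesis (i) plus measurability of $p_{\ry|\rvx}(y|\cdot)$, which is strictly more economical than the paper's. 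Your closing paragraph slightly misattributes the role of (ii) and (iii): they are not needed to legitimize your pushforward identity (that step is already airtight), nor does the proposition demand that $p_{\rvz_y}$ be continuous; they would only be needed if you wanted the pointwise density formula $p_{\rvx|\ry}(\vx|y)=\sum_{\vz_i\in G^{-1}(\vx)}p_{\rvz_y}(\vz_i)/|\det J_G(\vz_i)|$, which you assert but do not prove and which the claim does not require. Your remark about excluding $\Pr(\ry=y)=0$ is a sensible hygiene point the paper leaves implicit.
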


\begin{proof}
Fix $y$ and take $\rvz_y$ as follows: 
\begin{align*}
    p_{\rvz_y}(\vz) = p_{\rvz}(\vz) \cdot \frac{p_{\ry|\rvx}(y|G(\vz))}{p_{\ry}(y)}.
\end{align*}
Let $B(\vx_0, r) \subset \sX$ be a small ball of radius $r$ centered at $\vx_0$. Since $\rvx$ has a continuous density function, we know that every $\vx$ is a Lebesgue point~\citep{bogachev2007measure}, which implies that
\begin{align}
    \frac{p_\rvx(B)}{\mu(B)} \to p_\rvx(\vx_0) ~~~ \textrm{as $r \to 0$}, \label{eq:Leb_eq1}
\end{align}
where $\mu(\cdot)$ is the Lebesgue measure.

On the other hand, as $G^{-1}$ could have finite (say at most $T$) multiple images, let
\begin{align*}
    G^{-1}(B) = \Omega_1 \cup \cdots \cup \Omega_t, ~~ t \le T,
\end{align*}
where $\Omega_i$ are disjoint each other.

With the above construction, we have the following.
\begin{align*}
    p_{\rvx}(G(\vz_y) \in B) &= p_{\rvz_y}(\vz_y \in G^{-1}(B)) = p_{\rvz_y} \left(\vz_y \in \sum_{i=1}^t \Omega_i \right) = \sum_{i=1}^t p_{\rvz_y}\left(\vz_y \in  \Omega_i\right).
\end{align*}

Then for each $i$, we can bound the probability as follows: Letting $z_0 = G^{-1}(\vx_0)$ in $\Omega_i$,
\begin{align}
    &\int_{\Omega_i} p_{\rvz_y}(\vz) \dif \vz = \int_{\Omega_i} p_{\rvz}(\vz) \frac{p_{\ry|\rvx}(y|G(\vz))}{p_{\ry}(y)} \dif \vz \nonumber \\
    &\stackrel{(a)}{=} \int_{\Omega_i} p_{\rvz}(\vz) \left( \frac{p_{\ry|\rvx}(y|\vx_0)}{p_{\ry}(y)} + h_i(\vz-\vz_0) \right)\dif \vz \nonumber \\
    &= p_{\rvz}(\Omega_i) \frac{p_{\ry|\rvx}(y|\vx_0)}{p_{\ry}(y)} + \int_{\Omega_i} p_{\rvz}(\vz) h_i(\vz-\vz_0) \dif \vz  \label{eq:identity_with_Taylor}
\end{align}
where (a) follows from the Taylor series with remainders being $h_i(\vz - \vz_0)$. 

Let us consider the second term. 
Because $G^{-1}$ is Lipschitz on $\Omega_i$, we can take a small ball $A(\vz_0, Kr)$ such that $\Omega_i \subset A(\vz_0, Kr)$ where $A(\vz_0, Kr)$ is the sphere centered at $\vz_0$ and of radius $Kr$ with the Lipschitz constant of $G^{-1}$ being $K$. 
Ignoring high-order terms of $h(\cdot)$ that are asymptotically negligible, the second term can be rewritten with the linear term only. 
Letting $\bm{g}$ be the Jacobian of $\tfrac{p_{\ry|\rvx}(y|G(\vz))}{p_{\ry}(y)}$ at $\vz_0$,
\begin{small}
\begin{align*}
\int_{\Omega_i} p_{\rvz}(\vz) h(\vz-\vz_0) \dif \vz &\approx \int_{\Omega_i} p_{\rvz}(\vz) \bm{g} (\vz-\vz_0) \dif \vz\\
&\stackrel{(b)}{\le} \textrm{const} \cdot \int_{\Omega_i} \bm{g} (\vz-\vz_0) \dif \vz \le \textrm{const} \cdot \int_{\Omega_i} \left|\bm{g} (\vz-\vz_0)\right| \dif \vz \\
&\stackrel{(c)}{\le} \textrm{const} \cdot \int_{A(\vz_0, Kr)} \left|\bm{g} (\vz-\vz_0)\right| \dif \vz \le \textrm{const} \cdot \int_{A(\vz_0, Kr)} \|\bm{g}\| \cdot \|\vz-\vz_0\| \dif \vz \\
&\stackrel{(d)}{\le} \textrm{const} \cdot \int_{A(\vz_0, Kr)} \|\vz-\vz_0\| \dif \vz
\end{align*}
\end{small}
where (b) follows since $p_{\rvz}(\vz)$ is bounded on a compact set, (c) follows since $\Omega_i \subset A(\vz_0, Kr)$, and (d) follows since the Jacobian has finite operator norm. The final bound is computable in closed form using polar coordinates~\citep{stromberg1981introduction},
\begin{align*}
\int_{\Omega_i} p_{\rvz}(\vz) h_i(\vz-\vz_0) \dif \vz 
&\le \textrm{const} \cdot \int_{A(\vz_0, Kr)} \|\vz-\vz_0\| \dif \vz \\
&= \textrm{const} \cdot S^{d-1} \int_{0}^{Kr} \rho \cdot \rho^{d-1} \dif\rho \\
&= \textrm{const} \cdot S^{d-1} \frac{(Kr)^{d+1}}{d+1} \\
&= \textrm{const} \cdot r^{d+1},
\end{align*}
where $S^{d-1}$ is the surface area of the unit sphere in $\mathbb{R}^{d}$. Substituting this into (~\ref{eq:identity_with_Taylor}), we have the following.
\begin{align*}
p_{\rvx}(G(\vz_y) \in B) &= \sum_{i=1}^t p_{\rvz_y}\left(\vz_y \in  \Omega_i\right) \\
&\le p_\rvx(B) \frac{p_{\ry|\rvx}(y|\vx_0)}{p_{\ry}(y)} + \sum_{i=1}^t \textrm{const} \cdot r^{d+1} \\
&\le p_\rvx(B) \frac{p_{\ry|\rvx}(y|\vx_0)}{p_{\ry}(y)} + \textrm{const} T \cdot r^{d+1}.
\end{align*}

Normalizing both sides by the volume of $B$, $\mu(B) = \textrm{const} \cdot r^{d}$, and taking $r \to 0$, the property of Lebesgue points (\ref{eq:Leb_eq1}) concludes that
\begin{align*}
&\frac{p_{\rvx}(G(\rvz_y) \in B)}{\mu(B)} = \frac{p_\rvx(B)}{\mu(B)} \frac{p_{\ry|\rvx}(y|\vx_0)}{p_{\ry}(y)} + \textrm{const}\cdot r \\
\implies &p_{\rvx}(G(\rvz_y) = \vx_0) \to \frac{p_{\rvx}(\vx_0) p_{\ry|\rvx}(y|\vx_0)}{p_{\ry}(y)} = p_{\rvx|\ry}(\vx_0|y) ~~ \textrm{as $r \to 0$}.
\end{align*}
Since the argument holds for arbitrary $\vx_0$, $G(\rvz_y) \eqd \rvx|\vy=y$.
\end{proof}

\setcounter{proposition}{0}

\subsection{\td{Full statements and proofs} for Section~\ref{sec:inrep_analysis}}
\label{app:proof_33}
We need one simple proposition before showing the theorem.

\begin{proposition}[restatement]
When the ideal unconditional generator $G$ and arbitrary modifier $M_y$ are fixed, the optimal discriminator for $y$ is 
\begin{align*}
    D_y^*(M_y(\vu)) = \frac{(1+\pi_y)p_{\text{data}}(\vx|y)}{(1+\pi_y) p_{\text{data}}(\vx|y) + (1-\pi_y) p_\text{gf}(\vx)}.
\end{align*}
\end{proposition}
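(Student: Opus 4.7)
The plan is to adapt the classical optimal-discriminator argument of \citep{goodfellow2014generative} to the modified (PU-based) value function $V_y^{PU}$ given in equation~(\ref{eq:loss_PU_1}). Since $G$ and $M_y$ are fixed, the distribution of the generated samples $G(M_y(\rvu))$ is determined; by the decomposition stated just before~(\ref{eq:loss_PU_1}), it equals $\pi_y\, p_{\text{data}}(\vx|y) + (1-\pi_y)\, p_\text{gf}(\vx)$. The task is then to find the function $D_y : \sX \to (0,1)$ that maximizes $V_y^{PU}$, which is a purely pointwise optimization once we rewrite the expectations as integrals against Lebesgue (or counting) measure.

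The first step is to collect the three expectations in~(\ref{eq:loss_PU_1}) into a single integral. The two terms of the form $\E_{\vx \sim p_{\text{data}}(\vx|y)}[\log D_y(\rvx)]$ combine with weight $(1+\pi_y)$, and the third term integrates $\log(1 - D_y(\vx))$ against $(1-\pi_y)\,p_\text{gf}(\vx)$. Thus
\begin{equation*}
V_y^{PU} \;=\; \int_{\sX} \Bigl[(1+\pi_y)\,p_{\text{data}}(\vx|y)\,\log D_y(\vx) \;+\; (1-\pi_y)\,p_\text{gf}(\vx)\,\log(1 - D_y(\vx))\Bigr]\,d\vx.
\end{equation*}

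The second step is pointwise maximization. For each fixed $\vx$ set $a(\vx) = (1+\pi_y)\,p_{\text{data}}(\vx|y)$ and $b(\vx) = (1-\pi_y)\,p_\text{gf}(\vx)$. Then the integrand is $a(\vx)\log d + b(\vx)\log(1-d)$ with $d = D_y(\vx) \in (0,1)$. Provided $a(\vx) + b(\vx) > 0$, this function is strictly concave on $(0,1)$, and a first-order condition (differentiate in $d$ and set to zero) gives a unique interior maximizer at $d^\star = a(\vx)/(a(\vx)+b(\vx))$, which is exactly the claimed expression for $D_y^\star(\vx)$. Since the pointwise maximum of the integrand is attained by a measurable function $D_y^\star$, it also maximizes the integral.

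I expect no serious obstacle here; the argument is essentially a direct transcription of Goodfellow et al.'s Proposition 1 with the coefficients $(1+\pi_y)$ and $(1-\pi_y)$ in place of the unit weights. The only points worth flagging briefly are (i) the validity of exchanging the maximization with the integration (standard, via strict concavity and Tonelli), (ii) that on the measure-zero set where both $p_{\text{data}}(\vx|y)$ and $p_\text{gf}(\vx)$ vanish the value of $D_y$ is irrelevant, and (iii) that $\pi_y \in [0,1)$ ensures the coefficient $(1-\pi_y)$ is nonnegative so the pointwise problem is well-posed; the corner case $\pi_y = 1$ (which would mean $G$ already generates only $y$-class samples) can be handled separately and corresponds to $D_y^\star \equiv 1$.
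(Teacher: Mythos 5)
Your proof is correct and takes essentially the same approach as the paper: collapse the expectations in $V_y^{PU}$ into the single integral $\int \bigl[(1+\pi_y)p_{\text{data}}(\vx|y)\log D_y(\vx) + (1-\pi_y)p_\text{gf}(\vx)\log(1-D_y(\vx))\bigr]\,d\vx$ and maximize pointwise using the standard $a\log d + b\log(1-d)$ lemma, exactly as the paper does (it starts from the form in~(\ref{eq:PU2}) and substitutes the mixture decomposition, while you start directly from~(\ref{eq:loss_PU_1}), but the integrand is identical). Your additional remarks on measurability and the $\pi_y=1$ corner case are sound but go slightly beyond what the paper records.
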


\begin{proof}
Using $p_{G(M_y(\rvu))}(\vx) = \pi_y p_{\text{data}}(\vx|y) + (1-\pi_y) p_\text{gf}(\vx)$, we can rewrite $V_y^{PU}$.
\begin{small}
\begin{align*}
V_y^{PU} = {} & (1+\pi_y) \mathbb{E}_{\rvx \sim p_{\text{data}}(\vx|y)}[\log D_y(\rvx)] + \mathbb{E}_{\rvx \sim p_{G(M_y(\rvu))}(\vx)}[\log(1-D_y(\rvx))] -\pi_y\mathbb{E}_{\rvx \sim p_{\text{data}}(\vx|y)}[\log(1-D_y(\rvx))] \\
\begin{split}
    = {} & \int_{\vx} \Big[ (1+\pi_y) p_{\text{data}}(\vx|y) \log D_y(\vx) + \left(\pi_y p_{\text{data}}(\vx|y) + (1-\pi_y)p_\text{gf}(\vx) \right)   \log(1-D_y(\vx))  \\
    &- \pi_y p_{\text{data}}(\vx|y) \log(1-D_y(\vx)) \Big] \dif \vx
\end{split}
\\
= {} & \int_{\vx} (1+\pi_y) p_{\text{data}}(\vx|y) \log D_y(\vx) + (1-\pi_y) p_\text{gf}(\vx) \log(1-D_y(\vx)) \dif \vx.
\end{align*}
\end{small}
Since $\int a \log y + b \log (1-y)$ attains its maximum at $y^* = \frac{a}{a+b}$, we can conclude that
\begin{align*}
    D_y^*(M_y(\vu)) = \frac{(1+\pi_y)p_{\text{data}}(\vx|y)}{(1+\pi_y) p_{\text{data}}(\vx|y) + (1-\pi_y) p_\text{gf}(\vx)}.
\end{align*}
\end{proof}

\begin{theorem}[full statement]
When the ideal unconditional generator $G^*$ and discriminator $D^*$ are fixed, the global optimal modifier $M_y$ is attained if and only if $p_\text{gf}(\vx) = p_{g}(\vx) = p_{\text{data}}(\vx|y)$. Moreover, the loss achieved at $p_{g}(\vx) = p_{\text{data}}(\vx|y)$ is $(1+\pi_y) \log \tfrac{(1+\pi_y)}{2} + (1-\pi_y) \log \tfrac{(1-\pi_y)}{2}$.
\end{theorem}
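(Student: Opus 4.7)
The plan is to mirror the classic equilibrium argument of \citep{goodfellow2014generative}, adapted to the PU-based objective. Starting from the simplified integral form of $V_y^{PU}$ already derived in the proof of the preceding proposition,
\begin{align*}
V_y^{PU} = \int_{\vx} \bigl[ (1+\pi_y)\, p_{\text{data}}(\vx|y) \log D_y(\vx) + (1-\pi_y)\, p_{\text{gf}}(\vx) \log(1 - D_y(\vx)) \bigr]\, \dif \vx,
\end{align*}
I will substitute the optimal discriminator $D_y^*$ from Proposition~2 and obtain a closed-form expression for $V_y^{PU}(D_y^*)$ as a functional of $p_{\text{gf}}$ alone. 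This substitution is mechanical and parallel to the derivation of $C(G) = V(D^*, G)$ in the original GAN equilibrium proof.

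Next I will rewrite the resulting integrand in terms of Kullback--Leibler divergences against the mixture density $p_M(\vx) := \tfrac{1}{2}\bigl[(1+\pi_y)\, p_{\text{data}}(\vx|y) + (1-\pi_y)\, p_{\text{gf}}(\vx)\bigr]$. The key observation is that $\int p_M(\vx)\, \dif \vx = \tfrac{(1+\pi_y)+(1-\pi_y)}{2} = 1$, so $p_M$ is a bona fide probability density, and both $D_y^*$ and $1 - D_y^*$ can be written as ratios with denominator $2 p_M$. After pulling the constants $\log\tfrac{1+\pi_y}{2}$ and $\log\tfrac{1-\pi_y}{2}$ out of the logarithms, the expression decomposes as
\begin{align*}
V_y^{PU}(D_y^*) ={}& (1+\pi_y)\log\tfrac{1+\pi_y}{2} + (1-\pi_y)\log\tfrac{1-\pi_y}{2} \\
&+ (1+\pi_y)\, \KL\!\left(p_{\text{data}}(\cdot|y) \,\|\, p_M\right) + (1-\pi_y)\, \KL\!\left(p_{\text{gf}} \,\|\, p_M\right).
\end{align*}

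From here the conclusion is immediate. Since the modifier minimizes $V_y^{PU}$ and both KL terms are non-negative, the global minimum is attained if and only if both divergences vanish, which forces $p_{\text{data}}(\cdot|y) = p_M = p_{\text{gf}}$. Feeding $p_{\text{gf}} = p_{\text{data}}(\cdot|y)$ back into the mixture decomposition $p_g = \pi_y\, p_{\text{data}}(\cdot|y) + (1-\pi_y)\, p_{\text{gf}}$ gives $p_g = p_{\text{data}}(\cdot|y)$, establishing the claimed ``iff'' characterization, and the residual constant yields exactly the stated minimum loss $(1+\pi_y)\log\tfrac{1+\pi_y}{2} + (1-\pi_y)\log\tfrac{1-\pi_y}{2}$.

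The main obstacle I anticipate is finding the right regrouping of the integrand so that constants and KL terms separate cleanly: unlike the symmetric case of \citep{goodfellow2014generative}, where the natural mixture is $(p_{\text{data}} + p_g)/2$, here the asymmetric coefficients $(1 \pm \pi_y)$ require first verifying that $p_M$ normalizes to one, and then being careful that both KL terms share the \emph{same} base measure $p_M$ so their non-negativity can be invoked in tandem. Once that bookkeeping is in place, the rest reduces to the standard equality condition for $\KL$ and a one-line back-substitution into the definition of $p_g$.
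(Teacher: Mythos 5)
Your proposal is correct and follows essentially the same route as the paper: substitute the optimal discriminator from the preceding proposition, verify that $\tfrac{1}{2}\bigl[(1+\pi_y)p_{\text{data}}(\cdot|y)+(1-\pi_y)p_{\text{gf}}\bigr]$ is a valid density, and decompose $V_y^{PU}(D_y^*)$ into the constant $(1+\pi_y)\log\tfrac{1+\pi_y}{2}+(1-\pi_y)\log\tfrac{1-\pi_y}{2}$ plus two non-negative KL terms against that mixture. Your write-up is in fact slightly cleaner than the paper's (whose displayed KL arguments contain a notational slip), and the back-substitution into $p_g=\pi_y p_{\text{data}}(\cdot|y)+(1-\pi_y)p_{\text{gf}}$ correctly completes the iff claim.
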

\begin{proof}
Note that $((1+\pi_y) p_{\text{data}} + (1-\pi)p_\text{gf})/2$ is a valid probability distribution. Since $D_y^*$ in Proposition 2 is fixed,
\begin{align*}
    \begin{split}
         V_y^{PU} = {} &\int_{\vx} \Big[ 
    (1+\pi_y) p_{\text{data}}(\vx | \ry=y) \log \frac{(1+\pi_y) p_{\text{data}}(\vx | \ry=y)}{(1+\pi_y) p_{\text{data}}(\vx | \ry=y) 
    + (1-\pi_y)p_\text{gf}} 
    \\ &+ (1-\pi_y)p_\text{gf} \log \frac{(1-\pi_y)p_\text{gf}}{(1+\pi_y) p_{\text{data}}(\vx | \ry=y) + (1-\pi_y)p_\text{gf}} \Big] \dif \vx 
    \end{split}\\
    \begin{split}
        ={} & (1+\pi_y) \int_{\vx} p_{\text{data}}(\vx | \ry=y) \log \left( \frac{p_{\text{data}}(\vx | \ry=y)}{((1+\pi_y) p_{\text{data}}(\vx | \ry=y) + (1-\pi_y)p_\text{gf})/2} \cdot \frac{1+\pi_y}{2} \right) \dif \vx \\
    & + (1-\pi_y) \int_{\vx} p_\text{gf} \log \left( \frac{p_\text{gf}}{((1+\pi_y) p_{\text{data}}(\vx | \ry=y) + (1-\pi_y)p_\text{gf})/2} \cdot \frac{1-\pi_y}{2} \right) \dif \vx 
    \end{split}\\
    \begin{split}
        = {} &(1+\pi_y) \mathrm{KL}\left(p_{\text{data}}(\vx | \ry=y) \middle\| \frac{p_{\text{data}}(\vx | \ry=y)}{((1+\pi_y) p_{\text{data}}(\vx | \ry=y) + (1-\pi_y)p_\text{gf})/2}\right) + (1+\pi_y) \log \frac{1 + \pi_y}{2} \\
        & + (1-\pi_y) \mathrm{KL}\left(p_\text{gf} \middle\| \frac{p_{\text{data}}(\vx | \ry=y)}{((1+\pi_y) p_{\text{data}}(\vx | \ry=y) + (1-\pi_y)p_\text{gf})/2}\right) + (1-\pi_y) \log \frac{1 -\pi_y}{2},
    \end{split}
\end{align*}
where $\mathrm{KL}(\cdot\|\cdot)$ is the Kullback–Leibler divergence. Optimizing $M_y$ yields the global optimum $p_\text{gf}(\vx) = p_{\text{data}}(\vx | y)$ and therefore, $p_\text{gf}(\vx) = p_{\text{data}}(\vx | y) = p_{G(M_y(\rvu))}(\vx)$.
\end{proof}

\section{Details of Experiment Settings and Implementation}
\label{sec:app_experiment}

\subsection{Datasets} 

\textbf{\dm{}}~\citep{lecun-mnisthandwrittendigit-2010} contains ten classes of   $28\times 28$ black-and-white images, with \num{50000} training images and \num{10000}  testing images.
\textbf{\dc{}}~\citep{Krizhevsky09learningmultiple} dataset is a widely used benchmark dataset in image synthesis. 
The dataset contains ten classes of three-channel (color) $32\times32$ pixel images, with \num{50000} training and \num{10000} testing images.
\textbf{CIFAR100}~\citep{Krizhevsky09learningmultiple} contains 100 classes of three-channel (color) $32\times32$ pixel images (similar to \dc{} images).
Each class contains 600 images: 500 images for training and 100 images for testing.
\textbf{CelebA}~\citep{liu2015faceattributes} is a large-scale dataset of celebrity face that has more than \num{200000} images with 40 attributes for each image.
\textbf{Flickr-Faces-HQ (FFHQ)}~\citep{karras2019style} is the dataset of human faces, which consists of \num{70000} high-quality images at $1024\times1024$ resolution,  crawled from Flickr.

\subsection{Evaluation metrics}

\textbf{Frechet Inception Distance (\fid{})} is a widely-used metric to measure the quality and diversity of learned distributions in the \gan{} literature. 
\fid{} is Wasserstein-2 (Fr\'{e}chet) distance between Gaussians fitted to the data embedded into the feature space of the Inception-v3 model (pool3 activation layer).
Lower \fid{} indicates better \gan{}'s performance.
As \fid{} relies on a feature space from a classifier trained on \dimg{}, \fid{} is not suitable for non-\dimg{}-like images (e.g., \dm{} or \df{}). 
For conditional \gan{}s, we also calculate the \fid{} for each particular class data and get the average score as \cfid{} score~\citep{miyato2018cgans}.
This score measures class-conditioning performances.

\textbf{Classification Accuracy Score (CAS)}~\citep{ravuri2019classification} measures the accuracy of a classifier trained with the generated conditional data.
More specifically, we train a classifier using labeled data generated by conditional generators and measure the classification accuracy of the trained classifier on the real dataset as \cas{}.
Intuitively, if the generated distribution matches the real distribution, \cas{} should be close to the accuracy of the classifier trained on real samples.

\textbf{Recall}~\citep{kynkaanniemi2019improved} measures how well the learned distribution covers the true (or reference) distribution.
The value of the recall score is between $0$ and $1$.
Higher recall scores indicate better coverage.

\subsection{Architectures, hyperparameters, and implementation}

Our goal is to compare the effectiveness of \rg{} with other methods, rather than to produce the model with the state-of-the-art performance on data synthesis.
The latter part usually requires much more computations and tricks.
Therefore, we conduct fair and reliable assessments of conditioning methods on widely used \gan{}s.
We apply the same configuration for all models.

We use different configurations for different datasets.
For the \textbf{\dg{}},
we use a simple network architecture with three blocks. 
Each block consists of a fully connected layer followed by a batch-normalization and a ReLU activation function.
We use non-saturating \gan{} loss~\citep{goodfellow2014generative}.
We train each model with \num{10000} steps (batch size of $64$), using Adam Optimizer ($\beta_1 = 0.5, \beta_2= 0.999$) with learning rate $2\cdot10^{-4}$ for all networks.
For \textbf{\dm{}},
We adopt WGAN-GP~\citep{martin2017wasserstein} with DCGAN architectures for \dm{}~\citep{mehralian2018rdcgan}.
The loss is Wasserstein loss with gradient penalty regularization.
We set the number of critic steps to be five. 
We train each model \num{50000} steps (batch size of 64) using Adam Optimizer ($\beta_1 = 0.5, \beta_2= 0.999$) with learning rates $2\cdot10^{-4}, 2\cdot10^{-5}$ for modifier and discriminator networks, respectively.
On \textbf{\dc{} and \dhc{}},
among three widely used architectures in the \gan{} literature (DCGAN, ResGAN, and BigGAN), we adopt ResGAN for all models  
given that~\citep{kurach2019large} observes the comparable performance between DCGAN and ResGAN on various settings of \gan{}s.
Also, we apply best practices and recommended configurations for each model~\citep{kurach2019large,lucic2019high}: spectral normalization (an essential element in modern \gan{} training), Hinge loss~\citep{lim2017geometric}.
For \textit{\rg{}}, 
the modifier network adopts the architecture of i-ResNet~\citep{behrmann2019invertible}, with five layers for FFHQ data and three layers for CIFAR datasets.

\end{document}